\documentclass{aidata}

\usepackage[utf8]{inputenc}
\usepackage{amsmath,amssymb,amsthm,mathtools}
\usepackage{dsfont}
\usepackage{array,longtable,makecell}
\usepackage{tikz}
\usetikzlibrary{arrows.meta,positioning,calc,fit,shapes.misc,backgrounds}
\usepackage{algorithm}
\usepackage{algpseudocode}
\usepackage{enumitem}
\usepackage{flafter}
\usepackage{textcomp}
\usepackage{listings}
\usepackage{titletoc}
\makeatletter
\def\ttl@gobblecontents#1#2#3#4{\ignorespaces}
\makeatother
\ifdefined\pdfsuppressptexinfo\pdfsuppressptexinfo=-1\fi
\usepackage{url}

\DeclareTextFontCommand{\textbfserif}{\bfseries}
\AtBeginEnvironment{tabular}{\let\textbf\textbfserif}
\AtBeginEnvironment{longtable}{\let\textbf\textbfserif}
\titleformat*{\paragraph}{\bffont\bfseries\sffamily}

\newcommand{\method}{FAULT}
\newcommand{\methodlong}{Self-Diagnosis-guided Terminal Credit Redistribution}

\newcommand{\E}{\mathbb{E}}
\newcommand{\ind}[1]{\mathds{1}\!\left[#1\right]}
\DeclareMathOperator*{\argmin}{arg\,min}
\DeclareMathOperator*{\argmax}{arg\,max}
\DeclareMathOperator{\clip}{clip}
\DeclareMathOperator{\sd}{sd}
\DeclareMathOperator{\sigmoid}{sigmoid}

\DeclareMathOperator{\logdet}{log\,det}
\newcommand{\bbeta}{\boldsymbol{\beta}}
\newcommand{\bx}{\mathbf{x}}
\newcommand{\bw}{\mathbf{w}}
\newcommand{\bfeat}{\mathbf{f}}
\newcommand{\bzero}{\mathbf{0}}
\newcommand{\calD}{\mathcal{D}}
\newcommand{\calC}{\mathcal{C}}

\newcommand{\calM}{\mathcal{M}}
\newcommand{\calY}{\mathcal{Y}}
\newcommand{\calA}{\mathcal{A}}
\newcommand{\calQ}{\mathcal{Q}}
\newcommand{\calV}{\mathcal{V}}
\newcommand{\calU}{\mathcal{U}}
\newcommand{\bdiag}{\mathbf{d}}
\newcommand{\calH}{\mathcal{H}}

\newcommand{\calI}{\mathcal{I}}

\newcommand{\Pq}{\Pr\nolimits_{q}}
\newcommand{\calL}{\mathcal{L}}

\newcommand{\calW}{\mathcal{W}}
\newcommand{\calK}{\mathcal{K}}

\newcommand{\Rout}{R^{\mathrm{out}}}
\newcommand{\dR}{\Delta R}
\newcommand{\Rmin}{R_{\min}}
\newcommand{\Rmax}{R_{\max}}

\newcommand{\Mval}{\calM_{q,i}}

\newcommand{\bref}{\bbeta_{\mathrm{ref}}}
\newcommand{\bhat}{\widehat{\bbeta}}
\newcommand{\bhatv}[1]{\widehat{\bbeta}^{(#1)}}
\newcommand{\bzeroinit}{\bbeta^{(0)}}
\newcommand{\bv}{\bbeta^{(v)}}
\newcommand{\bvm}{\bbeta^{(v-1)}}
\newcommand{\Tfit}{T^{\mathrm{fit}}}

\newcommand{\Sraw}{S^{\mathrm{raw}}}
\newcommand{\Tbar}{\overline{T}^{(0)}}

\newcommand{\Aep}{A^{\mathrm{ep}}}
\newcommand{\Astep}{A^{\mathrm{step}}}
\newcommand{\Zenv}{Z^{\mathrm{env}}}
\newcommand{\Zt}{\widetilde{Z}}

\newcommand{\muep}{\mu^{\mathrm{ep}}}
\newcommand{\mustep}{\mu^{\mathrm{step}}}

\newcommand{\piold}{\pi_{\mathrm{old}}}
\newcommand{\piref}{\pi_{\mathrm{ref}}}
\newcommand{\DLV}{D^{\mathrm{LV}}}

\newcommand{\paragraphhd}[1]{\vspace{2pt}\noindent\textbf{#1}}

\theoremstyle{plain}
\newtheorem{proposition}{Proposition}

\theoremstyle{remark}

\definecolor{PromptFrame}{HTML}{34495E}
\definecolor{PromptBg}{HTML}{F6F7F9}
\definecolor{PromptKey}{HTML}{C0392B}
\definecolor{PromptId}{HTML}{1E8449}

\providecommand{\pk}[1]{{\ttfamily\bfseries\color{PromptKey}#1}}
\providecommand{\pid}[1]{{\ttfamily\bfseries\color{PromptId}#1}}

\lstdefinestyle{promptstyle}{%
  basicstyle=\ttfamily\scriptsize,
  breaklines=true, breakindent=0pt, breakautoindent=false,
  columns=fullflexible, keepspaces=true, upquote=true,
  escapeinside={(*}{*)},
  frame=single, framerule=0.6pt, framesep=4pt,
  rulecolor=\color{PromptFrame}, backgroundcolor=\color{PromptBg},
  xleftmargin=1pt, xrightmargin=1pt, aboveskip=0pt, belowskip=2pt,
}
\newcommand{\prompttitle}[1]{%
  \par\vspace{4pt}\noindent
  \colorbox{PromptFrame}{\parbox{\dimexpr\linewidth-2\fboxsep\relax}{%
    \color{white}\bfseries\footnotesize\strut #1}}%
  \par\nointerlineskip}

\title{My \method{}: Self-Diagnosis as Credit Assignment in\\Self-Evolving Agentic Reinforcement Learning}

\author[1,2,*]{Yihua Zhu}
\author[3,\dagger]{Qianying Liu}
\author[1,\dagger]{Weixu Qiao}
\author[1]{Xuan Ren}
\author[1,4]{Weiwei Xu}
\author[1]{Wenbo Li}
\author[1]{Wei Wang}
\author[1,5]{Ruijia Chen}
\author[1,6]{Xinmiao Luan}
\author[1,7]{Yin Luo}
\author[1,8]{Hao Huang}
\author[1,9]{Xiang Zheng}
\author[2]{Hidetoshi Shimodaira}

\affiliation[1]{Alibaba Group}
\affiliation[2]{Kyoto University}
\affiliation[3]{NII LLMC}
\affiliation[4]{Peking University}
\affiliation[5]{University of California, Los Angeles}
\affiliation[6]{Arizona State University}
\affiliation[7]{The Chinese University of Hong Kong, Shenzhen}
\affiliation[8]{Tsinghua University}
\affiliation[9]{University of the Chinese Academy of Sciences}

\contribution[*]{Work done during an internship at Alibaba AI Data}
\contribution[\dagger]{Corresponding authors}

\abstract{
Agentic reinforcement learning (RL) has emerged as a powerful approach for training large language model agents on multi-step tasks, yet reliance on terminal outcome rewards creates two credit-assignment problems, particularly in long-horizon tasks.
First, same-outcome rollout groups provide no learning signal from terminal rewards. Second, terminal rewards provide only trajectory-wide feedback, making it difficult to identify which decisions caused a failure.
Recent work supplements terminal rewards with finer-grained information from trajectory analysis, such as natural-language reflections on intermediate decisions and errors. 
However, natural-language diagnoses are difficult to use directly for credit assignment: their error claims may be unreliable, and they do not quantify how much each error should affect learning.
We propose Sel{\bfseries f}-Di{\bfseries a}gnosis-g{\bfseries u}ided Termina{\bfseries l} Credi{\bfseries t} Redistribution (\method{}), which turns diagnosed errors into explicit step-level credit anchored by terminal outcomes.
\method{} checks diagnostic evidence and learns relative error costs from task outcomes.
During training, the policy and self-diagnoser co-evolve, while error costs are updated online from recent outcomes.
On ALFWorld, \method{} recovers learning signals from same-outcome groups, reaching 95\% signal coverage versus 41\% for GRPO and 72\% for GiGPO, while better localizing credit to specific error steps.
Across two model scales, \method{} delivers strong improvements on the long-horizon ALFWorld and WebShop tasks while remaining competitive on short-horizon Search-based QA.
}

\date{\today}
\correspondence{Qianying Liu at \email{ying@nii.ac.jp}, Weixu Qiao at \email{qiaoweixu.qwx@alibaba-inc.com}}
\checkdata[Code]{\url{https://github.com/SKYLENAGE-AI/FAULT_Agentic_RL}}

\begin{document}

\maketitle

\begin{figure}[t]
\centering
\includegraphics[width=\textwidth]{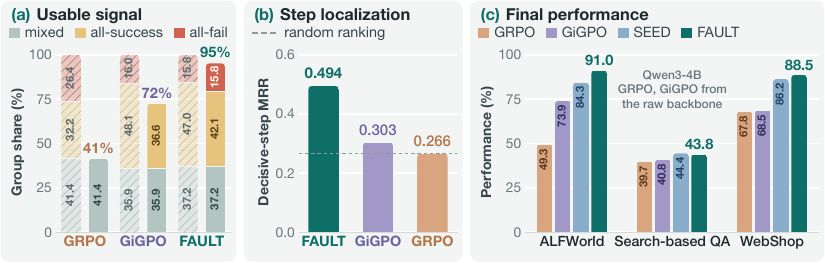}
\caption{\textbf{Signal coverage, step localization and final performance.}
\textbf{(a)~Signal coverage.} On ALFWorld with shared diagnosis-SFT initialization, hatched bars show the group mix and solid bars the groups with usable signal. \method{} uses $95\%$ of groups, including every all-fail group and most all-success groups. GRPO uses only mixed groups ($41\%$); GiGPO also uses some all-success groups but no all-fail groups ($72\%$). Thus, \method{} can learn from differences in intermediate behavior even when terminal rewards are identical, making both successful and failed rollouts useful beyond their final outcomes.
\textbf{(b)~Localization.} Mean reciprocal rank of the decisive error step a blind LLM judge marks in each failed trajectory, with steps ranked by each method's penalty. \method{} ($0.494$) nearly doubles random ($0.266$) while GiGPO ($0.303$, same-configuration rerun) stays close to it, so \method{}'s step-level credit is far more accurate. The gains highlight the value of diagnostic evidence in identifying which decisions within a failed trajectory need correction and directing step-level feedback to those decisions.
\textbf{(c)~Outcome.} Qwen3-4B results from Table~\ref{tab:main}. \method{} leads in ALFWorld success rate and WebShop task score on these longer-horizon tasks, remains competitive with SEED on shorter-horizon Search-based QA, and outperforms GRPO and GiGPO initialized from the raw backbone on all three.}
\label{fig:teaser}
\vspace{-4pt}
\end{figure}

\section{Introduction}
\label{sec:intro}

Large language model agents solve increasingly complex tasks through sequences of decisions: they reason, interact with external environments, observe feedback, and adapt their subsequent actions~\citep{yao2023react}. Agentic reinforcement learning improves such behavior by optimizing these interaction trajectories, typically using rewards based on the final task outcome~\citep{zhang2025landscape}. Terminal outcome supervision is widely used~\citep{jin2025searchr1, qian2025toolrl,feng2025retool, wei2025webagentr1, feng2025gigpo} because final outcomes are often easier to verify than intermediate decisions, but it creates a fundamental credit-assignment problem: a single reward must supervise many intermediate decisions that may have contributed very differently to the final result. As trajectories grow longer, the reward becomes increasingly distant from the decisions it is meant to improve.

This limitation is especially pronounced in group-relative RL methods such as GRPO~\citep{shao2024deepseekmath,guo2025deepseekr1}, which normalize terminal rewards across multiple rollouts of the same task, and broadcast each rollout-level advantage to all of its steps. This produces two distinct credit-assignment limitations. First, when all rollouts in a group receive the same terminal outcome, their normalized advantages vanish, so the group provides no outcome-driven learning signal, even if the trajectories differ substantially in their intermediate decisions. In our preliminary ALFWorld~\citep{shridhar2021alfworld} runs, such same-outcome groups account for 59–64\% of training groups. Second, even when a group contains both successes and failures, the resulting trajectory-level advantage does not identify which intermediate decisions were responsible for the outcome: all steps within a rollout receive the same signal. As trajectories become longer, this makes it increasingly difficult to distinguish decisive errors from otherwise reasonable actions.

A natural way to address these limitations is to introduce process-level signals that provide finer-grained feedback than terminal rewards~\citep{lightman2024verify,wang2024mathshepherd,setlur2024rewarding}. One line of work constructs numerical signals for intermediate steps, using learned reward models~\citep{xi2025agentprm}, Monte Carlo estimates~\citep{choudhury2025agentprm}, hindsight-based scores~\citep{ma2026repro,yao2026mise}, or comparisons across related states~\citep{feng2025gigpo}. However, many such signals are still derived from terminal outcomes and therefore lose discriminative power when outcomes are identical. A complementary line of work analyzes trajectories in natural language, using reflection~\citep{shinn2023reflexion,zhao2024expel,zheng2026grsd,wu2026seed}, hindsight~\citep{bensal2025reflect}, or error diagnosis to identify what went wrong and where~\citep{yuan2025agentr,zhang2025critiquegrpo}. These analyses provide richer information about intermediate decisions, but remain descriptive: they are not directly expressed as quantitative step-level credit, and their error claims require heavy verification before being used for learning.

\begin{figure}[!t]
\centering
\includegraphics[width=\textwidth]{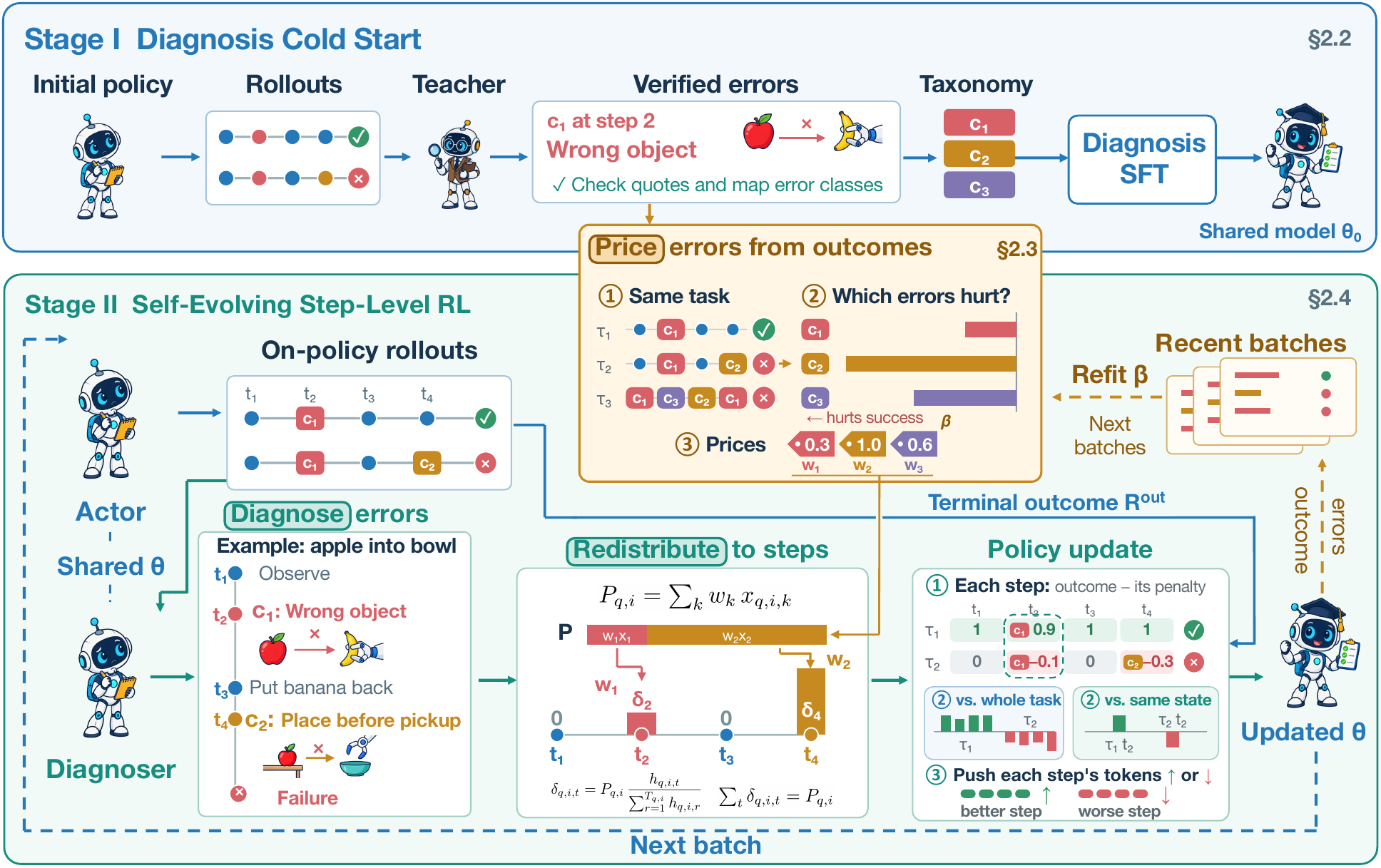}
\caption{\textbf{Overview of \method{}.}
\emph{Stage I} initializes a shared actor and diagnoser by SFT on teacher diagnoses and learns error prices from terminal outcomes.
In \emph{Stage II}, we \textbf{diagnose} errors in on-policy rollouts, \textbf{price} them from terminal outcomes, and \textbf{redistribute} a bounded penalty budget across diagnosed steps for policy updates.
The actor, diagnoser and prices co-evolve through shared model updates and online price refits.}
\label{fig:overview}
\end{figure}

Turning process information into effective credit assignment requires solving two key challenges. First, the process signal must be trustworthy: an error diagnosis can misidentify either what went wrong or where it occurred, causing credit to be assigned to the wrong step. This is a practical concern in agent traces, where even strong models can struggle to jointly identify and localize errors. Second, identifying an error does not determine how much that error should matter for learning. Different error types may have very different effects on task success, and multiple errors within the same trajectory should not deserve equal weight. Thus, useful step-level credit requires both verifiable localization of errors and outcome-grounded calibration of their relative importance.

We propose Sel{\bfseries f}-Di{\bfseries a}gnosis-g{\bfseries u}ided Termina{\bfseries l} Credi{\bfseries t} Redistribution (\method{}), which turns self-diagnosed errors into explicit step-level credit, while retaining terminal outcomes as the source of supervision. The key idea is to separate error localization from error valuation: self-diagnosis identifies what went wrong and where, while terminal outcomes determine how costly different error types are. \method{} first performs \textbf{diagnosing}, which converts trajectory analyses into structured error records and verifies each diagnosis against evidence from the cited step. It then performs \textbf{pricing}, which learns relative error costs from observed task outcomes, and uses these costs to finally \textbf{redistribute} a bounded penalty budget across the diagnosed steps. Because the learned error costs can be applied even when all rollouts receive the same terminal outcome, \method{} can distinguish trajectories that would otherwise receive identical credit, and can concentrate learning signal on the steps most in need of correction.

We summarize how \method{} addresses the two credit-assignment problems and how these improvements translate into downstream performance in Figure \ref{fig:teaser}. First, \method{} substantially increases training signal coverage: on ALFWorld, $95\%$ of rollout groups provide non-negligible credit contrast, compared with $41\%$ for GRPO and $72\%$ for GiGPO. Second, \method{} better localizes credit to the decisions responsible for failure, achieving a decisive-step MRR of $0.494$, compared with $0.303$ for GiGPO and $0.266$ for random ranking. Finally, across two model scales, \method{} achieves the strongest improvements on the longer-horizon ALFWorld~\citep{shridhar2021alfworld} and WebShop~\citep{yao2022webshop} tasks while remaining competitive on the much shorter Search-based QA tasks. These results demonstrate the effectiveness of our finer-grained credit assignment becoming more important as trajectories grow longer.  

\section{Method}
\label{sec:method}

We present \methodlong{} (\method{}), a framework for explicit step-level credit assignment in agentic RL.
A cold start initializes a shared actor and diagnoser through diagnosis SFT and learns initial error prices; RL then locates errors through structured self-diagnosis, fits their prices from terminal outcomes, and redistributes a bounded penalty across the diagnosed steps.

\subsection{Problem Formulation}
\label{sec:problem}
An LLM policy $\pi_\theta$ solves task $q\sim\calD$ through repeated interaction with an external world.
At step $t$, it generates response $a_{q,i,t}\sim\pi_\theta(\cdot\mid o_{q,i,t})$ from context $o_{q,i,t}$ containing the task, interaction history and feedback so far; the world executes the response and returns feedback, appended to the context.
Each response-feedback pair forms an \emph{active interaction step}.
Task $q$'s $i$-th rollout is $\tau_{q,i}=(o_{q,i,1},a_{q,i,1},\ldots,o_{q,i,T_{q,i}},a_{q,i,T_{q,i}})$, with $T_{q,i}$ such steps.
Following group-based RL, we sample $G$ rollouts per task from frozen behaviour policy $\piold$.

Each trajectory receives terminal reward $\Rout_{q,i}$ only at termination, giving the outcome objective
\begin{equation}
J_{\mathrm{out}}(\theta)=\E_{q\sim\calD}\,\E_{\tau_{q,i}\sim\pi_\theta}\big[\Rout_{q,i}\big].
\label{eq:objective}
\end{equation}
This reward is the only judgement guaranteed correct; we preserve it and redistribute only its credit across steps.

\subsection{Cold Start: Verifiable Diagnosis and Diagnosis SFT}
\label{sec:m-diag}
Before RL, a one-time cold start fixes the error taxonomy, teaches the policy to produce verifiable diagnoses of its completed trajectories, and estimates initial error prices.
Pricing errors from terminal outcomes (\S\ref{sec:m-price}) and penalizing diagnosed steps (\S\ref{sec:m-rl}) require diagnoses that are countable across trajectories and checkable against each one.
Free-form reflections lack this structure, so we map error claims to fixed classes and check evidence from their cited steps.

\paragraphhd{Teacher diagnosis.}
Teacher model GPT-5.6 Sol~\citep{openai2026gpt56} reads initial-policy trajectories and outcomes, recording each error's label, step and an exact quote from that step.

\paragraphhd{Verification.}
We check each report's step and quote against the trajectory.
Accepted labels form a fixed taxonomy, from which we select $K$ classes with enough examples for pricing: $\calC=\{c_1,\ldots,c_K\}$ (Appendix~\ref{app:coldstart}).
Teacher and later policy diagnoses of $\tau_{q,i}$ are represented as entries $(k,t,e)$: error class, step and quote.
Verified entries of priced class $k$ give count $n_{q,i,k}$ and per-step rate $x_{q,i,k}=n_{q,i,k}/T_{q,i}$; the rates form $\bx_{q,i}$ (Appendix~\ref{app:diag}).

\paragraphhd{Diagnosis SFT.}
For each cold-start example $(q,i)\in\calD_{\mathrm{sft}}$ with an accepted teacher diagnosis, we fine-tune $\pi_\theta$ to reproduce its verified text $\bdiag_{q,i}$ given trajectory $\tau_{q,i}$ and success label $y_{q,i}\in\{0,1\}$:
\begin{equation}
\calL_{\mathrm{SFT}}(\theta)=-\,\mathbb{E}_{(q,i)\in\calD_{\mathrm{sft}}}\sum_{\ell=1}^{|\bdiag_{q,i}|}\log\pi_\theta\big(d_{q,i,\ell}\mid\tau_{q,i},y_{q,i},\bdiag_{q,i,<\ell}\big),
\label{eq:sft-main}
\end{equation}
Only diagnosis tokens are supervised; prompts include the frozen taxonomy, trajectory and outcome (Appendix~\ref{app:coldstart}).
The checkpoint initializes the RL actor and diagnoser; the same verified diagnoses and outcomes initialize prices via the estimator in \S\ref{sec:m-price}.
No teacher is used thereafter.

\subsection{Terminal-Anchored Error Pricing}
\label{sec:m-price}
Locating an error does not reveal its cost, so \method{} regresses terminal outcomes on verified error rates (\S\ref{sec:m-diag}) to estimate an outcome association coefficient $\beta_k$ for each class.
A more negative $\beta_k$ links higher error rates to worse outcomes, holding other features fixed.
We convert these coefficients to nonnegative relative error prices $w_k$ (\S\ref{sec:m-rl}).

Tasks differ in difficulty, so we compare successful and failed rollouts within each task.
We take the observed number of successes as given and use error rates to explain which rollouts succeed, without separately estimating each task's baseline success rate.
For group $q$, let $\calY_q$ be the successes retained by the diagnosis gate (Appendix~\ref{app:diag}), and $\calA_q$ all candidate success sets $\Gamma$ drawn from all retained trajectories with $|\Gamma|=|\calY_q|$.
This gives the observed success-set probability and likelihood objective over informative groups $\calQ_{\mathrm{info}}$, containing both successes and failures:
\begin{equation}
\Pq(\bbeta)=\frac{\exp\big(\sum_{i\in\calY_q}\bbeta^{\top}\bx_{q,i}\big)}{\sum_{\Gamma\in\calA_q}\exp\big(\sum_{i\in\Gamma}\bbeta^{\top}\bx_{q,i}\big)},\qquad
\bbeta^{\star}=\argmin_{\bbeta}\Big[-\!\!\sum_{q\in\calQ_{\mathrm{info}}}\!\!\log \Pq(\bbeta)\Big].
\label{eq:condlogit}
\end{equation}
For binary outcomes, these coefficients are fitted by bounded quasi-Newton optimization with a small-sample correction and shrinkage toward a fixed anchor: initially on teacher-diagnosed trajectories (\S\ref{sec:m-diag}), then periodically on a sliding window of recent RL rollouts (derivation, estimator, acceptance test and update rule in Appendix~\ref{app:pricing}).

\subsection{Self-Evolving RL with Step-Level Credit Redistribution}
\label{sec:m-rl}
Verified errors can differ even when all rollouts in a group succeed or fail, and can provide step-specific process signals when priced.
In each batch, a frozen policy generates and diagnoses a rollout group per task.
Using fixed error prices, we set each trajectory's bounded total penalty, redistribute it to diagnosed steps, and use each step's share to form group-relative advantages for policy learning.

\paragraphhd{Self-diagnosis of on-policy rollouts.}
After each rollout, the actor diagnoses its completed trajectory given the observed outcome, using the format of \S\ref{sec:m-diag}.
Verified errors are aggregated into error-rate vector $\bx_{q,i}$.

\paragraphhd{Bounded budget.}
\label{sec:m-budget}
We cap the total penalty below the success--failure reward gap for binary outcomes, so steps from successful trajectories retain higher episode scores than those from failures.
Before this cap, fixed prices turn each trajectory's verified diagnosis into a score $S_{q,i}\in[0,1]$ and budget $P_{q,i}$:
\begin{equation}
P_{q,i}=\lambda_u S_{q,i}=\lambda_u\,\clip\!\Big(\sum_{k}w_k\,x_{q,i,k},\,0,\,1\Big),\quad
w_k=\begin{cases}|\beta_k|/\max_{j:\,\beta_j<0}|\beta_j|, & \beta_k<0,\\[1pt] 0, & \beta_k\ge0.\end{cases}
\label{eq:score}
\end{equation}
Here $x_{q,i,k}=n_{q,i,k}/T_{q,i}$ is the verified error rate, and $w_k$ is the nonnegative error price derived from $\beta_k$ (\S\ref{sec:m-price}).
Using rates rather than counts avoids rewarding the RL actor for shorter trajectories solely because longer ones accumulate more diagnosed errors.
$\lambda_u$ sets batch $u$'s penalty strength; Appendices~\ref{app:pricing} and~\ref{app:budget} detail the price mapping, schedule and cap.

\paragraphhd{Conserved redistribution.}
\label{sec:m-redis}
The budget is split in proportion to step localization weights $h_{q,i,t}$,
\begin{equation}
\delta_{q,i,t}=P_{q,i}\,\frac{h_{q,i,t}}{\sum_{r=1}^{T_{q,i}}h_{q,i,r}},\qquad
h_{q,i,t}=\sum_{e\in\calV_{q,i,t}}w_{k(e)},
\label{eq:redistribution}
\end{equation}
Here $\calV_{q,i,t}$ contains verified errors cited at step $t$, and $k(e)$ is entry $e$'s class.
Their summed prices give the raw step penalty weight $h_{q,i,t}$; normalization to $P_{q,i}$ gives $\delta_{q,i,t}$, the penalty allocated to step $t$ for the policy update (exact form in Appendix~\ref{app:redis}).

\paragraphhd{Dual-channel advantage.}
\label{sec:m-adv}
Task-wide comparisons provide overall outcome guidance but use a baseline shared across different contexts.
We therefore add comparisons within similar contexts to give each decision a more relevant reference.
Following GiGPO~\citep{feng2025gigpo}, the episode channel compares each active step with $\Omega_q$, all task-$q$ step samples $(j,r)$; the step channel uses the anchor-state group $\calH(o_{q,i,t})\subseteq\Omega_q$ of samples with the same or similar pre-response context (Appendix~\ref{app:adv}).
Both channels use the same step penalty; group centering cancels any penalty shared by all its samples.
The episode channel centers terminal reward minus the step's share within the task group,
\begin{equation}
\Aep_{q,i,t}=E_{q,i,t}-\frac{1}{|\Omega_q|}\!\sum_{(j,r)\in\Omega_q}\!E_{q,j,r},
\qquad
E_{q,i,t}=\Rout_{q,i}-\delta_{q,i,t},
\label{eq:aep}
\end{equation}
and the step channel centers discounted return minus the step penalty within the anchor-state group,
\begin{equation}
\Astep_{q,i,t}=\Zt_{q,i,t}-\frac{1}{|\calH(o_{q,i,t})|}\!\sum_{(j,r)\in\calH(o_{q,i,t})}\!\Zt_{q,j,r},
\qquad
\Zt_{q,i,t}=\gamma^{\,T_{q,i}-t}\Rout_{q,i}-\delta_{q,i,t},
\label{eq:astep}
\end{equation}
\paragraphhd{Policy update.}
The joint step advantage $A_{q,i,t}=\Aep_{q,i,t}+\omega\Astep_{q,i,t}$, with step-channel weight $\omega$, is broadcast uniformly to step $t$'s response tokens $\calU_{q,i,t}$.
The actor minimizes the clipped policy-gradient objective over all $N$ response tokens in the batch:
\begin{equation}
\calL_{\mathrm{actor}}(\theta)=-\frac{1}{N}\sum_{(q,i,t)}\ \sum_{\ell\in\calU_{q,i,t}}\Big[\min\big(\varrho_\ell A_{q,i,t},\ \clip(\varrho_\ell,1-\epsilon,1+\epsilon)\,A_{q,i,t}\big)-\lambda_{\mathrm{KL}}\,\DLV_\ell\Big],
\label{eq:actor-main}
\end{equation}
where $\varrho_\ell=\pi_\theta(a_\ell\mid c_\ell)/\piold(a_\ell\mid c_\ell)$ is token $\ell$'s importance ratio against the rollout policy given context $c_\ell$, and $\DLV_\ell$ is a low-variance KL estimate to frozen reference policy $\piref$ at that token (Appendix~\ref{app:adv}).

\paragraphhd{Co-evolution of diagnoser, prices and policy.}
\label{sec:m-online}
The diagnoser shares the actor's weights, but its diagnosis tokens receive no policy gradient, preventing a direct RL shortcut of reporting fewer errors to avoid penalties.
After each actor update, the batch's verified records $(q,\bx_{q,i},y_{q,i})$ enter a sliding window; with enough informative groups, prices are periodically refitted online via Eq.~\eqref{eq:condlogit} with the regularization and smoothing of Appendix~\ref{app:online}.
The updated model generates and diagnoses new rollouts, and the latest prices turn these diagnoses into step penalties for subsequent policy updates, closing the loop (Algorithm~\ref{alg:dtcr} in Appendix~\ref{app:algorithm}).

\section{Experiments}
\label{sec:exp}
\subsection{Experimental Setting}
\label{sec:exp-setting}
We evaluate \method{} on three benchmarks. Further details are in Appendix~\ref{app:exp}.

\paragraphhd{Benchmarks.}
ALFWorld~\citep{shridhar2021alfworld} spans six text-based household task families; we train on standard games and test on unseen tasks within $30$ steps.
In WebShop~\citep{yao2022webshop}, the agent searches $1{,}000$ products and buys one matching a natural-language request within $15$ steps.
Search-based QA follows Search-R1~\citep{jin2025searchr1}: the agent searches for evidence to answer questions from NQ~\citep{kwiatkowski2019nq}, TriviaQA~\citep{joshi2017triviaqa}, PopQA~\citep{mallen2023popqa}, HotpotQA~\citep{yang2018hotpotqa}, 2WikiMultiHopQA~\citep{ho2020twowiki}, MuSiQue~\citep{trivedi2022musique} and Bamboogle~\citep{press2023bamboogle}.

\paragraphhd{Baselines.}
Vanilla is the backbone without task-specific training.
GRPO~\citep{shao2024deepseekmath} uses group-normalized outcome rewards and GiGPO~\citep{feng2025gigpo} adds step credit from anchor-state comparisons; both start from the raw backbone.
SEED~\citep{wu2026seed} uses its own teacher-written hindsight-skill SFT data and distils skill-conditioned behaviour during RL.
At 4B, GRPO (diag-SFT) and GiGPO (diag-SFT) start from our diagnosis-SFT checkpoint without diagnostic shaping.

\paragraphhd{Evaluation metrics.}
ALFWorld reports success by task family and over all $134$ unseen tasks, averaging three greedy evaluations per task.
WebShop reports mean normalized task score and strict purchase success over $256$ episodes.
Search-based QA reports exact match for each of seven datasets and the unweighted mean of their accuracies, not a pooled question-level average.

\paragraphhd{Implementation details.}
We use Qwen3-1.7B (thinking-capable) and Qwen3-4B-Instruct-2507 (non-thinking)~\citep{yang2025qwen3,qwen2025qwen34binstruct2507}, with the same no-thinking evaluation template.
For ALFWorld and WebShop, cold start samples eight rollouts for each of $180$ tasks per checkpoint.
Verified diagnoses yield the frozen taxonomy, diagnosis-SFT checkpoint and initial prices; the checkpoint initializes actor and diagnoser.
RL runs $160$ updates with $16$ tasks ($128$ for Search-based QA), $G=8$ rollouts and three self-diagnoses per trajectory; prices are refit every $10$ updates from update $30$.

\begin{table}[t]
\centering
\caption{Performance on ALFWorld, Search-based QA, and WebShop. ALFWorld reports success rate (\%) by task family and over all $134$ unseen tasks (All); Search-based QA reports exact-match accuracy (\%) by dataset and its unweighted average (Avg); WebShop reports mean task score and strict purchase success (\%) over $256$ episodes. Rows marked (diag-SFT) start RL from our Stage~1 cold-start model. Within each backbone, \textbf{bold} and \underline{underlined} mark the highest and second-highest reported values per column (ties share formatting).}
\label{tab:main}
\footnotesize
\setlength{\tabcolsep}{3pt}
\resizebox{\textwidth}{!}{%
\begin{tabular}{l ccccccc cccccccc cc}
\toprule
& \multicolumn{7}{c}{ALFWorld} & \multicolumn{8}{c}{Search-based QA} & \multicolumn{2}{c}{WebShop} \\
\cmidrule(lr){2-8}\cmidrule(lr){9-16}\cmidrule(lr){17-18}
Method & Pick & Look & Clean & Heat & Cool & Pick2 & All & NQ & Triv & Pop & Hotp & 2Wk & MuS & Bam & Avg & Score & Succ. \\
\midrule
\multicolumn{18}{c}{\textit{Qwen3-1.7B}} \\
Vanilla & 8.3 & 22.2 & 6.5 & 0.0 & 0.0 & 0.0 & 6.0 & 29.7 & 48.2 & 35.0 & 23.7 & 19.6 & 6.4 & 17.6 & 25.7 & 51.0 & 2.3 \\
GRPO & 66.7 & 38.9 & 35.5 & 43.5 & 52.4 & 17.6 & 43.3 & 39.3 & 57.1 & 43.3 & 35.7 & 33.9 & 11.7 & 28.0 & \underline{35.6} & 66.5 & 39.1 \\
GiGPO & 58.3 & 27.8 & 29.0 & \underline{65.2} & 38.1 & \underline{29.4} & 41.8 & 39.1 & 56.6 & 44.4 & 33.2 & 29.2 & 11.2 & 23.2 & 33.8 & 78.8 & 49.2 \\
SEED & \textbf{79.2} & \textbf{61.1} & \underline{71.0} & \underline{65.2} & \underline{61.9} & \textbf{47.1} & \underline{65.7} & \underline{42.3} & \underline{58.5} & \underline{44.8} & \underline{38.4} & \textbf{36.9} & \textbf{15.4} & \textbf{36.0} & \textbf{38.9} & \underline{86.8} & \textbf{73.4} \\
\textbf{\method{} (Ours)} & \underline{70.8} & \underline{44.4} & \textbf{74.2} & \textbf{78.3} & \textbf{85.7} & \textbf{47.1} & \textbf{68.7} & \textbf{44.6} & \textbf{59.7} & \textbf{46.5} & \textbf{39.8} & \underline{35.4} & \underline{13.7} & \underline{32.8} & \textbf{38.9} & \textbf{87.5} & \underline{69.5} \\
\midrule
\multicolumn{18}{c}{\textit{Qwen3-4B-Instruct-2507}} \\
Vanilla & 50.0 & 11.1 & 16.1 & 13.0 & 19.0 & 0.0 & 19.4 & 4.0 & 11.0 & 4.3 & 5.9 & 6.9 & 1.9 & 8.8 & 6.1 & 20.5 & 1.6 \\
GRPO & 75.0 & 16.7 & 32.3 & 69.6 & 57.1 & 41.2 & 49.3 & 43.7 & 60.6 & 44.8 & 40.7 & 35.0 & 18.7 & 34.4 & 39.7 & 67.8 & 55.1 \\
GiGPO & 75.0 & \underline{66.7} & \underline{83.9} & 78.3 & 81.0 & 47.1 & 73.9 & 42.6 & 62.3 & 44.6 & 39.6 & 42.0 & 16.9 & 37.6 & 40.8 & 68.5 & 50.4 \\
GRPO (diag-SFT) & 79.2 & 61.1 & 77.4 & \underline{82.6} & 76.2 & 41.2 & 71.6 & \textbf{50.4} & \underline{65.5} & \textbf{49.4} & \textbf{48.1} & 42.3 & 18.5 & \underline{45.6} & \underline{45.7} & 82.2 & 66.4 \\
GiGPO (diag-SFT) & 79.2 & \textbf{100.0} & 80.6 & 78.3 & 76.2 & \underline{70.6} & 80.6 & 47.6 & \textbf{65.6} & \underline{48.8} & \underline{45.7} & \textbf{47.0} & \textbf{21.7} & \textbf{48.8} & \textbf{46.5} & 83.5 & 68.0 \\
SEED & \underline{83.3} & \textbf{100.0} & \underline{83.9} & 73.9 & \textbf{95.2} & \underline{70.6} & \underline{84.3} & \underline{48.7} & 63.5 & 47.3 & 43.7 & \underline{43.6} & 20.1 & 44.0 & 44.4 & \underline{86.2} & \textbf{74.2} \\
\textbf{\method{} (Ours)} & \textbf{91.7} & \textbf{100.0} & \textbf{93.5} & \textbf{87.0} & \underline{85.7} & \textbf{88.2} & \textbf{91.0} & 47.9 & 64.1 & 46.8 & 44.3 & 41.8 & \underline{20.4} & 41.6 & 43.8 & \textbf{88.5} & \underline{71.5} \\
\bottomrule
\end{tabular}}
\end{table}

\begin{figure}[t]
\centering
\includegraphics[width=\textwidth]{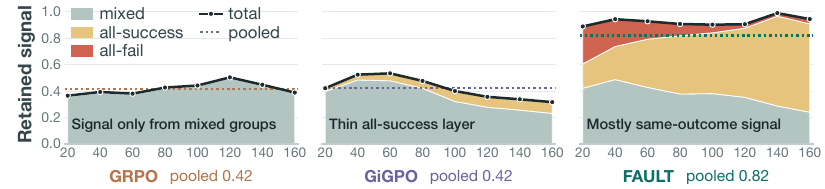}
\caption{\textbf{Relative signal strength during training.} Stacked areas show the contributions of mixed, all-success, and all-fail groups to the signal-retention Index. Black lines show totals for $20$-update windows; dotted lines show run-pooled values. Both baselines start from diagnosis-SFT.}
\label{fig:signal_index_curve}
\end{figure}

\begin{figure}[t]
\centering
\includegraphics[width=\textwidth]{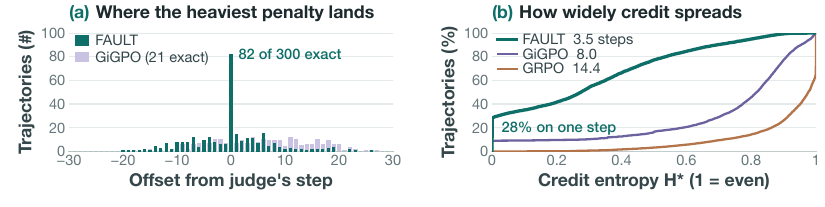}
\caption{\textbf{Placement and concentration of step-level credit.} (a) Offset of the most penalized step from the judge's decisive step over $300$ failed trajectories: \method{} in dark bars, GiGPO (same-configuration rerun) in light bars. (b) Cumulative distribution of the normalized credit entropy on same-outcome groups; curves further left are more concentrated, and the legend gives the mean effective number of credited steps. Baselines start from the diagnosis-SFT checkpoint.}
\label{fig:localization}
\end{figure}

\subsection{Main Results}
\label{sec:exp-main}
Table~\ref{tab:main} reports results for two backbones on three benchmarks.

\paragraphhd{Strong performance across benchmarks and scales.}
At both scales, \method{} has the highest ALFWorld success rate and WebShop task score.
Across scales, its gains over raw-initialized GRPO/GiGPO span $17.1$--$41.7$ points on ALFWorld, $8.7$--$21.0$ in WebShop task score and $3.0$--$5.1$ on the Search-based QA average.
At 4B, it also beats their (diag-SFT) variants on ALFWorld and WebShop, showing gains beyond shared initialization.
Margins over SEED, the strongest ALFWorld/WebShop baseline, are smaller: $6.7$ and $3.0$ points on ALFWorld and $2.3$ and $0.7$ in WebShop task score at 4B and 1.7B.
On Search-based QA, \method{} ties SEED at $38.9$ at 1.7B; at 4B, it scores $43.8$, $0.6$ points below SEED.
SEED leads WebShop strict success by $2.7$ and $3.9$ points at 4B and 1.7B, likely because \method{}, unlike the baselines, trains on the graded score that also credits partial matches, and SEED's distilled skills suit this all-or-nothing metric.

\paragraphhd{Initialization and RL both contribute.}
The 4B (diag-SFT) rows distinguish cold-start and RL gains, though they are imperfect single-factor controls (Appendix~\ref{app:exp}).
Our diagnosis-SFT checkpoint raises GRPO/GiGPO from $49.3/73.9$ to $71.6/80.6$ on ALFWorld, $67.8/68.5$ to $82.2/83.5$ in WebShop score, and $39.7/40.8$ to $45.7/46.5$ on Search-based QA.
GiGPO (diag-SFT) still trails SEED by $3.7/2.7$ points on ALFWorld/WebShop.
From the same checkpoint, \method{} beats GiGPO (diag-SFT) by $10.4$ points and SEED by $6.7$ on ALFWorld; its $88.5$ WebShop score also exceeds SEED's, though with a different reward.
On Search-based QA, however, \method{} trails these two (diag-SFT) controls by $1.9/2.7$ points, so diagnosis-guided RL's additional gains concentrate in ALFWorld and WebShop.

\paragraphhd{Margins and episode length.}
The above GRPO/GiGPO margin ranges shrink from ALFWorld to WebShop to Search-based QA, matching their step-limit order ($30$, $15$ and $4$).
The same order holds against GiGPO (diag-SFT), with 4B margins of $10.4$, $5.0$ and $-2.7$ points, and against SEED at both scales.
This pattern is consistent with greater benefits from step-level credit on longer tasks: longer failed episodes have more possible error steps, whereas in episodes of at most four steps, outcome rewards already fall close to the wrong step.

\subsection{Analysis}
\label{sec:exp-analysis}
We analyze signal coverage and strength, step-level credit, and training trajectory length.

\subsubsection{Learning Signal in Same-Outcome Groups}
\label{sec:analysis-signal}
\paragraph{Signal coverage.}
On ALFWorld, each prompt's eight rollouts form a \emph{same-outcome} group (all-fail or all-success) or a \emph{mixed} group.
Same-outcome groups make up $59$--$64\%$ of training groups, rising to about $75\%$ by the end of GiGPO and \method{} training.
To compare scales, we measure each group's credit gap, its highest minus lowest credit value.
Mixed groups provide outcome contrast, so a group is usable if its gap exceeds $5\%$ of that method's mean mixed-group gap, allowing weak but non-negligible signals (Appendix~\ref{app:signal-figs}).
From \method{}'s diagnosis-SFT checkpoint, GRPO, GiGPO and \method{} find usable signal in $41.4\%$, $72.5\%$ and $95.1\%$ of all groups (solid bars, Figure~\ref{fig:teaser}(a)).

\paragraph{Signal strength.}
In each $20$-update window, we average the credit gaps of all-success, all-fail and mixed groups separately.
Dividing each mean by the method's mixed-group mean in that window expresses signal strength relative to mixed groups.
We then weight each ratio by that group type's share of the window's groups.
The three contributions sum to the signal-retention Index; mixed groups contribute exactly their share, as their ratio is one (Appendix~\ref{app:signal-figs}).
From the first to last window, \method{}'s all-success share rises from $11.6\%$ to $70.3\%$, while its mixed and all-fail shares fall from $42.2\%$ to $24.1\%$ and $46.3\%$ to $5.6\%$.
Its all-success contribution rises from $0.184$ to $0.668$, keeping the total near $0.9$ (Figure~\ref{fig:signal_index_curve}).
At $72.5\%$ coverage, GiGPO's pooled Index is $0.421$ (same-outcome contribution $0.063$), near GRPO's $0.419$, versus \method{}'s $0.821$ (same-outcome contribution $0.449$).

\paragraph{Why outcome credit falls short.}
All mixed groups qualify, but same-outcome groups expose the limits of outcome credit.
GRPO uses none because subtracting the group mean cancels equal rewards.
GiGPO uses about $75\%$ of all-success but no all-fail groups: discounting separates rollouts by steps left to success from shared anchor states, while failures return zero.
\method{} uses $90\%$ and $100\%$, respectively, as its diagnostic penalties distinguish trajectories despite equal outcomes (derivations in Appendix~\ref{app:signal-figs}).
Successes can still contain correctable process errors: the audit finds $61.85\%$ of \method{}'s successes ($8{,}719/14{,}097$) penalized while keeping their success labels (Appendix~\ref{app:audit}).
The strength comparison uses budget gaps for \method{} and advantage gaps for baselines, so the pooled $0.821$ is a contrast proxy, not a common-scale advantage estimate (Appendix~\ref{app:signal-figs}).

\subsubsection{Step-Level Localization of Credit}
\label{sec:analysis-local}
Section~\ref{sec:analysis-signal} asked whether each group keeps a training signal.
Here we ask how this signal is split across a trajectory's steps and whether it falls on the right steps.

\begin{figure}[t]
\centering
\includegraphics[width=\textwidth]{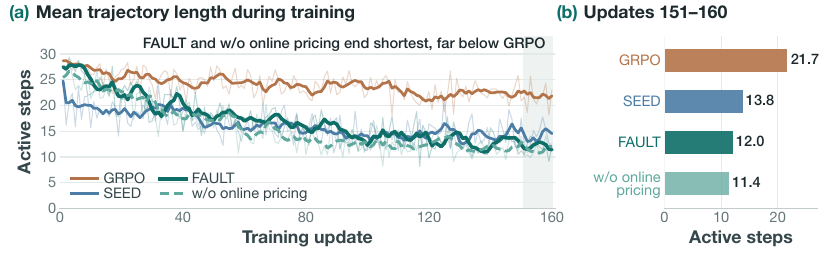}
\caption{\textbf{ALFWorld training trajectory length.} (a)~Mean active steps per training update (thin lines) and five-update moving averages (thick lines); shading marks updates $151$--$160$. (b)~Means over updates $151$--$160$. All runs use a $30$-step limit but differ in initialization and training recipe.}
\label{fig:traj_length}
\end{figure}

\paragraph{Credit placement.}
An independent LLM judge reads each sampled failed training trajectory and marks its decisive error step, the earliest step whose correction would most likely turn the failure into a success.
Each method ranks a trajectory's steps by the penalty it gave them.
Figure~\ref{fig:teaser}(b) reports the mean reciprocal rank of the judge's step, $\mathrm{MRR}=\frac{1}{N}\sum_{i=1}^{N}1/\mathrm{rank}_i$ (allowing one step of error), and Figure~\ref{fig:localization}(a) shows the offset of the most penalized step from the judge's step (Appendix~\ref{app:localization}).
\method{} reaches an MRR of $0.494$ versus $0.266$ for random ranking, and its most penalized step is exactly the judge's step in $82$ of $300$ trajectories.
GiGPO reaches $0.303$ with $21$ exact matches, and its most penalized step usually comes several steps after the judge's; GRPO stays close to random.

\paragraph{Credit concentration.}
The normalized entropy measures how evenly a trajectory's credit is spread over its steps.
We normalize each step's absolute credit (penalty shares for \method{}, advantages for baselines) into a distribution and divide its entropy by the log of the step count, so it is $1$ for an even spread and $0$ when all credit is on one step (Appendix~\ref{app:localization}).
On same-outcome groups, the mean normalized entropy is $0.29$ for \method{} versus $0.72$ for GiGPO and $0.91$ for GRPO, and $28\%$ of \method{}'s penalized trajectories put the whole penalty on one step (Figure~\ref{fig:localization}(b)).

\paragraph{Why step credit differs.}
GRPO gives every step of a trajectory the same credit.
GiGPO's credit varies across steps but comes from comparing returns at states shared by several rollouts: a failed rollout is penalized at every state that a successful rollout of the same task also reached, and in all-fail groups this difference disappears (Proposition~\ref{prop:same-outcome}).
\method{} penalizes only the steps cited by verified diagnoses.
This explains both results: equal credit keeps GRPO near random and the most spread out, GiGPO's state-based credit is more concentrated but rarely hits the decisive error, and \method{}'s credit is both the most concentrated and the best placed.
Appendices~\ref{app:localization}, \ref{app:mech-probe} and~\ref{app:case} give more results.

\subsubsection{Training Trajectory Length}
\label{sec:analysis-length}
Figure~\ref{fig:traj_length} tracks the mean number of active steps per ALFWorld training episode, counting successes and failures.
GRPO ends near $22$ steps, while SEED, \method{} and the variant without online pricing (Section~\ref{app:ablation-details}) become much shorter, averaging $13.8$, $12.0$ and $11.4$ steps over updates $151$--$160$; \method{} is shorter than GRPO and SEED in every window of the final $5$ to $40$ updates (Appendix~\ref{app:length-dynamics}).
Two effects explain this: failed episodes run to the $30$-step limit, so higher training success lowers the mean, and \method{} penalizes steps that waste this budget, such as revisits to searched locations, which recede faster under \method{} than under GRPO (diag-SFT; Appendices~\ref{app:dose-response}--\ref{app:prices-move}).

\subsection{Ablation Studies}
\label{app:ablation-details}
\paragraphhd{Ablation variants.}
We compare \method{} with four design variants on ALFWorld with Qwen3-4B-Instruct-2507 (Table~\ref{tab:ablation}); several differ in more than their named components.
\emph{W/o online pricing} keeps cold-start prices fixed throughout RL while still fitting monitoring coefficients, and removes the cap in Eq.~\eqref{eq:cap}.
\emph{Cosine $\lambda\to$ constant $\lambda$} fixes the penalty strength instead of letting it rise from a low value to a peak and decay to zero.
\emph{Shared diagnoser $\to$ fixed judge} takes diagnoses from the frozen diagnosis-SFT checkpoint; this recorded variant also uses constant $\lambda$, online prices and the cap, so it does not isolate the judge.
\emph{GRPO + diagnostic penalty} is an earlier design that broadcasts a trajectory-level diagnostic penalty to every step rather than targeting diagnosed steps; it also changes score mixing and penalty scale, and experienced a refit failure.
The variant without online pricing and the full method come from later runs than the other variants, so these runs are not matched controls.

\begin{table}[!htbp]
\centering
\caption{Ablations and design variants on ALFWorld with Qwen3-4B-Instruct-2507: success rate (\%) on the $134$ unseen tasks, evaluated as in Table~\ref{tab:main}.}
\label{tab:ablation}
\footnotesize
\setlength{\tabcolsep}{4pt}
\begin{tabular}{@{}l ccccccc@{}}
\toprule
& \multicolumn{7}{c}{ALFWorld} \\
\cmidrule(lr){2-8}
Variant & Pick & Look & Clean & Heat & Cool & Pick2 & All \\
\midrule
\method{} & \textbf{91.7} & \textbf{100.0} & \textbf{93.5} & 87.0 & 85.7 & 88.2 & \textbf{91.0} \\
\midrule
w/o online pricing & 79.2 & \textbf{100.0} & 83.9 & \textbf{95.7} & \textbf{90.5} & \textbf{94.1} & 89.6 \\
Cosine $\lambda$ $\to$ constant $\lambda$ & 70.8 & \textbf{100.0} & 77.4 & 73.9 & 85.7 & 70.6 & 79.1 \\
Shared diagnoser $\to$ fixed judge & \textbf{91.7} & 94.4 & 77.4 & 78.3 & 81.0 & 76.5 & 82.8 \\
GRPO + diagnostic penalty & 79.2 & \textbf{100.0} & 71.0 & 87.0 & 85.7 & 41.2 & 77.6 \\
\bottomrule
\end{tabular}
\end{table}

\paragraphhd{Ablation results.}
With actor, diagnoser, and prices co-evolving, \method{} achieves the best overall success ($91.0\%$), above fixed prices/no cap ($89.6\%$), constant $\lambda$ ($79.1\%$), fixed judge ($82.8\%$), and earlier GRPO-based diagnostic shaping ($77.6\%$).
These comparisons favor the full design but assess recorded design variants rather than isolating localization, pricing or the judge one factor at a time.

\subsection{Diagnostic Quality Before and During RL}
\label{app:diag-ability}
\paragraphhd{Assessment protocol.}
We assess teacher diagnoses and taxonomy coverage on the $2{,}880$ ALFWorld cold-start trajectories collected with Qwen3-4B-Instruct-2507, and evaluate the diagnosis-SFT checkpoint on the $200$ stratified held-out trajectories (Appendix~\ref{app:inst}).
Separately, we test for degradation during RL as $\lambda$ anneals using $17$ frozen checkpoints at updates $0$--$160$ and $n{=}600$ diagnoses per checkpoint.
This assessment uses single-seed training rollouts and a fixed reference from the original teacher's diagnoses, unchanged throughout training.

\paragraphhd{Cold-start results.}
Evidence verification retains $13{,}820$ of $14{,}484$ teacher error mentions ($95.4\%$); every rejected mention fails the quote check.
Successful and failed trajectories contain an average of $3.4$ and $5.1$ verified errors, respectively, showing that the teacher reports process errors even in successful trajectories.
The frozen taxonomy covers $90.6\%$ of accepted mentions.
On the held-out set, the initial self-diagnoser achieves L1 precision $0.81$ and recall $0.56$, above the respective thresholds of $0.75$ and $0.50$.
Dominant-error step accuracy is $0.48$ under a $\pm2$-step tolerance, below the $0.60$ threshold, and the off-table rate is $0.31$, above the $0.10$ limit.
Only precision and recall meet their criteria; step localization and off-table predictions remain limitations.

\begin{figure}[!htbp]
\centering
\includegraphics[width=\textwidth]{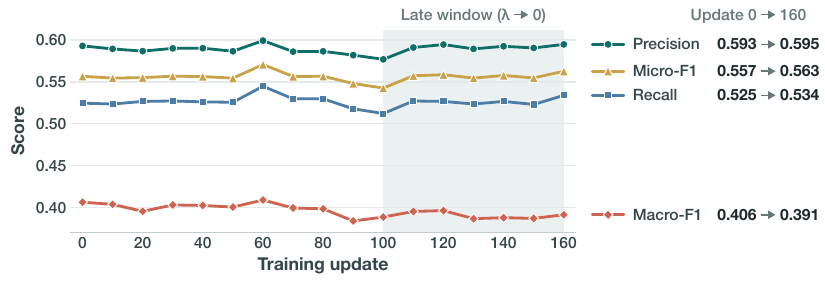}
\caption{\textbf{Diagnostic quality during RL.} Concrete-L2 precision, recall, micro-F1 and macro-F1 against the fixed teacher reference. Shading marks the late $\lambda\to0$ window (updates $100$--$160$). Single seed; training rollouts.}
\label{fig:diag_vs_step}
\end{figure}

\paragraphhd{Results during RL.}
L1 error-slot recall increases from $0.66$ to $0.71$ (Spearman $+0.93$), while the off-table hallucination rate decreases from $0.32$ to $0.16$ (Spearman $-0.94$).
Both metrics are strongest in the late $\lambda\!\to\!0$ window (updates $100$--$160$).
L2 micro-F1 remains stable, and L2 precision and recall do not decline through this window (Figure~\ref{fig:diag_vs_step}); L2 macro-F1 decreases slightly ($-0.015$ over the run).
Macro-F1 weights classes equally and is more sensitive to rare classes; stable micro-F1 therefore does not rule out class-specific degradation.

\section{Related Work}
\label{sec:related}

\paragraphhd{Outcome-based agentic RL.}
Agentic RL typically trains LLM policies from terminal rewards~\citep{zhang2025landscape}, often with GRPO~\citep{shao2024deepseekmath}, which gives each rollout's group-normalized outcome advantage to all its steps.
Same-outcome groups therefore yield no outcome-driven gradient~\citep{yu2025dapo,yang2026progpo}.
Agentic variants change how rollouts are sampled or compared~\citep{dong2025arpo,feng2025gigpo}, but still derive credit from outcomes.

\paragraphhd{Process-based agentic RL.}
Step rewards improve on outcome rewards in mathematical reasoning~\citep{lightman2024verify,wang2024mathshepherd}, and agent RL now scores steps with learned models, rollouts or hindsight~\citep{choudhury2025agentprm,xi2025agentprm,kim2026pair,ma2026repro,yao2026mise}.
Learned step rewards invite reward hacking~\citep{guo2025deepseekr1}, and GiGPO's outcome credit vanishes in all-fail groups~\citep{feng2025gigpo}.
Language analyses of finished trajectories can instead guide later attempts~\citep{shinn2023reflexion,zhao2024expel}, correct training data~\citep{yuan2025agentr,zhang2025critiquegrpo} or be distilled into the policy~\citep{wu2026seed}.
Yet the analysis stays text, its error claims are unchecked, and its weight is not learned from outcomes.
\method{} keeps the terminal reward as the trusted signal and closes these gaps with verified diagnoses, outcome-fitted prices and bounded step credit (Appendix~\ref{app:related}).

\section{Conclusion}
\label{sec:conclusion}
Agentic RL has advanced rapidly in training agents for multi-step tasks, yet terminal rewards offer limited signals in same-outcome groups and little guidance on individual errors, particularly in long-horizon tasks.
Natural-language diagnoses provide process information but need verification and quantification.
\method{} addresses this gap by checking diagnostic evidence and learning relative error costs from terminal outcomes to assign explicit step-level credit.
The policy and diagnoser co-evolve during RL.
On ALFWorld, \method{} raises usable signal coverage to $95\%$, versus $41\%$ for GRPO and $72\%$ for GiGPO, and achieves a decisive-step MRR of $0.494$, versus GiGPO's $0.303$.
At both model scales, it leads in ALFWorld success rate and WebShop task score while remaining competitive on shorter-horizon Search-based QA.
Its ALFWorld training trajectories are also shorter than those of GRPO and SEED.

\subsection*{AI use statement}
In this work, we used generative AI tools to design and give feedback on the method and experiments, implement the method, assist in writing proofs, translate text and interpret results.
Generative AI is also part of the method: GPT-5.6 Sol writes the teacher diagnoses for cold-start SFT and organizes free-text error labels into the error taxonomy (Section~\ref{sec:m-diag} and Appendix~\ref{app:coldstart}), Qwen3.8-27B generates the Search-based QA solving trajectories used in cold-start SFT, and Qwen3.8-Max is the blind judge that labels decisive error steps for the localization analysis (Section~\ref{sec:analysis-local}).
All RL training and evaluation tasks come from the public ALFWorld, WebShop and Search-based QA benchmarks.
We have not used generative AI tools to develop theoretical models or conceptual frameworks, formulate mathematical claims, provide critical ingredients for proofs, propose or refine hypotheses, or clean and reformat datasets.
Additionally, we used generative AI tools to create and modify figures, suggest experimental parameters, write and edit code, draft parts of the paper, summarize and identify relevant literature, edit the paper for readability, and suggest the title and keywords.
We have reviewed all AI-assisted work: the authors checked all AI-assisted content and tested all AI-assisted code, and for proofs, the authors proposed the proof ideas, AI tools helped with the derivations, and the authors verified every step.
We take responsibility for the final content of this work, including text, claims or artifacts produced with the aid of generative AI.

\subsection*{Ethics statement}
This work studies credit assignment for language-model agents using established research benchmarks.
We did not recruit human participants or collect private user data.
Our goal is to make credit assignment more transparent by linking step-level learning signals to diagnosed errors and observed task outcomes.

\subsection*{Reproducibility statement}
Section~\ref{sec:method} and Appendix~\ref{app:method} specify the full method, including diagnosis verification, the pricing estimator and the penalty schedule, and Algorithm~\ref{alg:dtcr} in Appendix~\ref{app:algorithm} lists the complete training loop.
Propositions~\ref{prop:order} and~\ref{prop:same-outcome} state their assumptions and are proved in Appendices~\ref{app:budget} and~\ref{app:signal-figs}.
Appendix~\ref{app:exp} describes the benchmarks, baselines, evaluation protocols, cold-start data and initialization, and Table~\ref{tab:hparams} lists the training hyper-parameters.
The diagnosis prompt is given in Appendix~\ref{app:prompts}, and the remaining prompts are released with our code.
All benchmarks, base models and Qwen3.8-27B are public, and GPT-5.6 Sol and Qwen3.8-Max are available through commercial APIs.

\subsection*{Acknowledgments}
{\uchyph=0 
We thank our colleagues and managers at Alibaba AI Data for their generous help during the internship.
This work was partially supported by JST SPRING JPMJSP2110 (YZ); JSPS KAKENHI JP22H05106, JP23K28045, JP26H02495, JST CREST JPMJCR21N3, and JST Moonshot JPMJMS263I (HS); and JSPS KAKENHI JP26K25543 (QL).\par}

\bibliography{references}
\bibliographystyle{unsrtnat}

\clearpage
\beginappendix
\noindent\textbf{Contents}
\startcontents[appendix]
\begingroup
\setlength{\parskip}{0pt}
\hypersetup{linkcolor=black}
\titlecontents{section}[1.6em]{\addvspace{10pt}\bfseries}{\contentslabel{1.6em}}{\hspace*{-1.6em}}{\titlerule*[0.6pc]{.}\contentspage}
\titlecontents{subsection}[4.1em]{\addvspace{4.5pt}}{\contentslabel{2.5em}}{\hspace*{-2.5em}}{\titlerule*[0.6pc]{.}\contentspage}
\printcontents[appendix]{}{1}{\setcounter{tocdepth}{2}}
\endgroup
\clearpage
\section{Notation}
\label{app:notation}
Table~\ref{tab:notation} lists the symbols of \S\ref{sec:method} and the main ones of Appendix~\ref{app:method}; others are defined where used, and Table~\ref{tab:hparams} gives the constants.
Three letters have two meanings: $j$ also indexes classes (as in $\max_j$), $c$ is a class $c_k$ or a token's context $c_\ell$, and $e$ is the quote of an entry $(k,t,e)$ or, in $k(e)$ and $e\in\calV_{q,i,t}$, the entry itself.
Superscripts such as $(u)$, $(v)$, $(0)$, $\mathrm{out}$, $\mathrm{ep}$ and $\mathrm{step}$ mark versions, roles or stages, never powers.
Bold lower-case letters are vectors; calligraphic letters are sets, except the losses $\calL$, the task distribution $\calD$ and the information matrix $\calI$; $\ind{\cdot}$ is the indicator function, $\clip(\cdot,\mathrm{lo},\mathrm{hi})=\min(\max(\cdot,\mathrm{lo}),\mathrm{hi})$, and $\sd(\cdot)$ is the population standard deviation.

\begingroup
\footnotesize
\setlength{\tabcolsep}{4pt}
\renewcommand{\arraystretch}{1.05}
\begin{longtable}{@{}p{3.05cm}p{8.6cm}p{1.7cm}@{}}
\caption{Notation. ``Where'' gives the place where the symbols are first introduced.}\label{tab:notation}\\
\toprule
\textbf{Symbol} & \textbf{Meaning} & \textbf{Where} \\
\midrule
\endfirsthead
\multicolumn{3}{@{}l}{\footnotesize\textit{Table~\ref{tab:notation} (continued)}}\\
\toprule
\textbf{Symbol} & \textbf{Meaning} & \textbf{Where} \\
\midrule
\endhead
\midrule
\multicolumn{3}{r@{}}{\footnotesize\textit{continued on next page}}\\
\endfoot
\bottomrule
\endlastfoot
\multicolumn{3}{@{}l}{\textbf{Indices, sets and constants}}\\[1pt]
$q\sim\calD$ & task and its rollout group; $\calD$: training task distribution & \S\ref{sec:problem} \\
$i,j\in\{1,\ldots,G\}$ & rollout indices in a group; $G$ rollouts per task & \S\ref{sec:problem} \\
$t,r\in\{1,\ldots,T_{q,i}\}$ & active steps; $T_{q,i}$: steps executed, not the step limit & \S\ref{sec:problem} \\
$k\in\{1,\ldots,K\}$ & priced error class; $\calC=\{c_1,\ldots,c_K\}$: frozen priced set & \S\ref{sec:m-diag} \\
$u\in\{1,\ldots,U\}$; $v$ & actor update (batch), $U$ in total; accepted price refit & \S\ref{sec:m-budget} \\
$u_0,\ \Delta u,\ W$ & first refit update, refit interval, window length & App.~\ref{app:online} \\
$\calQ_{\mathrm{info}}$ & groups with at least one success and one failure & \S\ref{sec:m-price} \\
\addlinespace[1pt]
\multicolumn{3}{@{}l}{\textbf{Trajectories, terminal signal and policies}}\\[1pt]
$\tau_{q,i}$; $o_{q,i,t}$, $a_{q,i,t}$ & trajectory $i$ of task $q$; context before response $t$; response $t$ & \S\ref{sec:problem} \\
$y_{q,i}\in\{0,1\}$ & success indicator of $\tau_{q,i}$ & \S\ref{sec:m-diag} \\
$\Rmin,\Rmax,\dR$; $\Rout_{q,i}$ & failure/success rewards, gap; $\Rout_{q,i}=\Rmin+\dR\,y_{q,i}$ if binary & \S\ref{sec:problem} \\
$J_{\mathrm{out}}(\theta)$ & outcome objective (expected terminal reward) & Eq.~\eqref{eq:objective} \\
$\pi_\theta,\ \piold,\ \piref$ & trainable, per-batch behaviour and KL reference policies & \S\ref{sec:problem} \\
\addlinespace[1pt]
\multicolumn{3}{@{}l}{\textbf{Structured diagnosis}}\\[1pt]
$(k,t,e)$; $k(e)$ & entry: frozen-table class, step, verbatim quote; priced class of entry $e$ & \S\ref{sec:m-diag} \\
$M$; $\Mval,\ M_{q,i}$ & diagnoses sampled per trajectory; the parsable ones and their number & App.~\ref{app:diag} \\
$\calV_{q,i,t}$ & verified entries citing step $t$ & Eq.~\eqref{eq:redistribution} \\
$n_{q,i,k}$; $x_{q,i,k}$, $\bx_{q,i}$ & mean verified count of class $k$; rate $n_{q,i,k}/T_{q,i}$; rate vector & \S\ref{sec:m-diag} \\
$k^{*}_{q,i}$ & class of the earliest verified error & App.~\ref{app:diag} \\
$\calD_{\mathrm{sft}}$; $\bdiag_{q,i}$, $d_{q,i,\ell}$; $\calL_{\mathrm{SFT}}$ & SFT trajectories; verified diagnosis text, its $\ell$-th token; SFT loss & Eq.~\eqref{eq:sft-main} \\
\addlinespace[1pt]
\multicolumn{3}{@{}l}{\textbf{Terminal-anchored pricing}}\\[1pt]
$\alpha_q$ & task intercept, removed by conditioning & \S\ref{sec:m-price} \\
$\bbeta=(\beta_1,\ldots,\beta_K)$ & shared error-outcome coefficients; negative ones become prices & \S\ref{sec:m-price} \\
$\calY_q$; $\calA_q$, $\Gamma$ & observed success set; all same-size candidate sets, one member & Eq.~\eqref{eq:condlogit} \\
$\Pq(\bbeta)$; $\bbeta^{\star}$ & conditional probability of $\calY_q$; its unpenalized maximizer & Eq.~\eqref{eq:condlogit} \\
$\bhat(\bref)$; $\bzeroinit$; $\bv$ & fit with reference $\bref$; anchor $\min(\bhatv{0},0)$; refit-$v$ coefficients & App.~\ref{app:estimation} \\
$\rho,\ \rho_u$ & cold-start shrinkage; its decaying online schedule & Eq.~\eqref{eq:pricing-full} \\
$\nu$ & smoothing weight on the previous coefficients & Eq.~\eqref{eq:signgate} \\
$w_k$; $\bw^{(u)}$ & max-normalized price in $[0,1]$, $0$ unless $\beta_k<0$; prices of batch $u$ & Eq.~\eqref{eq:score} \\
\addlinespace[1pt]
\multicolumn{3}{@{}l}{\textbf{Score, budget and redistribution}}\\[1pt]
$S_{q,i}$ & rate-weighted, clipped error score in $[0,1]$ & Eq.~\eqref{eq:score} \\
$\lambda_u$; $\Tbar$ & penalty strength (warm-up, cosine decay); first-batch mean length & Eq.~\eqref{eq:score} \\
$P_{q,i}$ & budget $\min(\lambda_uS_{q,i},\eta\,\dR)$ & Eq.~\eqref{eq:score} \\
$h_{q,i,t}$ & total price located at step $t$ & Eq.~\eqref{eq:redistribution} \\
$\delta_{q,i,t}$; $\bar\delta$ & step share of the budget, $\sum_t\delta_{q,i,t}=P_{q,i}$; group-mean share & Eq.~\eqref{eq:redistribution} \\
\addlinespace[1pt]
\multicolumn{3}{@{}l}{\textbf{Dual-channel advantage and policy update}}\\[1pt]
$E_{q,i,t}$; $\Omega_q$ & episode score $\Rout_{q,i}-\delta_{q,i,t}$; all step samples of task $q$ & Eq.~\eqref{eq:aep} \\
$\Zt_{q,i,t}$; $\gamma$ & discounted return from step $t$ minus the share; discount factor & Eq.~\eqref{eq:astep} \\
$\calH(o)$ & anchor-state group: same-task samples with similar contexts & Eq.~\eqref{eq:astep} \\
$\Aep,\ \Astep,\ A_{q,i,t}$; $\omega$ & episode, step and joint advantage $\Aep+\omega\Astep$; step weight & Eq.~\eqref{eq:aep} \\
$\calU_{q,i,t}$; $\ell$; $N$ & tokens of response $t$; token index; batch token count & Eq.~\eqref{eq:actor-main} \\
$\varrho_\ell$; $\epsilon$ & ratio $\pi_\theta(a_\ell\mid c_\ell)/\piold(a_\ell\mid c_\ell)$; two-sided clip width & Eq.~\eqref{eq:actor-main} \\
$\DLV_\ell$, $\lambda_{\mathrm{KL}}$; $\calL_{\mathrm{actor}}$ & token KL estimate to $\piref$, its weight; actor loss & Eq.~\eqref{eq:actor-main} \\
\end{longtable}
\endgroup

\section{Method Details}
\label{app:method}
This appendix details each module of \S\ref{sec:method} in the order of the training pipeline, from the cold start (\ref{app:coldstart}) to the full algorithm and its implementation notes (\ref{app:algorithm}).
Appendix~\ref{app:exp} gives the benchmark-specific instantiation and Appendix~\ref{app:hparams} the hyper-parameters.

\subsection{Cold Start: Taxonomy, Diagnosis SFT and Initial Prices}
\label{app:coldstart}
Before RL, a one-time cold start produces a frozen error taxonomy, a diagnosis-SFT checkpoint and initial error prices.
Appendix~\ref{app:prompts} gives the diagnosis prompt; Appendix~\ref{app:exp} provides the data and training settings.

\paragraphhd{Teacher diagnosis.}
We collect training trajectories from the base policy and, optionally, early checkpoints trained with outcome-only RL.
A teacher diagnoses each trajectory once using greedy decoding, given its task, outcome, reward and step count.
Each error report identifies a mechanism through a free-form label, a brief explanation, cited steps, a verbatim quote and a severity hint.
The teacher is used only during cold start.

\paragraphhd{Taxonomy construction.}
An LLM organizes the accepted labels into L1 error families and L2 mechanisms.
Starting from an empty table, a growth pass processes labels in descending frequency, matching existing mechanisms or adding new classes when uncertain.
A second pass assigns the remaining labels to existing classes or marks them as off-table.
Rare L2 classes are merged back into their L1 families, and the taxonomy is frozen.
We select $K$ pricing classes $\calC$ from the retained L2 classes and L1 fallback categories using minimum verified counts and frequency-rank thresholds.
Selection ensures data support; the fitted price may still be zero.

\paragraphhd{Diagnosis SFT.}
Teacher diagnoses mapped to the frozen taxonomy become the structured targets $\bdiag_{q,i}$.
We reserve a held-out set stratified by checkpoint, outcome and task type; the remaining diagnoses form $\calD_{\mathrm{sft}}$, supplemented during training by a few action-replay examples from successful trajectories.
Each diagnosis input contains the rules, taxonomy, task, outcome and trajectory, with the teacher's JSON as its target.
We fine-tune all model parameters using Eq.~\eqref{eq:sft-main}; the diagnosis targets contain no prices, scores or advantages.

\paragraphhd{Initial prices.}
The verified teacher diagnoses also provide the error rates used with terminal outcomes to fit $\bhatv{0}$ and obtain $\bw^{(1)}$.
Appendix~\ref{app:price-init} details this fit, which learns prices separately from diagnosis SFT.

\subsection{Online Self-Diagnosis, Verification and Aggregation}
\label{app:diag}
Online diagnosis provides error rates for pricing and step evidence for penalty allocation.

\paragraphhd{Diagnosis generation.}
Before each actor update, the unchanged rollout policy samples $M$ diagnoses per completed trajectory at a fixed temperature, given the frozen taxonomy, task, outcome and trajectory.
Each diagnosis is JSON containing the reported outcome and a bounded list of entries $(k,t,e)$: class, step and quote.
Free-form labels outside the taxonomy are logged separately and excluded from scoring.

\paragraphhd{Entry verification.}
An entry is verified when its class belongs to the frozen taxonomy (L2, or an L1 fallback when L2 is absent), its step $t\in\{1,\ldots,T_{q,i}\}$ was executed, and its quote meets the minimum length and matches a verbatim, case-sensitive substring of that step's text (observation, response and environment feedback).
Sampling settings and verification thresholds are listed in Table~\ref{tab:hparams}.

\paragraphhd{Trajectory selection.}
A trajectory passes the gate when at least two diagnoses are parsable JSON and a majority of the parsable diagnoses correctly report its outcome.
If it fails, it is excluded from price fitting; its score $S_{q,i}$ is filled with the mean score of same-outcome trajectories in the batch and marked as imputed.

\paragraphhd{Error aggregation.}
We map verified entries to pricing classes.
Let $\Mval$ be the parsable diagnoses of $\tau_{q,i}$, $M_{q,i}=|\Mval|$, and $n_{q,i,m,k}$ the number of verified class-$k$ entries in diagnosis $m$, counting repeated entries separately.
We average these counts across all parsable diagnoses and normalize by the number of executed steps:
\begin{equation}
n_{q,i,k}=\frac{1}{M_{q,i}}\sum_{m\in\Mval}n_{q,i,m,k},
\qquad
x_{q,i,k}=\frac{n_{q,i,k}}{T_{q,i}}.
\label{eq:softcount}
\end{equation}
Averaging retains a verified error reported by only one diagnosis, but reduces its contribution.
We also record $k^{*}_{q,i}$, the class of the earliest verified error across all parsable diagnoses.
The rates form the pricing record $(q,\bx_{q,i},y_{q,i})$; for graded outcomes, the normalized reward replaces $y_{q,i}$ as defined in Appendix~\ref{app:estimation}.

\subsection{Terminal-Anchored Error Pricing}
\label{app:pricing}
We estimate how verified error rates relate to terminal outcomes while accounting for differences in task difficulty.
We first remove the task-specific intercept, derive the regularized estimators, and then describe how their coefficients produce initial prices and online updates.

\subsubsection{Conditional Likelihood}
Let $\mathcal{G}_q$ contain the trajectories retained for fitting in group $q$, with error-rate vectors $\bx_{q,i}$ and binary outcomes $y_{q,i}$.
The observed success set is $\calY_q=\{i\in\mathcal{G}_q:y_{q,i}=1\}$, with size $s_q=|\calY_q|$.
We use a logistic model with shared error coefficients $\bbeta$ and a task-specific intercept $\alpha_q$:
\begin{equation}
p_{q,i}=\Pr(y_{q,i}=1\mid\bx_{q,i},q)
=\sigmoid(\alpha_q+\bbeta^{\top}\bx_{q,i}).
\label{eq:pricing-logistic}
\end{equation}
Conditioning on $s_q$ removes $\alpha_q$ without estimating a separate difficulty parameter for every group~\citep{chamberlain1980analysis}.
To show this, define the candidate success sets and their feature sums as
\[
\calA_q=\{\Gamma\subseteq\mathcal{G}_q:|\Gamma|=s_q\},
\qquad
\bfeat_{q,\Gamma}=\sum_{i\in\Gamma}\bx_{q,i}.
\]
Assuming conditionally independent outcomes under the model, the probability that exactly the trajectories in $\Gamma$ succeed is
\begin{equation}
\begin{aligned}
\Pr(\Gamma\mid\alpha_q,\bbeta)
&=\prod_{i\in\Gamma}p_{q,i}\prod_{i\in\mathcal{G}_q\setminus\Gamma}(1-p_{q,i})\\
&=\frac{\exp\!\big(|\Gamma|\alpha_q+\bbeta^{\top}\bfeat_{q,\Gamma}\big)}
{\prod_{i\in\mathcal{G}_q}\big[1+\exp(\alpha_q+\bbeta^{\top}\bx_{q,i})\big]},
\end{aligned}
\label{eq:success-set-joint}
\end{equation}
where conditioning on the group's feature vectors is implicit.
The denominator is common to all candidate sets.
After conditioning on the observed success count, this denominator cancels, followed by the common factor $\exp(s_q\alpha_q)$:
\begin{equation}
\begin{aligned}
\Pr\!\left(\calY_q\,\middle|\,\sum_{i\in\mathcal{G}_q}y_{q,i}=s_q\right)
&=\frac{\Pr(\calY_q\mid\alpha_q,\bbeta)}
{\sum_{\Gamma\in\calA_q}\Pr(\Gamma\mid\alpha_q,\bbeta)}\\
&=\frac{\exp(s_q\alpha_q)\exp(\bbeta^{\top}\bfeat_{q,\calY_q})}
{\exp(s_q\alpha_q)\sum_{\Gamma\in\calA_q}\exp(\bbeta^{\top}\bfeat_{q,\Gamma})}\\
&=\frac{\exp(\bbeta^{\top}\bfeat_{q,\calY_q})}
{\sum_{\Gamma\in\calA_q}\exp(\bbeta^{\top}\bfeat_{q,\Gamma})}
=\Pq(\bbeta).
\end{aligned}
\label{eq:conditional-cancellation}
\end{equation}
This recovers Eq.~\eqref{eq:condlogit} using only within-group comparisons.
If all trajectories succeed or all fail, $\calA_q$ contains one set and $\Pq(\bbeta)=1$, so the group contributes neither loss nor gradient to this estimator.
The binary fit therefore uses $\calQ_{\mathrm{info}}$, the groups containing both outcomes.

\subsubsection{Regularized Coefficient Estimation}
\label{app:estimation}
\paragraphhd{Binary outcomes.}
Write $\ell(\bbeta)=\sum_{q\in\calQ_{\mathrm{info}}}\log\Pq(\bbeta)$.
For each candidate set, let $p_{q,\Gamma}(\bbeta)$ denote its conditional probability from Eq.~\eqref{eq:conditional-cancellation}, with $\Gamma$ in place of $\calY_q$.
Differentiating the log-sum-exp normalizer gives
\begin{equation}
\begin{aligned}
\boldsymbol\mu_q(\bbeta)
&=\sum_{\Gamma\in\calA_q}p_{q,\Gamma}(\bbeta)\bfeat_{q,\Gamma},\\
\nabla\ell(\bbeta)
&=\sum_{q\in\calQ_{\mathrm{info}}}\big(\bfeat_{q,\calY_q}-\boldsymbol\mu_q\big),\\
\calI(\bbeta)=-\nabla^2\ell(\bbeta)
&=\sum_{q\in\calQ_{\mathrm{info}}}\sum_{\Gamma\in\calA_q}p_{q,\Gamma}(\bbeta)
\big(\bfeat_{q,\Gamma}-\boldsymbol\mu_q\big)
\big(\bfeat_{q,\Gamma}-\boldsymbol\mu_q\big)^{\top}.
\end{aligned}
\label{eq:conditional-information}
\end{equation}
Thus the information matrix is the sum of the within-group covariances of candidate feature sums.
To stabilize estimation from limited data, we fit
\begin{equation}
\begin{aligned}
\bhat(\bref)
&=\argmin_{\|\bbeta\|_{\infty}\le\beta_{\max}}\calL_{\mathrm{price}}(\bbeta),\\
\calL_{\mathrm{price}}(\bbeta)
&=-\ell(\bbeta)-\tfrac12\logdet\calI(\bbeta)
+\rho\|\bbeta-\bref\|_2^2.
\end{aligned}
\label{eq:pricing-full}
\end{equation}
The first term fits within-task outcomes; the Firth correction addresses small-sample bias and separation~\citep{firth1993bias,heinze2002separation}; and the final term shrinks uncertain coefficients toward $\bref$.
The log-determinant requires positive-definite information on the fitted coordinates; regularization does not make indistinguishable error classes identifiable.
We enumerate candidate sets exactly and solve the bounded objective with L-BFGS-B~\citep{byrd1995lbfgsb,zhu1997algorithm778}.
The resulting $\bhat$ is the regularized numerical estimate used by the method, whereas $\bbeta^{\star}$ in Eq.~\eqref{eq:condlogit} denotes the unpenalized estimator.

\paragraphhd{Graded outcomes.}
For graded rewards, let $z_{q,i}=\clip((\Rout_{q,i}-\Rmin)/\dR,0,1)$.
We replace the binary likelihood by squared-error regression with unpenalized group intercepts:
\begin{equation}
\min_{\bbeta,\{\alpha_q\}}
\sum_{q,i}\big(z_{q,i}-\alpha_q-\bbeta^{\top}\bx_{q,i}\big)^2
+\rho\|\bbeta-\bref\|_2^2,
\label{eq:graded-intercepts}
\end{equation}
where the sum covers the retained trajectories in the fitting data.
For fixed $\bbeta$, setting the derivative with respect to $\alpha_q$ to zero gives
$\widehat\alpha_q=\bar z_q-\bbeta^{\top}\bar{\bx}_q$, where the bars denote means within group $q$.
Substituting this expression removes the intercepts by centring both features and rewards:
$\bx^c_{q,i}=\bx_{q,i}-\bar{\bx}_q$ and $z^c_{q,i}=z_{q,i}-\bar z_q$.
Let $\mathbf X_c$ contain the centred feature rows and $\mathbf z_c$ the centred rewards.
The reduced objective and its normal equations yield
\begin{equation}
\begin{aligned}
\calL_{\mathrm{ridge}}(\bbeta)
&=\|\mathbf z_c-\mathbf X_c\bbeta\|_2^2+\rho\|\bbeta-\bref\|_2^2,\\
(\mathbf X_c^{\top}\mathbf X_c+\rho\mathbf I)\bhat(\bref)
&=\mathbf X_c^{\top}\mathbf z_c+\rho\bref,\\
\bhat(\bref)
&=(\mathbf X_c^{\top}\mathbf X_c+\rho\mathbf I)^{-1}
(\mathbf X_c^{\top}\mathbf z_c+\rho\bref).
\end{aligned}
\label{eq:ridge-price}
\end{equation}
Here $\mathbf I$ is the identity matrix and $\rho>0$ makes the system invertible.
This ridge estimator replaces the likelihood and Firth terms for graded outcomes; the price mapping and update rules below are shared by both branches.

\subsubsection{From Coefficients to Prices}
Negative coefficients associate higher error rates with lower outcomes within a task.
We use their magnitudes as relative costs and assign no reward to positive coefficients:
\begin{equation}
\widetilde w_k=\max(-\beta_k,0)\,\sigma_k,
\qquad
w_k=\begin{cases}
\widetilde w_k/\max_j\widetilde w_j, & \max_j\widetilde w_j>0,\\[1pt]
0, & \text{otherwise.}
\end{cases}
\label{eq:prices}
\end{equation}
At cold start, $\sigma_k$ is the population standard deviation of error rate $k$ over all cold-start trajectories, including zeros and groups excluded from the binary likelihood.
Then $|\beta_k|\sigma_k$ measures the change in the linear predictor associated with one standard deviation of that rate.
For online refits, $\sigma_k=1$, recovering Eq.~\eqref{eq:score}.
When any price is positive, the most expensive class has price one; $\lambda_u$ controls the overall penalty strength (Appendix~\ref{app:budget}).
These prices summarize predictive associations with outcomes, not causal effects.

\subsubsection{Cold-Start Fit and Initialization}
\label{app:price-init}
Cold-start fitting uses the teacher-diagnosed trajectories of Appendix~\ref{app:coldstart}, with $M_{q,i}=1$ in Eq.~\eqref{eq:softcount}.
Each group $q$ consists of the rollouts of one task from one checkpoint.
To separate error density from the additional association between trajectory length and outcome, the initial fit includes a length covariate:
\begin{equation}
\widetilde{\bx}_{q,i}=
\begin{pmatrix}\bx_{q,i}\\\Tfit_{q,i}\end{pmatrix},
\qquad \Tfit_{q,i}=\frac{T_{q,i}}{T^{\max}},
\qquad \widetilde{\bbeta}=
\begin{pmatrix}\bbeta\\\beta_T\end{pmatrix}.
\label{eq:pricing-length}
\end{equation}
Unlike the group intercept, this covariate can vary within a group and is therefore retained after intercept removal.
We apply the appropriate estimator of Appendix~\ref{app:estimation} to these extended features, using a zero reference and fixed shrinkage strength $\rho$; the binary optimizer also starts at zero and bounds the length coefficient.
The coefficient $\beta_T$ controls for length in fitting but never enters the price vector.

Let $\bhatv{0}$ contain only the $K$ error coefficients from this fit.
Equation~\eqref{eq:prices}, with the cold-start scales, gives the initial prices $\bw^{(1)}$.
We also retain the non-positive part of the fitted error coefficients as a fixed reference for online estimation:
\begin{equation}
\bzeroinit=\min\big(\bhatv{0},\bzero\big),
\label{eq:price-anchor}
\end{equation}
where the minimum is coordinate-wise.
Thus $\bhatv{0}$ is the initial fit, $\bzeroinit$ is the frozen reference, and $\bw^{(1)}$ is the initial price vector.
Online refits use only the $K$ error rates, without the length covariate.

\subsubsection{Online Refits and Price Updates}
\label{app:online}
\paragraphhd{Fitting recent trajectories.}
Batch $u$ uses the fixed prices $\bw^{(u)}$.
After its actor update, the verified records $(q,\bx_{q,i},y_{q,i})$ with non-imputed scores enter a sliding window of the last $W$ updates; graded outcomes use $z_{q,i}$ in place of $y_{q,i}$.
Refits begin at $u_0$ and recur every $\Delta u$ updates.
The binary fit requires enough mixed-outcome groups and successes; only classes with sufficient non-zero observations and within-group variation are fitted (thresholds in Table~\ref{tab:hparams}).
If these checks fail or a fit is rejected, the previous prices remain in use.
Numerical binary fits require convergence, a finite objective, and finite, feasible coefficients; implementation differences are recorded in Appendix~\ref{app:algorithm}.

For the $v$-th accepted refit, we use the current window to obtain $\bhatv{v}=\bhat(\bzeroinit)$ with shrinkage strength $\rho_u$, which decreases over training.
The reference stays fixed at $\bzeroinit$; each binary optimization also starts there.
The previous online coefficients enter only the subsequent smoothing step, not the shrinkage target.

\paragraphhd{Updating the deployed coefficients.}
Let $\calK_v$ be the classes included in the accepted refit.
Starting from $\bzeroinit$, update the deployed coefficients by
\begin{equation}
\beta^{(v)}_k=
\begin{cases}
\nu\beta^{(v-1)}_k+(1-\nu)\hat\beta^{(v)}_k,
& k\in\calK_v\ \text{and}\ \hat\beta^{(v)}_k\le0,\\[2pt]
\beta^{(v-1)}_k, & \text{otherwise,}
\end{cases}
\qquad \nu=0.7.
\label{eq:signgate}
\end{equation}
This exponential moving average gives weight $0.7$ to the previous value and $0.3$ to the new fit.
An unfitted class or a positive new coefficient keeps its previous value.
We map $\bv$ to prices with $\sigma_k=1$ in Eq.~\eqref{eq:prices}; the new prices take effect from batch $u+1$, so a batch is never scored using a fit to its own outcomes.
The smoothing acts on coefficients; the change in scaling after cold start means that it does not guarantee continuity of the normalized prices.

\subsection{Score, Penalty Schedule and Order-Preserving Budget}
\label{app:budget}
With error prices fixed within a batch, we compute a trajectory score, scale it over training, and cap the resulting penalty before distributing it across steps.
The cap preserves the ordering of binary outcomes in the episode channel, as shown below.

\subsubsection{Trajectory Score}
For a trajectory that passes the diagnosis gate, the batch prices $\bw=\bw^{(u)}$ and soft counts from Eq.~\eqref{eq:softcount} give
\begin{equation}
\begin{aligned}
\Sraw_{q,i}
&=\sum_{k=1}^{K}w_kn_{q,i,k}+(b-1)w_{k^{*}_{q,i}},\\
S_{q,i}
&=\clip\!\left(\frac{\Sraw_{q,i}}{T_{q,i}},0,1\right)\\
&=\clip\!\left(\sum_{k=1}^{K}w_kx_{q,i,k}
+\frac{(b-1)w_{k^{*}_{q,i}}}{T_{q,i}},0,1\right).
\end{aligned}
\label{eq:score-full}
\end{equation}
Here $b\ge1$ adds weight to the earliest verified error class $k^{*}_{q,i}$; the extra term is zero when no verified error exists.
We use a first-error penalty factor of $b=1.5$ in all experiments, adding $0.5\,w_{k^{*}_{q,i}}$ to $\Sraw_{q,i}$.
Setting $b=1$ recovers the score in Eq.~\eqref{eq:score}.
The rate term is independent of length at fixed error rates; the first-error term retains an explicit $1/T_{q,i}$ dependence.
Both successful and failed trajectories are scored; trajectories that fail the diagnosis gate use the imputed score of Appendix~\ref{app:diag}.

\subsubsection{Penalty Schedule}
Let $u=1,\ldots,U$ index actor updates and $0\le U_{\mathrm w}<U$ be the warm-up length.
We measure $\Tbar$, the mean executed trajectory length in the first rollout batch, before any actor update.
The non-negative strengths $\lambda_{\mathrm{warm}}$ and $\lambda_{\mathrm{peak}}$ are fixed multiples of $\Tbar$ (Table~\ref{tab:hparams}; benchmark variants in Appendix~\ref{app:inst-other}).
This scales a per-step error score to a trajectory-level penalty using the initial trajectory length.
We keep $\Tbar$ fixed so that later changes in trajectory length do not change this global scale.
The penalty strength is
\begin{equation}
\lambda_u=
\begin{cases}
\lambda_{\mathrm{warm}}, & u\le U_{\mathrm w},\\[3pt]
\dfrac{\lambda_{\mathrm{peak}}}{2}\Big[1+\cos\!\Big(\pi\,\dfrac{u-U_{\mathrm w}}{U-U_{\mathrm w}}\Big)\Big], & u>U_{\mathrm w}.
\end{cases}
\label{eq:lambda}
\end{equation}
The warm-up strength is constant; the remaining updates follow the cosine branch, with $\lambda_U=0$ so that the diagnostic penalty vanishes at the final update.

\subsubsection{Bounded Budget and Order Preservation}
Let $\dR=\Rmax-\Rmin>0$ be the terminal reward range, equal to the success--failure reward gap for binary outcomes.
For $0\le\eta<1$, the trajectory budget is
\begin{equation}
P_{q,i}=\min\big(\lambda_uS_{q,i},\eta\dR\big),
\qquad 0\le P_{q,i}\le\eta\dR.
\label{eq:cap}
\end{equation}
This budget is nondecreasing in $S_{q,i}$, with ties once the cap is reached; Appendix~\ref{app:redis} distributes it across steps without changing its total.

\begin{proposition}[Episode-channel order preservation]
\label{prop:order}
Fix a task $q$ with binary terminal rewards $\Rmin+\dR\,y_{q,i}$ and episode scores $E_{q,i,t}=\Rout_{q,i}-\delta_{q,i,t}$.
For any successful trajectory $i$, failed trajectory $j$, and their respective steps $t$ and $r$,
\begin{equation*}
\Aep_{q,i,t}-\Aep_{q,j,r}
=E_{q,i,t}-E_{q,j,r}
\ge(1-\eta)\dR>0.
\end{equation*}
\end{proposition}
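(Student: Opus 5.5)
The plan is a direct two-part computation: the equality comes from the group-mean cancellation in Eq.~\eqref{eq:aep}, and the bound comes from the cap in Eq.~\eqref{eq:cap} together with conservation of the budget under redistribution.

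\emph{The equality.} By Eq.~\eqref{eq:aep}, both $\Aep_{q,i,t}$ and $\Aep_{q,j,r}$ subtract the same baseline, namely the mean of $E_{q,\cdot,\cdot}$ over $\Omega_q$. This baseline is shared because both samples belong to the same task group $\Omega_q$. Subtracting the two advantages therefore cancels it and leaves $E_{q,i,t}-E_{q,j,r}$.

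\emph{The bound.} Substituting the binary rewards, $\Rout_{q,i}=\Rmin+\dR$ for the success and $\Rout_{q,j}=\Rmin$ for the failure, gives
\[
E_{q,i,t}-E_{q,j,r}=\dR-\delta_{q,i,t}+\delta_{q,j,r}.
\]
It then suffices to show two facts:
\begin{itemize}
\item $\delta_{q,j,r}\ge0$;
\item $\delta_{q,i,t}\le\eta\dR$.
\end{itemize}
For both, I would use the redistribution rule of Eq.~\eqref{eq:redistribution}. The weights $h_{q,i,t}=\sum_{e\in\calV_{q,i,t}}w_{k(e)}$ are nonnegative because every price $w_k\ge0$ by Eq.~\eqref{eq:prices}. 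So each share $h_{q,i,t}/\sum_r h_{q,i,r}$ lies in $[0,1]$, and the step shares satisfy $\sum_t\delta_{q,i,t}=P_{q,i}$. Hence every step share satisfies
\[
0\le\delta_{q,i,t}\le P_{q,i}\le\eta\dR,
\]
where the last inequality is the cap of Eq.~\eqref{eq:cap}. Combining gives $E_{q,i,t}-E_{q,j,r}\ge\dR-\eta\dR=(1-\eta)\dR$, which is strictly positive since $\eta<1$ and $\dR>0$.

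\emph{Main obstacle.} The delicate case is the degenerate one, $\sum_r h_{q,i,r}=0$, where no verified priced error is cited. There the ratio in Eq.~\eqref{eq:redistribution} is undefined. I would invoke the exact form of Appendix~\ref{app:redis} to confirm three properties in every case, including any fallback such as a uniform split or zero penalty when $P_{q,i}=0$: the shares are nonnegative, they sum to $P_{q,i}$, and none exceeds $P_{q,i}$. The bound needs only these three properties, so the argument goes through whatever convention the fallback uses. I would also check that an imputed score $S_{q,i}$ for a gated trajectory still passes through the same cap, so that $P_{q,i}\le\eta\dR$ holds for it as well.
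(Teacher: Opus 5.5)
Your proposal is correct and follows the paper's proof: it uses $0\le\delta_{q,i,t}\le P_{q,i}$ from the nonnegative conserved split, the cap $P_{q,i}\le\eta\dR$ of Eq.~\eqref{eq:cap}, and cancellation of the shared task-group mean in Eq.~\eqref{eq:aep}. Your concern about the degenerate case is settled by Eq.~\eqref{eq:redistribution-full}, whose fallback charges all of $P_{q,i}$ to the last step and so has exactly the three properties you need (Eq.~\eqref{eq:redistribution-conservation}).
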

\begin{proof}
The conserved split has non-negative shares, so $0\le\delta_{q,i,t}\le P_{q,i}$.
Using Eq.~\eqref{eq:cap},
\begin{equation*}
\begin{aligned}
E_{q,i,t}-E_{q,j,r}
&=\dR-\delta_{q,i,t}+\delta_{q,j,r}\\
&\ge\dR-P_{q,i}\\
&\ge(1-\eta)\dR>0.
\end{aligned}
\end{equation*}
Both episode advantages subtract the same task-group mean in Eq.~\eqref{eq:aep}, leaving their difference unchanged.
\end{proof}

The proposition concerns the diagnostic episode channel; additional reward terms in Appendix~\ref{app:adv} are outside its scope.
For graded outcomes, the same argument gives
\[
E_{q,i,t}-E_{q,j,r}
\ge\Rout_{q,i}-\Rout_{q,j}-\eta\dR.
\]
This bound guarantees strict order when the original reward gap exceeds the penalty cap.

\subsection{Conserved Step Redistribution}
\label{app:redis}
Given the bounded budget $P_{q,i}$ from Appendix~\ref{app:budget}, we use the diagnosed error locations to determine each step's share without changing the total penalty.

\subsubsection{Localization Weights}
Let $\calV_{q,i,t}$ collect the verified entries citing step $t$ across all parsable diagnoses, retaining repeated reports as separate entries.
Each entry contributes its class price, giving
\[
h_{q,i,t}=\sum_{e\in\calV_{q,i,t}}w_{k(e)}\ge0.
\]
Thus a class reported at the same step by three diagnoses contributes $3w_k$; the relative support across steps determines their allocation weights.
Quote length and severity hints do not affect these weights.
When at least one weight is positive, we boost the earliest such step:
\begin{equation}
t^{*}_{q,i}=\min\{t:h_{q,i,t}>0\},
\qquad
\widetilde h_{q,i,t}=\big[1+(b_{\mathrm{loc}}-1)\ind{t=t^{*}_{q,i}}\big]h_{q,i,t}.
\label{eq:hmap}
\end{equation}
If all weights are zero, we set $\widetilde h_{q,i,t}=0$ for every step.
We use $b_{\mathrm{loc}}=1.5$ (Table~\ref{tab:hparams}).
Unlike the score-level boost $b$ in Appendix~\ref{app:budget}, which can increase the total budget, $b_{\mathrm{loc}}$ changes only its allocation.
The boosted step is the earliest with positive price, which need not be the earliest verified error.

\subsubsection{Normalized Allocation and Conservation}
The complete allocation rule is
\begin{equation}
\delta_{q,i,t}=
\begin{cases}
\displaystyle P_{q,i}\frac{\widetilde h_{q,i,t}}{\sum_{r=1}^{T_{q,i}}\widetilde h_{q,i,r}},
& \displaystyle\sum_{r=1}^{T_{q,i}}\widetilde h_{q,i,r}>0,\\[9pt]
P_{q,i}\ind{t=T_{q,i}}, & \text{otherwise}.
\end{cases}
\label{eq:redistribution-full}
\end{equation}
This is Eq.~\eqref{eq:redistribution} with the boosted weights and a fallback for missing locations.
A positive budget without a positive localization weight can arise from score imputation (Appendix~\ref{app:diag}); it is assigned to the last active step.
Both branches use non-negative proportions summing to one, so
\begin{equation}
0\le\delta_{q,i,t}\le P_{q,i},
\qquad
\sum_{t=1}^{T_{q,i}}\delta_{q,i,t}=P_{q,i}.
\label{eq:redistribution-conservation}
\end{equation}
With localized evidence, zero-weight steps receive no penalty; when $P_{q,i}=0$, all shares vanish.
Redistribution therefore changes where the budget is charged, while preserving the total set by the rate-based score, schedule and cap.

\subsection{Dual-Channel Advantage and Policy Update}
\label{app:adv}
The step shares $\delta_{q,i,t}$ enter two comparisons: an episode channel across the task's rollouts and a step channel across similar pre-response contexts.
Their combined advantage then supervises the response tokens of each step.

\subsubsection{Comparison Groups and Local Scores}
The task group contains all active step samples,
\[
\Omega_q=\{(j,r):1\le j\le G,\ 1\le r\le T_{q,j}\},
\qquad |\Omega_q|=\sum_{j=1}^{G}T_{q,j}.
\]
Each step counts once, so longer trajectories contribute more samples to the episode baseline.
For the step channel, $\calH(o_{q,i,t})\subseteq\Omega_q$ contains samples clustered with $o_{q,i,t}$ by pre-response context similarity.
A context joins a cluster when its text similarity to the representative meets the threshold in Table~\ref{tab:hparams}; otherwise it starts a new cluster.
These groups approximate comparable contexts rather than asserting identical latent states.

We first compute the return from the environment rewards, then subtract the diagnostic share locally:
\begin{equation}
\begin{aligned}
\Zenv_{q,i,t}&=\sum_{r=t}^{T_{q,i}}\gamma^{r-t}R^{\mathrm{env}}_{q,i,r},\\
E_{q,i,t}&=\Rout_{q,i}-\delta_{q,i,t},
\qquad
\Zt_{q,i,t}=\Zenv_{q,i,t}-\delta_{q,i,t}.
\end{aligned}
\label{eq:adv-local-scores}
\end{equation}
For purely terminal rewards, $\Zenv_{q,i,t}=\gamma^{T_{q,i}-t}\Rout_{q,i}$, recovering Eq.~\eqref{eq:astep}.
The share is not added to the environment reward and discounted back to earlier steps: the diagnosis has already identified where to charge it.
When the base recipe includes a rule-based invalid-action penalty, it also enters $E_{q,i,t}$; the displayed equations omit this additional term.

\subsubsection{Centred Advantages and the Effect of Diagnosis}
Following the two comparison groups, we define
\begin{equation}
\begin{aligned}
\muep_q&=\frac{1}{|\Omega_q|}\sum_{(j,r)\in\Omega_q}E_{q,j,r},
&\Aep_{q,i,t}&=E_{q,i,t}-\muep_q,\\
\mustep(o)&=\frac{1}{|\calH(o)|}\sum_{(j,r)\in\calH(o)}\Zt_{q,j,r},
&\Astep_{q,i,t}&=\Zt_{q,i,t}-\mustep(o_{q,i,t}),\\
&&A_{q,i,t}&=\Aep_{q,i,t}+\omega\Astep_{q,i,t}.
\end{aligned}
\label{eq:adv-dual-full}
\end{equation}
On ALFWorld and WebShop, both channels subtract a mean without dividing by a standard deviation, keeping advantages in reward units; Search-based QA also divides by the standard deviation (Table~\ref{tab:hparams}).
A singleton anchor group has $\Astep=0$, while its episode advantage can remain non-zero.

To isolate the effect of diagnosis, hold the rollouts, comparison groups and other reward terms fixed, and let $A^{(0)}_{q,i,t}$ be the joint advantage with $\delta=0$.
For a non-empty comparison set $\mathcal G\subseteq\Omega_q$, write $\bar\delta_{\mathcal G}=|\mathcal G|^{-1}\sum_{(j,r)\in\mathcal G}\delta_{q,j,r}$.
Subtracting the two versions of Eq.~\eqref{eq:adv-dual-full} gives
\begin{equation}
A_{q,i,t}-A^{(0)}_{q,i,t}
=-\big(\delta_{q,i,t}-\bar\delta_{\Omega_q}\big)
-\omega\big(\delta_{q,i,t}-\bar\delta_{\calH(o_{q,i,t})}\big).
\label{eq:adv-diagnostic-shift}
\end{equation}
Thus each channel lowers a step's advantage relative to its own comparison group when its penalty exceeds that group's mean; a uniform penalty within a group cancels.
The two channels compare the same local cost against different reference sets, rather than assigning two separate budgets.

\subsubsection{Token-Level Policy Update}
For each response-token position $\ell\in\calU_{q,i,t}$, we use the same joint advantage, $A_\ell=A_{q,i,t}$, in Eq.~\eqref{eq:actor-main}.
The loss averages over the $N=\sum_{(q,i,t)}|\calU_{q,i,t}|$ response tokens in the batch; context, feedback and diagnosis tokens receive no policy loss.
The importance ratio is $\varrho_\ell=\pi_\theta(a_\ell\mid c_\ell)/\piold(a_\ell\mid c_\ell)$; both $\piold$ and the advantages are held fixed during the update.
In the experiments, the clipped surrogate additionally uses dual clipping for negative advantages~\citep{ye2020mastering}, and the loss includes the low-variance KL term $\lambda_{\mathrm{KL}}\DLV_\ell$ relative to the frozen policy $\piref$~\citep{schulman2020klapprox}.
The clipping and KL settings are listed in Table~\ref{tab:hparams}.

\subsection{Algorithm}
\label{app:algorithm}
Algorithm~\ref{alg:dtcr} organizes the RL stage into four steps: rollout and diagnosis, penalty allocation, policy update, and online price refitting.
Its initial policy, taxonomy and prices come from the cold start in Appendices~\ref{app:coldstart} and~\ref{app:price-init}; hyper-parameters are listed in Table~\ref{tab:hparams}.

\begin{algorithm}[h]
\caption{\method{}: self-evolving RL with diagnosis-guided credit}
\label{alg:dtcr}
\small
\begin{algorithmic}[1]
\Require Diagnosis-SFT policy $\pi_\theta$; frozen reference policy $\piref$; task distribution $\calD$
\Require Frozen taxonomy and pricing classes $\calC$; initial fit $\bhatv{0}$ and prices $\bw^{(1)}$
\Require Reward bounds $\Rmin,\Rmax$; training settings of Table~\ref{tab:hparams}
\Ensure Trained policy $\pi_\theta$ and price sequence $\{\bw^{(u)}\}_{u=1}^{U+1}$
\State $\bzeroinit\gets\min(\bhatv{0},\bzero)$; $v\gets0$; $\calW\gets\emptyset$; $\dR\gets\Rmax-\Rmin$
\For{$u=1,\ldots,U$}
  \State $\piold\gets\pi_\theta$; $\bw\gets\bw^{(u)}$ \Comment{fixed throughout the current update}
  \Statex \hspace{\algorithmicindent}\textit{1. Rollout and self-diagnosis}
  \State Sample tasks from $\calD$ and $G$ trajectories per task using $\piold$; record outcomes and rewards
  \State Generate $M$ diagnoses per trajectory with $\piold$; verify entries and apply the gate (App.~\ref{app:diag})
  \State Retain verified entries mapped to pricing classes in $\calV_{q,i,t}$ for all trajectories
  \State For gate-passing trajectories, compute $n_{q,i,k},\bx_{q,i},k^{*}_{q,i}$ and $S_{q,i}$ by Eqs.~\eqref{eq:softcount}, \eqref{eq:score-full}
  \State Impute the remaining scores from the batch's same-outcome, non-imputed scores (App.~\ref{app:diag})
  \Statex \hspace{\algorithmicindent}\textit{2. Bounded penalty allocation}
  \If{$u=1$}
    \State Measure $\Tbar$; fix $\lambda_{\mathrm{warm}}$ and $\lambda_{\mathrm{peak}}$ as specified in App.~\ref{app:budget}
  \EndIf
  \State Compute $\lambda_u$ by Eq.~\eqref{eq:lambda}; set $P_{q,i}\gets\min(\lambda_uS_{q,i},\eta\dR)$ for every trajectory
  \State Compute $h_{q,i,t}\gets\sum_{e\in\calV_{q,i,t}}w_{k(e)}$ and its boosted form $\widetilde h_{q,i,t}$ by Eq.~\eqref{eq:hmap}
  \State Allocate $P_{q,i}$ using Eq.~\eqref{eq:redistribution-full}, including its zero-weight fallback, to obtain $\delta_{q,i,t}$
  \Statex \hspace{\algorithmicindent}\textit{3. Dual-channel policy update}
  \State Form the task groups $\Omega_q$ and context-based anchor groups $\calH(o_{q,i,t})$ (App.~\ref{app:adv})
  \State Compute $\Zenv_{q,i,t}$ from environment rewards, then $E_{q,i,t}$ and $\Zt_{q,i,t}$ by Eq.~\eqref{eq:adv-local-scores}
  \State Centre both channels and combine $A_{q,i,t}\gets\Aep_{q,i,t}+\omega\Astep_{q,i,t}$ by Eq.~\eqref{eq:adv-dual-full}
  \State Broadcast $A_{q,i,t}$ to its response tokens and update $\theta$ using Eq.~\eqref{eq:actor-main} (App.~\ref{app:adv})
  \Statex \hspace{\algorithmicindent}\textit{4. Price refitting for the next batch}
  \State Append non-imputed records $(q,\bx_{q,i},y_{q,i})$ to $\calW$; use $z_{q,i}$ for graded outcomes (App.~\ref{app:estimation})
  \State Preserve rollout-group identities in $\calW$ and retain only the last $W$ updates
  \State $\bw^{(u+1)}\gets\bw^{(u)}$ \Comment{retain prices unless this refit is accepted}
  \If{$u\ge u_0$, $(u-u_0)\bmod\Delta u=0$, and the refit checks of App.~\ref{app:online} hold}
    \State Set $\bref\gets\bzeroinit$ and $\rho\gets\rho_u$; fit $\bhat(\bzeroinit)$ on the supported classes in $\calW$
    \Statex \hspace{\algorithmicindent}\hspace{\algorithmicindent}Binary: Eq.~\eqref{eq:pricing-full}, initialized at $\bzeroinit$; graded: Eq.~\eqref{eq:ridge-price}
    \If{the current fit is accepted (App.~\ref{app:online})}
      \State $v\gets v+1$; $\bhatv{v}\gets\bhat(\bzeroinit)$; let $\calK_v$ be the fitted classes
      \State Update $\bv$ from $\bvm$ and $\bhatv{v}$ by the sign-gated EMA in Eq.~\eqref{eq:signgate}
      \State Set $\bw^{(u+1)}$ from $\bv$ using Eq.~\eqref{eq:prices} with $\sigma_k=1$
    \EndIf
  \EndIf
\EndFor
\end{algorithmic}
\end{algorithm}

\paragraphhd{Implementation notes.}
The reported runs have the following differences from the algorithmic specification.
With exactly two parsable diagnoses, the trajectory gate accepts a one-to-one tie; a finite fit with a false solver-success flag is logged but not rejected.
The schedule strengths are read from the configuration after being set from the measured $\Tbar$.
Earlier development runs only warned when the penalty cap was exceeded; the specified algorithm enforces it.

\section{Additional Experimental Setup}
\label{app:exp}
This appendix expands the benchmarks, baselines, evaluation protocols and implementation of \S\ref{sec:exp-setting}.

\subsection{Benchmarks and Baselines}
\paragraphhd{Benchmarks.}
ALFWorld comprises six families of household tasks (Pick, Look, Clean, Heat, Cool and Pick2); we train on its standard training games and evaluate on the unseen split.
WebShop tests product search, inspection and purchase against natural-language requests.
Search-based QA follows Search-R1~\citep{jin2025searchr1} and draws questions from NQ~\citep{kwiatkowski2019nq}, TriviaQA~\citep{joshi2017triviaqa}, PopQA~\citep{mallen2023popqa}, HotpotQA~\citep{yang2018hotpotqa}, 2WikiMultiHopQA~\citep{ho2020twowiki}, MuSiQue~\citep{trivedi2022musique} and Bamboogle~\citep{press2023bamboogle}.

\paragraphhd{Baselines.}
Vanilla uses the instruction-tuned backbone without task-specific post-training.
GRPO and GiGPO start from that backbone; GiGPO follows its upstream recipe, including $150$ updates on ALFWorld.
SEED uses its own teacher-written hindsight-skill SFT data and skill-distillation objective.
\method{} starts from diagnosis SFT, and the Qwen3-4B-Instruct-2507 GRPO (diag-SFT) and GiGPO (diag-SFT) controls share this checkpoint with diagnostic shaping disabled but diagnosis monitoring retained.

\subsection{Evaluation Protocols}
All reported metrics are percentages, with higher values better; each result comes from one training run, so differences are point estimates rather than claims of statistical significance.

\paragraphhd{ALFWorld.}
The unseen split has $134$ tasks: Pick $24$, Look $18$, Clean $31$, Heat $23$, Cool $21$ and Pick2 $17$.
Each task is evaluated three times by a single evaluation worker, with greedy decoding (temperature $0$) and at most $30$ active steps per evaluation.
We average the three success indicators for each task, then average across all $134$ tasks, so task families contribute in proportion to their size.

\paragraphhd{WebShop.}
We use the small catalogue of $1{,}000$ products and $500$ held-out goals.
Evaluation runs $8$ batches of $32$ goals ($256$ episodes) with greedy decoding, at most $15$ steps and a two-step history window.
Goals are sampled without replacement within a batch but may recur across batches, so the strict success rate counts successful episodes, not distinct goals.
The task score is the environment's graded score, averaged over episodes and scaled to $[0,100]$; the strict success rate is the fraction of episodes ending in a purchase of score one.

\paragraphhd{Search-based QA.}
The agent gathers evidence with a search tool before answering questions from the seven datasets above.
We report exact-match accuracy for each dataset and their unweighted mean, rather than pooling all questions into one accuracy.

\subsection{Cold-Start Data and Initialization}
\label{app:training-details}
Each benchmark has its own frozen error taxonomy, diagnosis-SFT data and initial price fit, constructed as in Appendix~\ref{app:coldstart}.
We describe the data and benchmark-specific choices below; training hyper-parameters are collected in Appendix~\ref{app:hparams}.

\subsubsection{ALFWorld}
\label{app:inst}
The following cold-start setup describes Qwen3-4B-Instruct-2507.

\paragraphhd{Data and teacher diagnosis.}
We sample eight rollouts for each of $180$ training tasks from two actor checkpoints: the base model and a checkpoint after $20$ GRPO updates.
Each checkpoint contributes $1{,}440$ trajectories, giving $2{,}880$ in total.
GPT-5.6 Sol diagnoses them at temperature $0$.

\paragraphhd{Taxonomy construction.}
Using GPT-5.6 Sol for both induction passes, we construct a frozen taxonomy with eight L1 families and $25$ L2 mechanisms.
From the L2 mechanisms and eight L1 fallback categories, we retain classes with at least ten verified occurrences and frequency ranks in the top $60\%$ of candidates, yielding $K=20$ pricing classes.

\paragraphhd{Diagnosis-SFT data.}
We reserve $200$ trajectories, stratified by checkpoint, outcome and task type, for assessment.
The remaining $2{,}680$ examples, supplemented by a few action-replay examples from successful trajectories, train the diagnosis-SFT model with the settings in Table~\ref{tab:hparams}.

\paragraphhd{Initial price fit.}
The fit uses $99$ informative groups of eight rollouts from the same task and checkpoint, or $792$ trajectories, with the extended features of Appendix~\ref{app:price-init}.
Cold-start feature scales $\sigma_k$ are computed over all $2{,}880$ trajectories.
The resulting prices are reported in Table~\ref{tab:coldprices}.

\subsubsection{WebShop}
The recorded Qwen3-4B-Instruct-2507 cold-start collection samples $180$ tasks and eight rollouts per task from each selected early checkpoint, producing up to $4{,}320$ trajectories.
WebShop uses its own error taxonomy and diagnosis-SFT data.
Its graded training reward uses the within-group ridge estimator of Appendix~\ref{app:estimation}.
The two model sizes use different penalty strengths, listed in Table~\ref{tab:hparams}.

\subsubsection{Search-Based QA}
Search-based QA also uses a benchmark-specific taxonomy and diagnosis-SFT data.
Its SFT data combine diagnosis examples with task responses, including successful solving trajectories from Qwen3.8-27B~\citep{qwen2026qwen3827b} that contain at least one search.
The 1.7B SFT set contains $6{,}974$ examples: $4{,}179$ task-response examples and $2{,}795$ diagnosis examples.
The 4B set contains $6{,}498$ examples: $3{,}699$ task-response examples and $2{,}799$ diagnosis examples.

\subsection{Training Configuration and Hyper-parameters}
\label{app:hparams}
\label{app:inst-other}
Table~\ref{tab:hparams} lists the six \method{} configurations in Table~\ref{tab:main}, separating shared settings from benchmark and model-size differences.
The backbones are Qwen3-1.7B and Qwen3-4B-Instruct-2507~\citep{yang2025qwen3,qwen2025qwen34binstruct2507}; both use full-parameter diagnosis SFT, and the actor and diagnoser share weights throughout RL.

\begin{figure}[!htbp]
\prompttitle{Diagnoser system prompt \ (\texttt{DIAG\_SYSTEM})}
\begin{lstlisting}[style=promptstyle]
You are the trajectory diagnostician for an AI agent. You will read ONE complete
trajectory of the agent solving a task in {env_name}, together with the FINAL OUTCOME,
and identify the ERRORS the agent made, (*\pk{using ONLY the error table below}*).

Error table (use these ids only):
{table_text}

Rules:
1. Report at most {max_errors} errors, most important first. One entry per distinct
   error mechanism; a cascade after the first error counts once.
2. Every entry MUST cite the step number ((*\pk{1-based}*), exactly as shown as (*\pk{[Step k]}*)) and a
   (*\pk{VERBATIM quote}*) of at most {max_quote_words} words copied character-for-character from
   that step's text (the agent's own text or the environment feedback shown under that
   step). Never paraphrase, never quote from another step.
3. Use l1/l2 ids from the table. If you see a real error that fits no row, put it in
   (*\pid{"off\_table"}*) (label, step, quote, one-line mechanism) instead of forcing an id.
4. A SUCCESSFUL trajectory can still contain errors (e.g., wasted retries). A FAILED
   trajectory may have no identifiable table error other than running out of steps:
   (*\pk{do NOT invent errors}*).
5. severity_hint is your judgement (minor|major|fatal); it does not change the outcome
   you were told.
6. (*\pk{Output ONLY one JSON object}*), no prose, exactly this schema:
{schema_text}
\end{lstlisting}

\prompttitle{Output schema \ (\texttt{DIAG\_SCHEMA\_TEXT}, injected as \texttt{\{schema\_text\}})}
\begin{lstlisting}[style=promptstyle]
{"outcome_seen": "success" | "failure",
 (*\pid{"errors"}*): [{"l1": "E1", "l2": "E1.2", "step": 7, "quote": "...", "severity_hint": "major"}],
 (*\pid{"dominant"}*): {"l1": "E1", "step": 7} | null,
 (*\pid{"off\_table"}*): [{"label": "...", "step": 12, "quote": "...", "mechanism": "..."}]}
\end{lstlisting}
\caption{\textbf{The frozen-table diagnosis prompt.}
Each reported error needs a frozen-table id (or \texttt{off\_table}), a $1$-based step and a verbatim quote.
The user turn gives the task, a one-line environment description, the final outcome with its reward and the rendered trajectory; only the environment name and description and the table text depend on the benchmark.}
\label{fig:prompt-diag}
\end{figure}

\begingroup
\footnotesize
\setlength{\tabcolsep}{3pt}
\setlength{\LTpre}{6pt}
\setlength{\LTpost}{6pt}
\renewcommand{\arraystretch}{1.08}
\begin{longtable}{@{}>{\raggedright\arraybackslash}p{4.0cm}*{6}{>{\centering\arraybackslash}p{1.4cm}}@{}}
\caption{Key training and method hyper-parameters for the six \method{} runs. Shared values span model or benchmark columns.}
\label{tab:hparams}\\
\toprule
Setting & \multicolumn{2}{c}{ALFWorld} & \multicolumn{2}{c}{WebShop} & \multicolumn{2}{c}{Search-based QA}\\
\cmidrule(lr){2-3}\cmidrule(lr){4-5}\cmidrule(l){6-7}
 & 1.7B & 4B & 1.7B & 4B & 1.7B & 4B\\
\midrule
\endfirsthead
\multicolumn{7}{@{}l}{Table~\thetable\ (continued)}\\
\toprule
Setting & \multicolumn{2}{c}{ALFWorld} & \multicolumn{2}{c}{WebShop} & \multicolumn{2}{c}{Search-based QA}\\
\cmidrule(lr){2-3}\cmidrule(lr){4-5}\cmidrule(l){6-7}
 & 1.7B & 4B & 1.7B & 4B & 1.7B & 4B\\
\midrule
\endhead
\midrule
\multicolumn{7}{r@{}}{Continued on next page}\\
\endfoot
\bottomrule
\endlastfoot
\multicolumn{7}{@{}l}{\textit{Supervised initialization}}\\*
\midrule
SFT learning rate & \multicolumn{6}{>{\raggedright\arraybackslash}p{9.45cm}@{}}{$10^{-5}$}\\
SFT learning-rate schedule & \multicolumn{6}{>{\raggedright\arraybackslash}p{9.45cm}@{}}{$10\%$ warm-up; cosine decay}\\
SFT epochs / global batch & \multicolumn{2}{>{\centering\arraybackslash}p{3.01cm}}{$2$ / $32$} & \multicolumn{2}{>{\centering\arraybackslash}p{3.01cm}}{$7$ / $176$} & \multicolumn{2}{>{\centering\arraybackslash}p{3.01cm}@{}}{$2$ / $32$}\\
\midrule
\multicolumn{7}{@{}l}{\textit{Policy optimization and rollout generation}}\\*
\midrule
RL learning rate / updates & \multicolumn{6}{>{\raggedright\arraybackslash}p{9.45cm}@{}}{$10^{-6}$; $U=160$; one PPO epoch per update}\\
Policy clipping & \multicolumn{6}{>{\raggedright\arraybackslash}p{9.45cm}@{}}{Lower/upper clipping $0.2/0.2$; dual-clip parameter $3.0$}\\
Rollout sampling & \multicolumn{6}{>{\raggedright\arraybackslash}p{9.45cm}@{}}{$G=8$ trajectories per task; temperature $1.0$, top-$p=1$, no top-$k$ filtering; thinking disabled}\\
Response limit (tokens) & \multicolumn{6}{>{\raggedright\arraybackslash}p{9.45cm}@{}}{$512$}\\
Tasks / trajectories per update & \multicolumn{2}{>{\centering\arraybackslash}p{3.01cm}}{$16$ / $128$} & \multicolumn{2}{>{\centering\arraybackslash}p{3.01cm}}{$16$ / $128$} & \multicolumn{2}{>{\centering\arraybackslash}p{3.01cm}@{}}{$128$ / $1{,}024$}\\
PPO mini-batch & \multicolumn{2}{>{\centering\arraybackslash}p{3.01cm}}{$256$} & \multicolumn{2}{>{\centering\arraybackslash}p{3.01cm}}{$256$} & \multicolumn{2}{>{\centering\arraybackslash}p{3.01cm}@{}}{$512$}\\
RL learning-rate warm-up fraction & \multicolumn{2}{>{\centering\arraybackslash}p{3.01cm}}{$0$} & \multicolumn{2}{>{\centering\arraybackslash}p{3.01cm}}{$0$} & \multicolumn{2}{>{\centering\arraybackslash}p{3.01cm}@{}}{$0.1$}\\
KL coefficient & \multicolumn{2}{>{\centering\arraybackslash}p{3.01cm}}{$0.01$} & \multicolumn{2}{>{\centering\arraybackslash}p{3.01cm}}{$0.01$} & \multicolumn{2}{>{\centering\arraybackslash}p{3.01cm}@{}}{$0.001$}\\
Entropy coefficient & \multicolumn{2}{>{\centering\arraybackslash}p{3.01cm}}{$0.001$} & \multicolumn{2}{>{\centering\arraybackslash}p{3.01cm}}{$0.001$} & \multicolumn{2}{>{\centering\arraybackslash}p{3.01cm}@{}}{$0$}\\
Invalid-action penalty & \multicolumn{2}{>{\centering\arraybackslash}p{3.01cm}}{$0.1$} & \multicolumn{2}{>{\centering\arraybackslash}p{3.01cm}}{$0.1$} & \multicolumn{2}{>{\centering\arraybackslash}p{3.01cm}@{}}{$0.01$}\\
Environment reward & \multicolumn{2}{>{\centering\arraybackslash}p{3.01cm}}{$\{0,10\}$} & \multicolumn{2}{>{\centering\arraybackslash}p{3.01cm}}{$10\times$ matching score, in $[0,10]$} & \multicolumn{2}{>{\centering\arraybackslash}p{3.01cm}@{}}{$\{0,1\}$}\\
Discount factor & \multicolumn{6}{>{\raggedright\arraybackslash}p{9.45cm}@{}}{$\gamma=0.95$}\\
Step-channel weight $\omega$ & \multicolumn{2}{>{\centering\arraybackslash}p{3.01cm}}{$0.8$} & \multicolumn{2}{>{\centering\arraybackslash}p{3.01cm}}{$0.8$} & \multicolumn{2}{>{\centering\arraybackslash}p{3.01cm}@{}}{$0.9$}\\
Advantage normalization & \multicolumn{2}{>{\centering\arraybackslash}p{3.01cm}}{Mean centring} & \multicolumn{2}{>{\centering\arraybackslash}p{3.01cm}}{Mean centring} & \multicolumn{2}{>{\centering\arraybackslash}p{3.01cm}@{}}{Mean centring and standard-deviation scaling}\\
Anchor similarity threshold & \multicolumn{2}{>{\centering\arraybackslash}p{3.01cm}}{$0.95$} & \multicolumn{2}{>{\centering\arraybackslash}p{3.01cm}}{$0.95$} & \multicolumn{2}{>{\centering\arraybackslash}p{3.01cm}@{}}{$0.9$}\\
\midrule
\multicolumn{7}{@{}l}{\textit{Self-diagnosis and penalty allocation}}\\*
\midrule
Diagnosis sampling & \multicolumn{6}{>{\raggedright\arraybackslash}p{9.45cm}@{}}{$M=3$ per trajectory; temperature $0.7$, top-$p=1$}\\
Maximum errors per diagnosis & \multicolumn{6}{>{\raggedright\arraybackslash}p{9.45cm}@{}}{$5$}\\
Diagnosis agreement threshold & \multicolumn{6}{>{\raggedright\arraybackslash}p{9.45cm}@{}}{$\ge2$ parsable diagnoses; $\ge2$ votes per L1 hit}\\
Score features & \multicolumn{2}{>{\centering\arraybackslash}p{3.01cm}}{L2 error rates} & \multicolumn{2}{>{\centering\arraybackslash}p{3.01cm}}{L2 error rates} & \multicolumn{2}{>{\centering\arraybackslash}p{3.01cm}@{}}{L1 hit indicators}\\
First-error boost in score & \multicolumn{2}{>{\centering\arraybackslash}p{3.01cm}}{$b=1.5$} & \multicolumn{2}{>{\centering\arraybackslash}p{3.01cm}}{$b=1.5$} & \multicolumn{2}{>{\centering\arraybackslash}p{3.01cm}@{}}{Not used}\\
First-error boost in allocation & \multicolumn{6}{>{\raggedright\arraybackslash}p{9.45cm}@{}}{$b_{\mathrm{loc}}=1.5$ for all six runs}\\
Rule-detector allocation weights & \multicolumn{6}{>{\raggedright\arraybackslash}p{9.45cm}@{}}{Repeated failed action: $1$; invalid action: $0.5$; action cycle: $1$}\\
Warm-up strength $\lambda_{\mathrm{warm}}$ & $28.156$ & $26.5$ & $8.141$ & $12.37$ & $0.2$ & $0.2$\\
Peak strength $\lambda_{\mathrm{peak}}$ & $84.468$ & $79.5$ & $8.141$ & $37.11$ & $0.6$ & $0.6$\\
Penalty schedule & \multicolumn{6}{>{\raggedright\arraybackslash}p{9.45cm}@{}}{$U_{\mathrm w}=10$ warm-up updates; cosine schedule with final strength $0$ at update $160$}\\
Budget cap $\eta\dR$ ($\eta=0.9$) & \multicolumn{2}{>{\centering\arraybackslash}p{3.01cm}}{$9$} & \multicolumn{2}{>{\centering\arraybackslash}p{3.01cm}}{$9$} & \multicolumn{2}{>{\centering\arraybackslash}p{3.01cm}@{}}{$0.9$}\\
\midrule
\multicolumn{7}{@{}l}{\textit{Initial error-price estimation}}\\*
\midrule
Pricing classes $K$ & $24$ & $20$ & $32$ & $18$ & $10$ & $8$\\
Estimator & \multicolumn{2}{>{\centering\arraybackslash}p{3.01cm}}{Conditional logit with Firth correction} & \multicolumn{2}{>{\centering\arraybackslash}p{3.01cm}}{Within-group ridge} & \multicolumn{2}{>{\centering\arraybackslash}p{3.01cm}@{}}{Conditional logit with Firth correction}\\
Cold-start shrinkage $\rho$ & \multicolumn{2}{>{\centering\arraybackslash}p{3.01cm}}{$0.5$} & \multicolumn{2}{>{\centering\arraybackslash}p{3.01cm}}{$0.003$} & \multicolumn{2}{>{\centering\arraybackslash}p{3.01cm}@{}}{$0.5$}\\
Length covariate in initial fit & \multicolumn{2}{>{\centering\arraybackslash}p{3.01cm}}{$T_{q,i}/T_{\max}$} & \multicolumn{2}{>{\centering\arraybackslash}p{3.01cm}}{Included} & \multicolumn{2}{>{\centering\arraybackslash}p{3.01cm}@{}}{None}\\
Initial price scaling & \multicolumn{2}{>{\centering\arraybackslash}p{3.01cm}}{Feature standard deviation $\sigma_k$} & \multicolumn{2}{>{\centering\arraybackslash}p{3.01cm}}{Feature standard deviation $\sigma_k$} & \multicolumn{2}{>{\centering\arraybackslash}p{3.01cm}@{}}{No feature scaling}\\
Coefficient bounds & \multicolumn{2}{>{\centering\arraybackslash}p{3.01cm}}{$[-8,8]$} & \multicolumn{2}{>{\centering\arraybackslash}p{3.01cm}}{Unbounded} & \multicolumn{2}{>{\centering\arraybackslash}p{3.01cm}@{}}{$[-8,8]$}\\
\midrule
\multicolumn{7}{@{}l}{\textit{Online error-price estimation}}\\*
\midrule
Refit window / schedule & \multicolumn{6}{>{\raggedright\arraybackslash}p{9.45cm}@{}}{$W=50$ updates; first refit at $u_0=30$, then every $\Delta u=10$ updates}\\
Refit support & \multicolumn{6}{>{\raggedright\arraybackslash}p{9.45cm}@{}}{At least $30$ nonzero records per fitted class and $20$ informative groups; binary fits also require total successes in these groups $\ge10\times$ the number of fitted classes}\\
Online shrinkage & \multicolumn{6}{>{\raggedright\arraybackslash}p{9.45cm}@{}}{$\rho_u=1$ through update $20$; linear decay to $0.1$ at update $100$}\\
Coefficient EMA & \multicolumn{6}{>{\raggedright\arraybackslash}p{9.45cm}@{}}{$\nu=0.7$ (old); $1-\nu=0.3$ (new)}\\
Online price scaling & \multicolumn{6}{>{\raggedright\arraybackslash}p{9.45cm}@{}}{$\sigma_k=1$}\\

\end{longtable}
\endgroup

\subsection{Diagnosis Prompt}
\label{app:prompts}

The prompt sequence follows Appendix~\ref{app:coldstart}: the teacher first names errors without a taxonomy, an induction prompt builds and freezes the two-level table, and the diagnosis prompt then uses that table to label trajectories.
Only the last prompt runs during RL (Figure~\ref{fig:prompt-diag}).
A single rendering function supplies the same prompt format for diagnosis SFT, online self-diagnosis, acceptance tests and frozen-checkpoint evaluations.
This prompt serves the role of the \textsc{SEED} analyzer (their Figure~10), using a learned, verifiable error table in place of free-text analysis.
The teacher and induction prompts, one-line environment descriptions and actor prompts are released with our code; the ALFWorld actor prompt is taken unchanged from \textsc{SEED}'s Figure~11.

\FloatBarrier
\section{Additional Experimental Results}
\label{app:exp-results}
We examine signal coverage and strength, step-level credit allocation and cold-start data scale.
Unless stated otherwise, results use ALFWorld training rollouts from one run per method; GRPO (diag-SFT) and GiGPO (diag-SFT) share \method{}'s diagnosis-SFT initialization.
Apart from the cold-start scale study, these analyses describe diagnosis and training behavior; task performance is reported in Tables~\ref{tab:main} and~\ref{tab:ablation}.

\subsection{Learning Signal in Same-Outcome Groups}
\label{app:signal-figs}
We first define the measurements used in \S\ref{sec:analysis-signal}, then separate the algebraic effect of equal outcomes from the observed coverage.

\subsubsection{Coverage and Relative Signal Strength}
\paragraphhd{Groups and measured quantities.}
A group $g$ contains $G=8$ rollouts of one task at one update.
Each run has $16$ groups per update over $160$ updates.
Write $c(g)\in\{\mathrm{mix},\mathrm{succ},\mathrm{fail}\}$ for mixed, all-success and all-fail groups, respectively, and let $N_c$ be the number of groups of type $c$, with $N=\sum_cN_c$.
For coverage, the measured credit is
\begin{equation}
\kappa_s=
\begin{cases}
A_{q,i,t}, & \text{GRPO/GiGPO: step sample }s=(i,t),\\[2pt]
S_{q,i}, & \text{\method{}: trajectory sample }s=i.
\end{cases}
\label{eq:ruler-credit}
\end{equation}
Here $A_{q,i,t}$ is the optimizer advantage, whereas $S_{q,i}$ is the diagnostic score of Appendix~\ref{app:budget}, before multiplication by $\lambda_u$.
Thus \method{}'s coverage measures available diagnostic contrast, not the magnitude of its final policy advantage.

\paragraphhd{Coverage.}
Define the within-group range and the usable-signal indicator by
\begin{equation}
r_g=\max_{s\in g}\kappa_s-\min_{s\in g}\kappa_s,
\qquad
\mathrm{usable}(g)=\ind{r_g>\tau\bar r_{\mathrm{mix}}},
\label{eq:usable}
\end{equation}
where each method uses its own mixed-group mean:
\begin{align}
\bar r_{\mathrm{mix}}
&=\frac{1}{N_{\mathrm{mix}}}\sum_{g:\,c(g)=\mathrm{mix}}r_g,
\label{eq:ruler-norm}\\
\mathrm{Coverage}(\tau)
&=\frac{1}{N}\sum_g\ind{r_g>\tau\bar r_{\mathrm{mix}}}.
\label{eq:coverage}
\end{align}
The measured runs have $N_{\mathrm{mix}}>0$ and $\bar r_{\mathrm{mix}}>0$.
Mixed groups supply outcome contrast and set a within-method reference scale.
We use $\tau=0.05$; ``usable'' denotes a contrast above this threshold, not independently verified correctness or a guaranteed policy improvement.

\paragraphhd{Signal-retention Index.}
For signal strength, let $r_g^{\mathrm I}$ use the same range definition, but replace $S_{q,i}$ by the injected budget $P_{q,i}$ for \method{}; the baselines still use $A_{q,i,t}$.
In a $20$-update window $w$, let $N_{w,c}$ count groups of type $c$ and let $\bar r^{\mathrm I}_{w,c}$ be their mean range.
For a positive mixed-group mean, define
\begin{equation}
I_w=\sum_{c\in\{\mathrm{mix},\mathrm{succ},\mathrm{fail}\}}
\frac{N_{w,c}}{N_w}\,
\frac{\bar r^{\mathrm I}_{w,c}}{\bar r^{\mathrm I}_{w,\mathrm{mix}}},
\qquad N_w=\sum_cN_{w,c}.
\label{eq:signal-index}
\end{equation}
An absent group type contributes zero.
Each term combines a type's frequency with its contrast relative to mixed groups; the mixed contribution is exactly its frequency.
Figure~\ref{fig:signal_index_curve} normalizes each window separately.
The run-pooled Index is recomputed from all groups, rather than averaged across windows.
Because \method{} uses trajectory budgets and the baselines use step advantages, this is a comparison of contrast proxies, not a common-scale measure of optimizer credit.
Redistribution and centring can change how budget differences enter the update, so the proxy alone does not provide an upper bound on advantage contrast.

\subsubsection{Credit under Equal Terminal Rewards}
ALFWorld gives terminal reward $\Rmax=10$ for success and $\Rmin=0$ for failure.
The following result isolates terminal rewards and diagnostic penalties; it omits the shared invalid-action penalty and other reward terms (Appendices~\ref{app:adv} and~\ref{app:case}).
For a fixed task $q$, use the step set $\Omega_q$ and anchor groups of Appendix~\ref{app:adv}.
For any non-empty comparison set $\mathcal G\subseteq\Omega_q$, write
$\bar\delta_{\mathcal G}=|\mathcal G|^{-1}\sum_{(j,r)\in\mathcal G}\delta_{q,j,r}$.
For $\calH=\calH(o_{q,i,t})$, define the centred discount factor
\begin{equation}
\psi_{q,i,t}=\gamma^{T_{q,i}-t}
-\frac{1}{|\calH|}\sum_{(j,r)\in\calH}\gamma^{T_{q,j}-r}.
\label{eq:psi}
\end{equation}

\begin{proposition}[Credit in same-outcome groups]
\label{prop:same-outcome}
Fix a rollout group with purely terminal rewards $\Rout_{q,i}=\bar R$ for all $i$, discount $0<\gamma<1$, and step-channel weight $\omega\ge0$.
Assume that the non-empty anchor groups form a fixed partition of $\Omega_q$, and that normalization scales are finite and strictly positive.
Omitting additional reward terms, the following hold.
\begin{enumerate}[label=(\roman*),leftmargin=*]
\item GRPO and the GiGPO episode channel have zero outcome advantage:
\begin{equation}
A^{\mathrm{GRPO}}_{q,i}=A^{\mathrm{GiGPO},\mathrm{ep}}_{q,i}=0.
\label{eq:prop2-grpo}
\end{equation}
\item Within an anchor group $\calH$, GiGPO's step advantage is
\begin{equation}
A^{\mathrm{GiGPO},\mathrm{step}}_{q,i,t}
=\frac{\bar R}{\xi_{\calH}}\,\psi_{q,i,t},
\qquad \xi_{\calH}>0,
\label{eq:prop2-gigpo}
\end{equation}
where $\xi_{\calH}$ is the group's common normalization scale.
It vanishes everywhere when $\bar R=0$.
For $\bar R\ne0$, it vanishes throughout $\calH$ if and only if every member has the same remaining length $T_{q,j}-r$.
\item \method{} has the channel and joint advantages
\begin{equation}
\begin{aligned}
\Aep_{q,i,t}&=-\big(\delta_{q,i,t}-\bar\delta_{\Omega_q}\big),\\
\Astep_{q,i,t}&=\bar R\,\psi_{q,i,t}-\big(\delta_{q,i,t}-\bar\delta_{\calH}\big),\\
A_{q,i,t}&=\omega\bar R\,\psi_{q,i,t}
-\big(\delta_{q,i,t}-\bar\delta_{\Omega_q}\big)
-\omega\big(\delta_{q,i,t}-\bar\delta_{\calH}\big).
\end{aligned}
\label{eq:prop2-dtcr}
\end{equation}
When $\bar R=0$, the joint advantage is zero at every sample if and only if $\delta_{q,i,t}$ is constant over $\Omega_q$.
\end{enumerate}
\end{proposition}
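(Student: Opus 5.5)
The proof is a direct substitution of the equal-reward hypothesis into the advantage definitions, taken part by part; each claim reduces to linear algebra of group centering.

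For (i), recall that GRPO forms $A^{\mathrm{GRPO}}_{q,i}=(\Rout_{q,i}-\mathrm{mean}_j\Rout_{q,j})/\xi$ with a finite positive scale $\xi$ (or $\xi=1$ without scaling). The GiGPO episode channel is the same centering, applied over the task group. With $\Rout_{q,j}=\bar R$ for all $j$, the numerator is $\bar R-\bar R=0$. The positivity and finiteness assumption on the scale ensures no $0/0$ arises, so both advantages are exactly zero.

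For (ii), with purely terminal rewards $\Zenv_{q,j,r}=\gamma^{T_{q,j}-r}\bar R$. GiGPO's step advantage is therefore
\[
\Bigl(\gamma^{T_{q,i}-t}\bar R-|\calH|^{-1}\sum_{(j,r)\in\calH}\gamma^{T_{q,j}-r}\bar R\Bigr)\Big/\xi_{\calH}=\bar R\,\psi_{q,i,t}/\xi_{\calH},
\]
which is \eqref{eq:prop2-gigpo}. It vanishes when $\bar R=0$. For $\bar R\neq0$ and $\xi_{\calH}>0$, vanishing throughout $\calH$ is equivalent to $\psi\equiv0$ on $\calH$, that is, every $\gamma^{T_{q,j}-r}$ equals its group mean. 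Since $0<\gamma<1$, the map $m\mapsto\gamma^m$ is strictly monotone and hence injective, so equal discount factors are equivalent to equal remaining lengths $T_{q,j}-r$. The converse direction is immediate.

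For (iii), substitute $\Rout_{q,i}=\bar R$ into \eqref{eq:aep}. The constant $\bar R$ cancels in the centering, leaving $\Aep=-(\delta_{q,i,t}-\bar\delta_{\Omega_q})$. In \eqref{eq:astep}, the return term centers to $\bar R\psi_{q,i,t}$ and the penalty term centers to $-(\delta_{q,i,t}-\bar\delta_{\calH})$. Adding $\omega$ times the second expression to the first gives \eqref{eq:prop2-dtcr}. This is also \eqref{eq:adv-diagnostic-shift} with $A^{(0)}=\omega\bar R\psi$.

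The only step needing care is the final equivalence for $\bar R=0$, where
\[
A_{q,i,t}=-(1+\omega)\delta_{q,i,t}+\bar\delta_{\Omega_q}+\omega\bar\delta_{\calH}.
\]
The \emph{if} direction is immediate: a constant $\delta$ equals both of its means. For \emph{only if}, suppose $A\equiv0$ and fix an anchor group $\calH$ of the partition. Averaging $A$ over $\calH$ gives $-(1+\omega)\bar\delta_{\calH}+\bar\delta_{\Omega_q}+\omega\bar\delta_{\calH}=\bar\delta_{\Omega_q}-\bar\delta_{\calH}=0$, so every group mean equals the global mean. The partition assumption is what makes each sample's $\calH$ well defined, so this averaging over a group is legitimate. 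Substituting back, $(1+\omega)(\delta_{q,i,t}-\bar\delta_{\Omega_q})=0$, and since $1+\omega>0$ this gives $\delta_{q,i,t}=\bar\delta_{\Omega_q}$ at every sample, so $\delta$ is constant over $\Omega_q$.
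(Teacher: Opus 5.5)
Your proposal is correct and follows the paper's proof essentially step for step. For (i) and (ii) you substitute the common reward directly, and you use the strict monotonicity of $d\mapsto\gamma^{d}$ to get the equal-remaining-length equivalence; for the $\bar R=0$ case of (iii) you use the same device as the paper, averaging the joint advantage over each anchor group of the partition to obtain $\bar\delta_{\calH}=\bar\delta_{\Omega_q}$ and then invoking $1+\omega>0$.
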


\begin{proof}
For (i), let $\mu_q=G^{-1}\sum_i\Rout_{q,i}$ and
$\sigma_q=\sd(\Rout_{q,1},\ldots,\Rout_{q,G})$.
Equal rewards give $\mu_q=\bar R$ and $\sigma_q=0$, hence
\[
A^{\mathrm{GRPO}}_{q,i}
=\frac{\Rout_{q,i}-\mu_q}{\sigma_q+\zeta}
=\frac{\bar R-\bar R}{\zeta}=0,
\qquad \zeta>0.
\]
The GiGPO episode channel uses the same centred outcome.
For (ii), purely terminal rewards give
\[
Z_{q,i,t}=\sum_{r=t}^{T_{q,i}}\gamma^{r-t}R^{\mathrm{env}}_{q,i,r}
=\bar R\gamma^{T_{q,i}-t},
\qquad
\frac{Z_{q,i,t}-\bar Z_{\calH}}{\xi_{\calH}}
=\frac{\bar R}{\xi_{\calH}}\psi_{q,i,t},
\]
where $\bar Z_{\calH}$ is the anchor-group mean.
For $\bar R\ne0$, all centred returns vanish exactly when all discount factors agree.
Since $d\mapsto\gamma^d$ is strictly decreasing for $0<\gamma<1$, this is equivalent to equal remaining lengths.

For (iii), substituting the common reward into Eqs.~\eqref{eq:aep}--\eqref{eq:astep} gives
$E_{q,i,t}=\bar R-\delta_{q,i,t}$ and
$\Zt_{q,i,t}=\bar R\gamma^{T_{q,i}-t}-\delta_{q,i,t}$.
Centring these scores over $\Omega_q$ and $\calH$, respectively, and adding the channels gives Eq.~\eqref{eq:prop2-dtcr}.
When $\bar R=0$, write $m=\bar\delta_{\Omega_q}$ and $m_{\calH}=\bar\delta_{\calH}$, so
\[
A_{q,i,t}=-(m_{\calH}-m)-(1+\omega)(\delta_{q,i,t}-m_{\calH}).
\]
If all advantages are zero, averaging within each anchor group gives $m_{\calH}=m$; since $1+\omega>0$, every share then equals $m$.
The converse follows by substitution.
\end{proof}

The result concerns advantage contrast, not semantic correctness or a guaranteed gradient improvement.
A singleton anchor group has zero step-channel advantage for both GiGPO and \method{}, although \method{}'s episode channel can remain non-zero.
Equal outcomes do not force diagnostic contrast to vanish, but uniform shares do; different error labels need not yield different shares.
In all-success groups, discount and diagnostic terms can also cancel.

\subsubsection{Observed Coverage and Strength}
\paragraphhd{Coverage and its source.}
At $\tau=0.05$, every mixed group is usable for all three methods, so coverage differences come from same-outcome groups.
GRPO's residual format contribution never exceeds the threshold in these groups.
GiGPO can distinguish all-success samples at different remaining distances from a shared anchor context to success, favoring shorter paths.
Its overall coverage falls below GRPO's near $\tau=0.25$ and approaches its mixed-group share at larger thresholds.
Its all-fail groups have no usable contrast.
Under the diagnostic-score measurement, \method{} covers every all-fail group and $90\%$ of all-success groups.
Of the $125$ all-success groups below threshold, $53$ have zero scores in all eight rollouts.
These are empirical coverage results, not guarantees implied by different diagnoses.
Figure~\ref{fig:signal_over_training} expands Figure~\ref{fig:teaser}(a) over training windows; Figure~\ref{fig:signal_tau_index}(a) varies the threshold.

\paragraphhd{Available versus applied contrast.}
Replacing $S_{q,i}$ by $P_{q,i}$ in the coverage calculation reduces \method{}'s run-wide coverage from $95.1\%$ to $84.6\%$.
The two measurements differ by at most two percentage points through the first $120$ updates.
In the last $20$ updates, as $\lambda_u$ approaches zero, only $20\%$ of groups retain a budget range above threshold.
Thus diagnosis can still distinguish trajectories while the applied penalty becomes small.

\paragraphhd{Interpreting the Index.}
Figure~\ref{fig:signal_tau_index}(b,c) gives the pooled Index and its same-outcome contributions; Figure~\ref{fig:signal_index_curve} gives the windowed values.
\method{}'s pooled $0.821$ is a budget-based proxy, subject to the measurement distinction above.
For sensitivity only, hold group frequencies and same-outcome mean ranges fixed while multiplying the mixed-group mean range by a hypothetical factor $k\in[2,3]$.
The Index then becomes approximately $0.372+0.449/k\in[0.52,0.60]$, still above the baselines' approximately $0.42$.
This is neither a measured advantage Index nor a proven bound.

\begin{figure}[t]
\centering
\includegraphics[width=\textwidth]{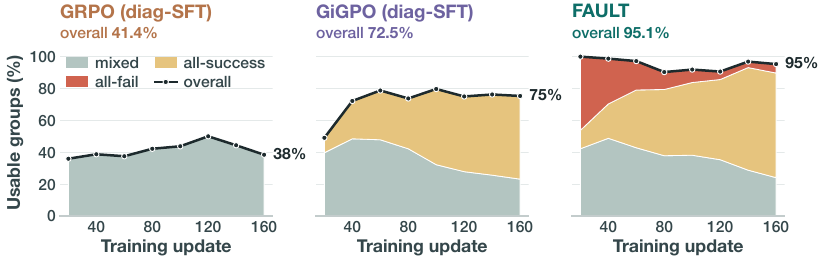}
\caption{\textbf{Usable signal over training.} Stacked shares of all groups with a usable range at $\tau=0.05$, measured in $20$-update windows at their final update. The dark line is total coverage; titles report run-wide values. Same-outcome groups (red and gold) dominate and grow, with \method{} retaining broad coverage of them. \method{} uses scores $S$ and the baselines use optimizer advantages. Single-seed runs; both baselines start from diagnosis SFT.}
\label{fig:signal_over_training}
\end{figure}

\begin{figure}[t]
\centering
\includegraphics[width=\textwidth]{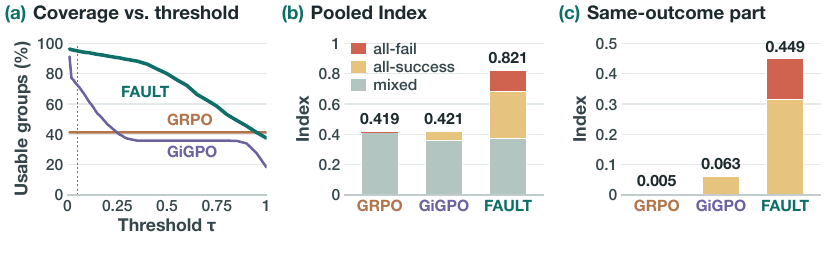}
\caption{\textbf{Threshold sensitivity and pooled signal strength.} (a) Coverage as $\tau$ increases, using each method's mixed-group mean; the dotted line marks $\tau=0.05$. (b) Pooled Index split by group type; the mixed contribution equals its group share. (c) Same-outcome contributions on a finer scale. Coverage uses $S$ for \method{}, whereas its Index uses $P$; baselines use advantages for both. These measure contrast, not independently verified correctness.}
\label{fig:signal_tau_index}
\end{figure}

\subsection{Penalties on Successful Trajectories}
\label{app:audit}
\paragraphhd{Measurement.}
Success certifies the terminal outcome but does not exempt a trajectory from diagnostic penalties.
Figure~\ref{fig:success_charged}(a) separates three rates: verified-error reports and L1-majority reports are fractions of valid diagnoses, whereas the charge rate is the fraction of successful trajectories with $P>0.001$ reward units.
Verification checks the cited quote and step; it is not an independent judgment of semantic correctness.

\paragraphhd{Results and the budget cap.}
Across training, $61.85\%$ of successes ($8{,}719/14{,}097$) receive a charge above this threshold, decreasing from $80.5\%$ in the first window to $52.2\%$ in the last.
Failures have a higher median budget than successes in every window (Figure~\ref{fig:success_charged}(b)).
The cap of Eq.~\eqref{eq:cap} preserves success--failure ordering in the diagnostic episode channel by Proposition~\ref{prop:order}.
It binds on $56/20{,}480$ trajectories ($0.27\%$, all successes) and never removes the entire diagnostic-score contrast of a complete group (Figure~\ref{fig:success_charged}(c)).

\begin{figure}[t]
\centering
\includegraphics[width=\textwidth]{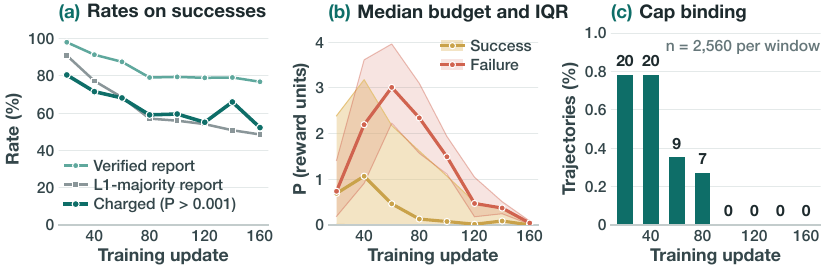}
\caption{\textbf{Diagnostic penalties on successful trajectories.} Each point summarizes $20$ updates. (a) Verified and L1-majority report rates among valid diagnoses, and charge rate ($P>0.001$) among all successes. (b) Median budget $P$ and interquartile range for successes and failures. (c) Fraction of the $2{,}560$ trajectories per window reaching the cap; labels give counts. Windows are placed at their final update. Diagnoser and prices evolve during training; single seed.}
\label{fig:success_charged}
\end{figure}

\subsection{Step-Level Localization: Measurement and Additional Views}
\label{app:localization}
This subsection defines the localization and concentration measures used in \S\ref{sec:analysis-local}, Figures~\ref{fig:teaser}(b) and~\ref{fig:localization}, and the additional views in Figure~\ref{fig:localization-extra}.

\subsubsection{Evaluation Protocol}
\paragraphhd{Step penalties.}
For a trajectory of $T=T_{q,i}$ steps, define the non-negative credit assigned against step $t$ as
\begin{equation}
b_t=
\begin{cases}
\delta_{q,i,t}, & \text{\method{}},\\[2pt]
\max(-A_{q,i,t},0), & \text{GRPO and GiGPO},
\end{cases}
\label{eq:blame}
\end{equation}
where $A_{q,i,t}$ is the optimizer advantage.
GRPO broadcasts its outcome advantage across steps; the additional invalid-action penalty makes $b_t$ constant or two-valued.

\paragraphhd{Blind reference labels.}
We use $300$ failed training trajectories of \method{}, $300$ from a same-configuration GiGPO rerun, and $48$ from GRPO, sampled evenly across eight $20$-update windows and from both all-fail and mixed groups.
Every failure reaches the $T=30$ step limit.
An independent LLM, Qwen3.8-Max~\citep{qwen2026qwen38max}, sees only the action and feedback text, without method names or penalties, and selects the earliest decisive step $j$ whose correction it judges most likely to turn failure into success.
These are model judgments, not human or counterfactually verified labels.
A blind relabel of $48$ trajectories agrees within one step in $75\%$ of cases.

\subsubsection{Localization Metrics and Results}
Rank steps by decreasing $b_t$, breaking ties uniformly at random.
Allowing one step of error around the judge's label, define
\begin{equation}
\mathcal J=\{t\in\{1,\ldots,T\}:|t-j|\le1\},
\qquad
\mathrm{rank}^{\star}=\min_{t\in\mathcal J}\mathrm{rank}(t).
\label{eq:judge-rank}
\end{equation}
The ranking and signed-offset metrics are
\begin{equation}
\mathrm{MRR}=\E\!\left[\frac{1}{\mathrm{rank}^{\star}}\right],
\qquad
\mathrm{hit@}k=\Pr(\mathrm{rank}^{\star}\le k),
\qquad
\mathrm{off}=\min\!\left(\argmax_t b_t\right)-j.
\label{eq:rank-scores}
\end{equation}
MRR and hit@$k$ average over trajectories and $20{,}000$ random tie-breaks per trajectory; the offset instead uses the first maximizer.
Positive offsets place the maximum penalty after the judged step.
Constant $b_t$ gives the uniform random-ranking reference.

\paragraphhd{Localization results.}
\method{} achieves MRR $0.494$, hit@$1$ $34.1\%$, and $82$ exact step matches; GiGPO achieves $0.303$, $11.9\%$, and $21$ exact matches.
On its $48$ trajectories, GRPO achieves MRR $0.306$ and hit@$1$ $15.3\%$, compared with $0.266$ and $9.9\%$ for random ranking.
Figure~\ref{fig:teaser}(b) places GRPO at the random reference for its uniform outcome credit; the measured value includes the invalid-action penalty, which sometimes marks the decisive step.
In $73\%$ of GRPO trajectories, the maximum penalty is tied, so its offset distribution is omitted from Figure~\ref{fig:localization}(a).
GiGPO's median offset is $+5$ steps.
Figure~\ref{fig:localization-extra}(a) extends hit@$k$ to $k=1,\ldots,12$: \method{} leads most at small $k$, while the curves approach each other as $k$ grows.
Even random nomination of $10$ out of $30$ steps covers the tolerance window about $72\%$ of the time.

\subsubsection{Concentration of Step Credit}
For \method{}, set $c_t=\delta_{q,i,t}$; for the baselines, set $c_t=A_{q,i,t}$.
For $T\ge2$ and $\sum_t|c_t|>0$, define
\begin{equation}
w_t=\frac{|c_t|}{\sum_{r=1}^{T}|c_r|},
\qquad
H^*=-\frac{1}{\ln T}\sum_{t:w_t>0}w_t\ln w_t,
\qquad
\mathrm{ENS}=\left(\sum_{t=1}^{T}w_t^2\right)^{-1}.
\label{eq:concentration}
\end{equation}
The normalized entropy $H^*\in[0,1]$ is zero for concentration on one step and one for a uniform allocation.
The effective number of steps, ENS, is the inverse Herfindahl index: it equals $1$ and $T$ in these two cases.
For \method{} these statistics describe the diagnostic allocation, not its full advantage, which also depends on returns and comparison-group baselines; the baseline statistics use absolute optimizer advantages.

\paragraphhd{Excluded trajectories and training trends.}
Zero-total-credit trajectories have no normalized distribution and are excluded.
They account for $31.4\%$ of \method{}'s same-outcome trajectories, mainly because no verified error with a non-zero price was found, versus $7.9\%$ for GRPO and $0.1\%$ for GiGPO.
In same-outcome groups, GRPO has at most two step-advantage values separated by $0.1$, reflecting the invalid-action penalty.
Figure~\ref{fig:localization-extra}(b) tracks mean ENS in all-fail groups, where $T=30$ and a uniform allocation has ENS $30$.
Over training, ENS falls from $26.5$ to $15.6$ for GRPO, $24.0$ to $11.3$ for GiGPO, and $11.3$ to $3.3$ for \method{}.
The decline for \method{} partly reflects fewer diagnosed errors as penalized behaviors recede (Appendix~\ref{app:dose-response}); concentration alone does not establish more accurate localization.

\begin{figure}[t]
\centering
\includegraphics[width=\textwidth]{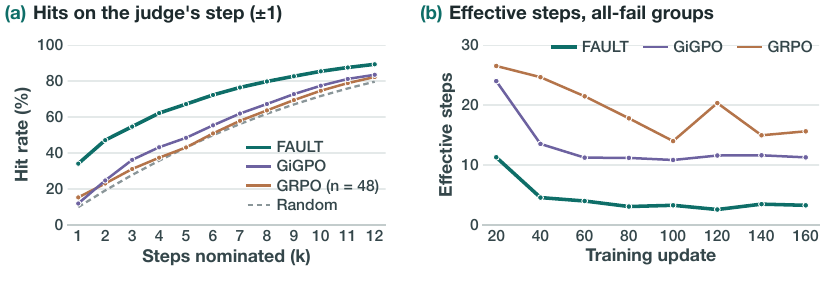}
\caption{\textbf{Additional localization and concentration results.} (a) Hit@$k$ within one step of the blind judge's label, using the trajectories of Figure~\ref{fig:localization}(a) and $48$ GRPO trajectories; the dashed curve is random ranking. (b) Mean ENS on all-fail groups in $20$-update windows, placed at their final update. Both baselines start from diagnosis SFT.}
\label{fig:localization-extra}
\end{figure}

\subsection{Penalties on Wasted Steps}
\label{app:mech-probe}
\paragraphhd{Step classes and measurement.}
We use two mechanically identified classes: invalid or missing actions, and format-valid no-ops whose next feedback is ``Nothing happens.''
Invalid actions take precedence; the last step cannot be labeled a no-op without a subsequent observation.
All failed trajectories reach the $30$-step limit, so these steps consume the interaction budget without recorded progress.
The labels identify wasted interactions, not whether correcting one step would have changed the outcome.
For the steps $\mathcal B$ of one class in a trajectory, its targeting ratio is
\[
\frac{\sum_{t\in\mathcal B}b_t/\sum_{t=1}^{T}b_t}{|\mathcal B|/T},
\]
using Eq.~\eqref{eq:blame}.
Ratios and top-1 rates use failed trajectories with positive total negative credit and class counts strictly between $0$ and $T$; eligibility therefore depends on the method and class.
We average the ratio equally over these trajectories, with one denoting uniform allocation.
Top-1 hits average uniformly over tied maximizers of $b_t$.

\paragraphhd{Invalid actions.}
Among eligible trajectories, both baselines achieve $100\%$ top-1 targeting of invalid actions in all-fail groups; over the pooled all-fail and mixed-group sample, GRPO reaches $100\%$ compared with $48.6\%$ for \method{}.
The shared invalid-action penalty directly marks these steps, so this result alone does not demonstrate diagnostic understanding.
In the recorded recipe, an invalid action receives $-0.1$ and a valid action receives zero before advantage construction (Appendix~\ref{app:case}).

\paragraphhd{Format-valid no-ops.}
\method{} assigns $3.67\times$ the uniform share of negative credit to no-ops, compared with $0.48\times$ for GRPO and $0.57\times$ for GiGPO (Figure~\ref{fig:mech_gold}).
The baselines' no-op top-1 hit rates are $0.1\%$ and $4.3\%$, below their uniform references of $7.2\%$ and $7.7\%$.
In the analysed all-fail groups, their no-op negative credit is zero, although a valid step can still have a non-negative advantage after centring.
Thus \method{} targets a class of wasted steps not directly flagged by the invalid-action reward; the contrast goes beyond locating format violations.

\begin{figure}[t]
\centering
\includegraphics[width=\textwidth]{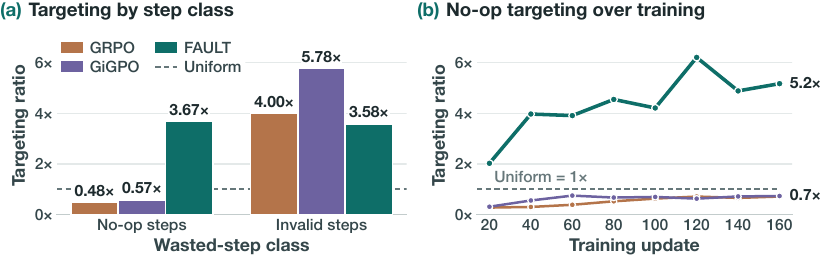}
\caption{\textbf{Negative-credit allocation to wasted steps.} (a) Mean targeting ratio for invalid actions and format-valid no-ops under shared mechanical labels; $1\times$ is uniform allocation. (b) No-op targeting over training. Credit is $\delta_t$ for \method{} and $\max(-A_t,0)$ for the baselines. All analysed failures reach the step limit. Baselines: GRPO (diag-SFT) and GiGPO (diag-SFT); offline analysis of single-seed runs.}
\label{fig:mech_gold}
\end{figure}

\subsection{Cold-Start Data Scale}
\label{app:scale}
\paragraphhd{Cold-start scale protocol.}
Table~\ref{tab:scale} compares three cold-start pool sizes with Qwen3-4B-Instruct-2507.
The ALFWorld pools are nested, adding $1{,}440$ trajectories each from the base model, a checkpoint after $20$ GRPO updates and a checkpoint after $40$ updates; pool size and policy composition therefore vary together.
Diagnosis F1 is evaluated on $600$ trajectories from $25$ unseen ALFWorld tasks using the same $17$-class reference set.
The number of non-zero prices counts pricing classes with a negative cold-start coefficient.
Pre-RL success is measured with sampled decoding at temperature $0.4$; the raw backbone scores $17.9$.
The ALFWorld RL runs are evaluated after $160$ updates using an earlier protocol: $134$ draws sampled with replacement from the $134$ unseen tasks, with greedy decoding for each draw.
Their success rates therefore count successful draws, rather than distinct tasks.
The three WebShop runs use the earlier binary-reward recipe and report post-RL task score and strict success rate.

\begin{table}[!htbp]
\centering
\footnotesize
\caption{Cold-start data scale with Qwen3-4B-Instruct-2507. ALFWorld reports named-class diagnosis F1 of the SFT model, non-zero initial prices, pre-RL SFT success rate and success rate after $160$ RL updates. WebShop reports post-RL task score and strict success rate. Evaluation protocols are specified above.}
\label{tab:scale}
\begin{tabular}{@{}l cccc cc@{}}
\toprule
& \multicolumn{4}{c}{ALFWorld} & \multicolumn{2}{c}{WebShop} \\
\cmidrule(lr){2-5}\cmidrule(lr){6-7}
Cold-start pool & Diag.\ F1 & Non-zero prices & SFT SR & RL SR (draws) & Score & Succ. \\
\midrule
$1{,}440$ & $0.617$ & $6/20$ & $24.6$ & $74.6$ & $73.8$ & $60.2$ \\
$2{,}880$ & $0.648$ & $8/20$ & $31.3$ & $82.1$ & $85.6$ & $67.6$ \\
$4{,}320$ & $0.668$ & $3/20$ & $30.6$ & $83.6$ & $82.3$ & $70.3$ \\
\bottomrule
\end{tabular}
\end{table}

\section{Training Dynamics}
\label{app:dynamics}
We examine training trajectory length, the frequency of penalized behaviors, and the evolution of error prices on ALFWorld.
These analyses use training rollouts and fitted coefficients, rather than additional test evaluations.
The frozen-price variant is the w/o online pricing variant in Table~\ref{tab:ablation}.

\subsection{Training Trajectory Length}
\label{app:length-dynamics}

\paragraphhd{Measurement.}
Trajectory length $T_{q,i}$ counts active environment interactions, including invalid actions but excluding padding; it measures neither token count nor elapsed time.
Figure~\ref{fig:traj_length} reports the mean over successful and failed trajectories at each update; Table~\ref{tab:length-windows} averages these update means over the final $5$, $10$, $20$ or $40$ updates.
The four runs are raw-initialized GRPO, SEED, the frozen-price variant and the full method, all observed through update $160$ with a $30$-step horizon.
Equal update counts do not imply equal numbers of interactions or matched training configurations.

\begin{table}[!htbp]
\centering
\footnotesize
\setlength{\tabcolsep}{4pt}
\caption{Mean training trajectory length over four windows ending at update $160$, the last update shared by all runs. Each value equally averages the per-update mean number of active interactions. All runs use a $30$-step horizon; these are single-run training summaries, not test-time estimates or confidence intervals.}
\label{tab:length-windows}
\begin{tabular}{@{}lrrrr@{}}
\toprule
Updates & \makecell{GRPO\\raw-init} & SEED & \method{} & \makecell{w/o online\\pricing} \\
\midrule
156--160 & 21.89 & 14.54 & 11.43 & 12.03 \\
151--160 & 21.69 & 13.77 & 12.03 & 11.38 \\
141--160 & 22.09 & 14.34 & 12.56 & 11.70 \\
121--160 & 22.21 & 14.38 & 12.64 & 12.00 \\
\bottomrule
\end{tabular}
\end{table}

\paragraphhd{Results across windows.}
The full method produces shorter trajectories than GRPO and SEED in all four windows.
The frozen-price variant, which also removes the cap, is shorter than the full method by about $0.6$--$0.9$ steps in three windows; the ordering reverses by about $0.6$ steps in the final five updates.
It has the shortest mean over updates $151$--$160$, but it solves fewer tasks in the separate greedy evaluation ($120$ versus $122$ of $134$).
Thus, shorter training trajectories need not correspond to higher task success.
These single-run comparisons show sensitivity to the summary window, not statistical significance or the effect of online pricing alone.

\paragraphhd{Interpreting mean length.}
A lower mean can reflect shorter successful paths or more successes replacing horizon-truncated failures.
These aggregate summaries do not separate lengths by outcome, so they cannot distinguish the two effects.
The curves therefore describe training behavior without establishing that diagnostic redistribution alone improves path efficiency.

\subsection{Penalized Behaviors over Training}
\label{app:dose-response}

\paragraphhd{Measurement.}
We track invalid or missing actions and format-valid no-ops using the mechanical labels of Appendix~\ref{app:mech-probe}.
In each $20$-update window, Figure~\ref{fig:dose_response}(a) aggregates \method{}'s diagnostic penalty mass by class, while panels~(b,c) divide each class's step count by the total recorded step count.
The comparisons include \method{}, GRPO (diag-SFT) with the same initialization, and SEED as a separate reference; GiGPO lacks the rollout text needed for this analysis.

\begin{figure}[!htbp]
\centering
\includegraphics[width=\textwidth]{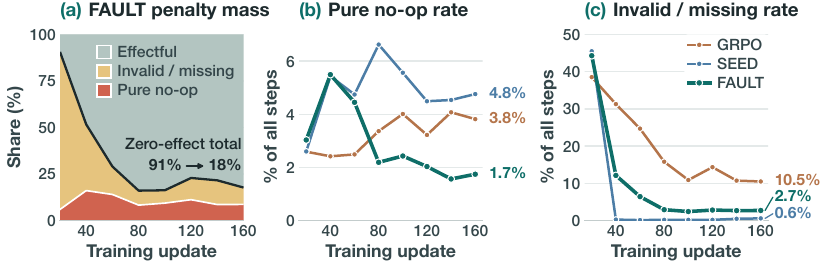}
\caption{\textbf{Penalty allocation and behavior frequencies during training.} Points summarize $20$-update windows and are placed at the final update. (a) Share of \method{}'s diagnostic penalty mass assigned to each mechanical step class. (b) Format-valid no-op rate. (c) Invalid or missing action rate. Rates use all recorded steps as the denominator; the remaining-step category in~(a), labeled effectful, includes steps without subsequent feedback. GRPO starts from diagnosis SFT; SEED is a reference with a different training mechanism. GiGPO is omitted because its original logs lack rollout text. Single-seed observational comparisons.}
\label{fig:dose_response}
\end{figure}

\paragraphhd{Observed changes.}
The combined share of \method{}'s penalty mass on invalid actions and no-ops falls from $91\%$ to $18\%$ as these behaviors become less frequent.
Invalid actions, which receive a shared format penalty, decline under all three methods.
Format-valid no-ops are not directly identified by this penalty: their rate falls from $3.0\%$ to $1.7\%$ under \method{}, but rises from $2.6\%$ to $3.8\%$ under GRPO and from $2.6\%$ to $4.8\%$ under SEED.
Together with the allocation results in Appendix~\ref{app:mech-probe}, this pattern is consistent with diagnostic penalties targeting behaviors beyond invalid output.
The comparison is observational: panel~(a) measures only \method{}'s penalty allocation, and these single-seed trends do not isolate a causal effect of the penalties.

\subsection{Initial Prices and Online Price Dynamics}
\label{app:prices-move}
\label{app:beta-dynamics}
\paragraphhd{Cold-start prices.}
Table~\ref{tab:coldprices} lists the initial prices for ALFWorld with Qwen3-4B-Instruct-2507 (Appendix~\ref{app:inst}).
Eight of the $20$ error coefficients are negative and receive non-zero prices; the other twelve remain in the model with zero price.
Success is shorter in $1{,}085$ of $1{,}107$ within-task success--failure pairs and never longer.
The length coefficient reaches its lower bound, so initial error coefficients should be interpreted in light of this strong outcome--length association.

\begin{table}[!htbp]
\centering
\footnotesize
\caption{Cold-start prices on ALFWorld with Qwen3-4B-Instruct-2507. $\widetilde w_k=\max(-\beta_k,0)\,\sigma_k$ and $w_k=\widetilde w_k/\max_j\widetilde w_j$ (Eq.~\eqref{eq:prices}). The remaining 12 pricing classes have $\beta_k>0$ and $w_k=0$.}
\label{tab:coldprices}
\begin{tabular}{@{}lrrrr@{}}
\toprule
Pricing class & $\beta_k$ & $\sigma_k=\sd(x_{\cdot,k})$ & $\widetilde w_k$ & $w_k$ \\
\midrule
E7/\_other (non-executable output, other) & $-0.10791$ & $0.017713$ & $1.9115\times10^{-3}$ & $1.0000$ \\
E1.state\_requirement\_misread & $-0.07593$ & $0.015166$ & $1.1515\times10^{-3}$ & $0.6024$ \\
E2.failure\_cause\_misattribution & $-0.06719$ & $0.013516$ & $9.0816\times10^{-4}$ & $0.4751$ \\
E6/\_other (premature abandonment, other) & $-0.04978$ & $0.012275$ & $6.1109\times10^{-4}$ & $0.3197$ \\
E3.redundant\_state\_toggle\_repeat & $-0.02184$ & $0.010325$ & $2.2551\times10^{-4}$ & $0.1180$ \\
E1.state\_history\_loss & $-0.01125$ & $0.008723$ & $9.8129\times10^{-5}$ & $0.0513$ \\
E2.absence\_as\_evidence\_inference & $-0.00459$ & $0.013469$ & $6.1783\times10^{-5}$ & $0.0323$ \\
E1.command\_semantics\_misbelief & $-0.00045$ & $0.025040$ & $1.1284\times10^{-5}$ & $0.0059$ \\
\bottomrule
\end{tabular}
\end{table}

\paragraphhd{Tracking and deployment.}
We track coefficients in GRPO (diag-SFT), GiGPO (diag-SFT), the frozen-price variant and the full method, with GRPO + step penalty as a fifth reference in Figure~\ref{fig:beta_ema_grid}.
The tracker refits every ten updates from $30$ to $160$, yielding $14$ snapshots per run, and applies the sign-gated moving average in Eq.~\eqref{eq:signgate}.
For the two baselines, the tracker only monitors coefficients; the frozen-price variant records new fits but continues to charge the initial prices without the cap.
The full method deploys its updated prices, while the fifth reference adds explicit step penalties to GRPO (diag-SFT).

\paragraphhd{Class-specific changes.}
Figure~\ref{fig:beta_dynamics} shows three illustrative classes selected after inspecting the runs.
Search-state amnesia treats previously checked locations as unexplored; remote-interaction misbelief attempts to inspect or manipulate an object before reaching it; E7 residual groups unnamed errors within one broad family.
The full method's amnesia coefficient becomes more negative, from $-0.295$ at update $30$ to $-5.976$ at $160$; the other two end at $-1.034$ and $-1.571$.
Under GiGPO (diag-SFT), it declines until update $120$ and then partially recovers, showing that fitted associations also change without deploying prices.

Figure~\ref{fig:beta_ema_grid} reports all $20$ classes, including five that remain unchanged across runs.
Flat non-zero curves can reflect initial values retained by the sign gate, while large negative values can reflect fits approaching the coefficient bound; neither pattern establishes calibration quality.

\begin{figure}[!htbp]
\centering
\includegraphics[width=\textwidth]{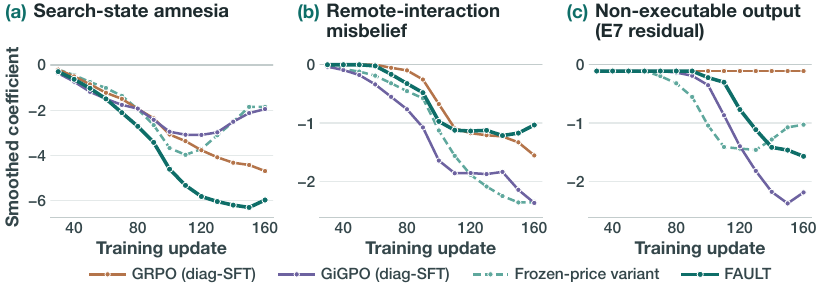}
\caption{\textbf{Coefficient dynamics for three illustrative error classes.} Sign-gated moving averages at $14$ refits per run; panels use different vertical scales. GRPO and GiGPO (diag-SFT) track coefficients without applying them, the frozen-price variant retains its initial prices, and the full method deploys updated prices. E7 residual collects unnamed errors within the non-executable-output family.}
\label{fig:beta_dynamics}
\end{figure}

\paragraphhd{Revising the initial price allocation.}
Search-state amnesia has a zero cold-start price because its initial coefficient is positive ($+0.18$), yet it is the most frequent pricing class in every tracker window of every run and receives the most negative coefficient at the first online refit.
At update $30$, it occurs in $70\%$ of the full method's tracker-window trajectories.
More generally, the rank agreement between initial prices and online weights declines across runs; for the frozen-price variant, it drops from $0.63$ at update $30$ to $-0.07$ at $160$ (Figure~\ref{fig:beta_price_dynamics}(a)).
The comparison also reflects different price mappings: cold-start prices rescale coefficients by feature standard deviations, whereas online prices use unit scaling (Appendix~\ref{app:pricing}).
The rankings thus describe changes in relative weights, not accuracy against known error costs.

\begin{figure}[H]
\centering
\includegraphics[width=\textwidth]{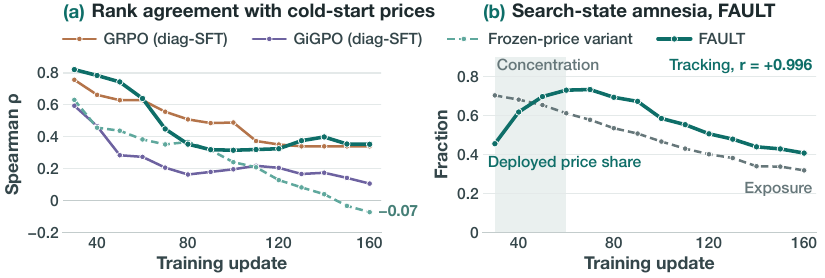}
\caption{\textbf{Changes in price rankings and relative price allocation.} (a) Spearman correlation between cold-start prices and each run's online weights. Only the full method deploys the updated weights; the other curves are monitoring estimates. (b) Search-state amnesia under the full method: its share of total deployed class weight (solid) and its occurrence rate among tracker-window trajectories (dashed). Shading marks updates $30$--$60$. The two series have Pearson correlation $+0.996$ over updates $70$--$160$; the class's own normalized weight remains $1$. Each curve represents $14$ refits from one run.}
\label{fig:beta_price_dynamics}
\end{figure}

\begin{figure}[p]
\centering
\includegraphics[width=\textwidth,height=0.86\textheight,keepaspectratio]{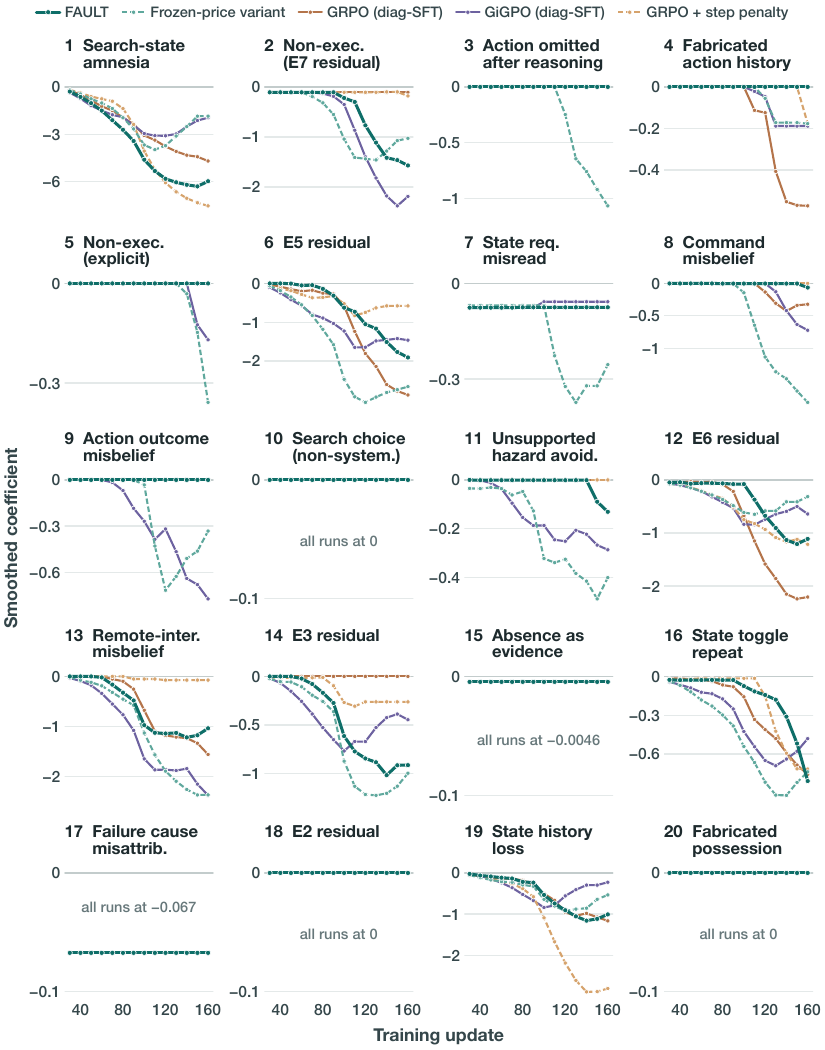}
\caption{\textbf{Coefficient dynamics for all 20 pricing classes.} Sign-gated moving averages at $14$ refits for five runs, with a separate vertical scale per panel. Coincident curves may be hidden by the full-method curve drawn last; several runs remain at zero in panels 3, 5 and 9. Residual classes collect errors not assigned to a named L2 class within an L1 family. Curves show estimator coefficients, not deployed prices.}
\label{fig:beta_ema_grid}
\end{figure}

Amnesia's share of total deployed class weight rises from $0.46$ at update $30$ to $0.73$ at $70$, then declines alongside its occurrence rate (Pearson $+0.996$ over updates $70$--$160$; Figure~\ref{fig:beta_price_dynamics}(b)).
Its own normalized weight remains $1$ throughout, so the falling share reflects increased weight on other classes, not a reduction in its own price.
From update $30$ to $160$, its occurrence rate falls by $55\%$ under the full method, compared with $34\%$ under GRPO (diag-SFT).
By update $160$, classes priced at zero initially account for $68\%$ of the full method's deployed class-weight mass.
These changes explain what online pricing adds to the initial allocation; their effect on policy performance is not isolated by these curves or the ablation configurations (Section~\ref{app:ablation-details}).

\paragraphhd{Interpretation.}
Each curve represents one run, and adjacent refits reuse data from overlapping $50$-update windows.
The coefficients describe associations that change with the policy, diagnoses and available samples, rather than causal error costs.
Raw fits are provided with the code.

\FloatBarrier
\section{Case Studies}
\label{app:cases}
We present two hand-selected ALFWorld training rollouts of \method{}, supplemented by baseline credit examples.
These single-seed cases illustrate the mechanisms; population statistics are reported in Appendices~\ref{app:exp-results} and~\ref{app:dynamics}.
Step numbers follow the zero-based training logs and judge labels: step~$8$ here corresponds to $t=9$ in \S\ref{sec:problem}.
As in Appendix~\ref{app:localization}, we compare \method{}'s diagnostic allocation $\delta_t$ with baseline negative optimizer advantages $\max(-A_t,0)$; $\delta_t$ is not \method{}'s full optimizer advantage.
The blind judge is an independent LLM that sees only actions and feedback; its labels are model judgments, not independently verified ground truth.

\subsection{Credit in an All-Fail Group}
\label{app:case}
\paragraphhd{Selection and group.}
Among the $300$ failed \method{} trajectories in Figure~\ref{fig:localization}(a), $82$ have their maximum diagnostic allocation at the judge's decisive step.
From these matches, we select the most concentrated all-fail case without invalid actions or no-ops.
Its task is \emph{clean some lettuce and put it in sidetable} (update $70$).
All eight rollouts fail at the $30$-step horizon, giving zero outcome-relative advantage, while their diagnostic budgets span $0.93$--$6.13$ reward units (Figure~\ref{fig:case_study}(a)).
The shared invalid-action reward can still provide a separate signal in all-fail groups.

\paragraphhd{Diagnostic allocation.}
In the selected trajectory ($P=3.18$), step~$8$ returns to the fridge after a five-location sweep, marking the first revisit of a searched location; the agent subsequently cycles through searched locations and never finds the lettuce (Table~\ref{tab:case_walkthrough}).
Approximately $82\%$ of the budget falls on step~$8$ and $18\%$ on the cabinet revisit at step~$14$ (Figure~\ref{fig:case_study}(b)).
The blind judge also selects step~$8$; its verdict is reproduced in the table.
Because this trajectory has no invalid actions or no-ops, its localization is not explained by mechanical flags.
The group remains at $0/8$ success: the case demonstrates budget contrast and agreement with the judge within an unsuccessful group.

\paragraphhd{Baseline comparison.}
Figure~\ref{fig:case_transcripts} includes independently sampled baseline rollouts from different task instances; no paired same-instance capture is available.
In the GRPO (diag-SFT) trajectory, step~$13$ revisits the fridge opened at step~$4$, and the blind judge identifies step~$13$ as decisive.
Within its all-fail group, outcome-relative advantage is zero, but format shaping yields two per-step advantage values, with invalid steps $0.1$ below valid steps.
Negative advantage consists of three equal $0.087$ spikes on unparseable steps and is zero at the judge's step.
Rescaling these values cannot distinguish that step from other format-valid steps.
The bottom strip shows an all-fail GiGPO (diag-SFT) trajectory whose negative advantage also falls only on format-flagged steps; no transcript is available because the original run did not store rollout text.
These examples illustrate differences in the recorded credit assignments, without establishing a paired performance comparison.

\begin{figure}[!htbp]
\centering
\includegraphics[width=\textwidth]{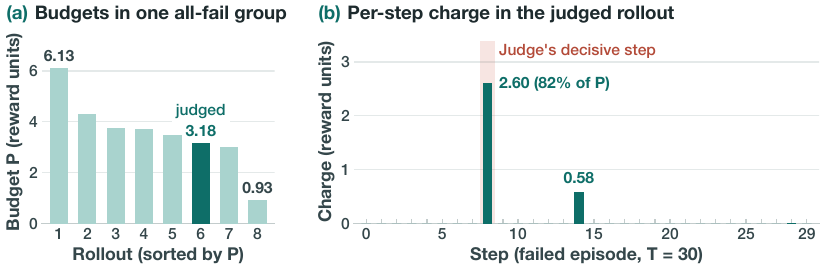}
\caption{\textbf{Diagnostic penalties within an all-fail group.} (a) Budgets for eight rollouts, all failing at $30$ steps. (b) Allocation for the selected trajectory: approximately $82\%$ at the judge's decisive step and $18\%$ at a later revisit (Table~\ref{tab:case_walkthrough}). The group contributes to Figure~\ref{fig:signal_over_training}; the case is selected for illustration.}
\label{fig:case_study}
\end{figure}

\begin{table}[!htbp]
\centering
\footnotesize
\caption{Walkthrough of the selected trajectory in Figure~\ref{fig:case_study}(b), abridged from the raw episode. Values of $\delta_t$ are rounded shares of the diagnostic budget $P{=}3.18$; the two largest allocations fall on revisits of searched locations.}
\label{tab:case_walkthrough}
\begin{tabular}{@{}c p{2.9cm} p{6.6cm} r@{}}
\toprule
Step & Action & Environment feedback (abridged) & $\delta_t$ \\
\midrule
0--1 & go to / open fridge 1 & fridge opened: a cup and a potato, no lettuce & $0$ \\
2--7 & first sweep & countertop 1; cabinets 2, 3 (opened, both empty); sinkbasin 1: four further locations, each visited for the first time & $0$ \\
\textbf{8} & \textbf{go to fridge 1} & \textbf{first revisit of a searched location; contents unchanged since step 1} & \textbf{2.60 (82\%)} \\
9--13 & search starts to cycle & countertop 1 and cabinet 2 revisited; cabinet 4 and diningtable 1 new & $0$ \\
\textbf{14} & \textbf{go to cabinet 3} & \textbf{revisit of a cabinet known to be empty since step 6} & \textbf{0.58 (18\%)} \\
15--28 & cycling continues & nine more steps on already-searched locations, four on new ones (diningtable 2, cabinet 1, sidetable 1), one \texttt{inventory} check (``not carrying anything'') & $0$ \\
29 & go to cabinet 1 & $30$-step limit reached; the episode fails; lettuce never found & $0$ \\
\midrule
\multicolumn{4}{@{}p{12.4cm}@{}}{\emph{Blind judge} (independent LLM; saw only this action--feedback text): decisive step $=8$; verdict: ``Never found lettuce; repeatedly revisited fridge, countertop and cabinets already seen instead of new spots.''} \\
\bottomrule
\end{tabular}
\end{table}

\begin{figure}[!htbp]
\centering
\includegraphics[width=\textwidth,height=0.86\textheight,keepaspectratio]{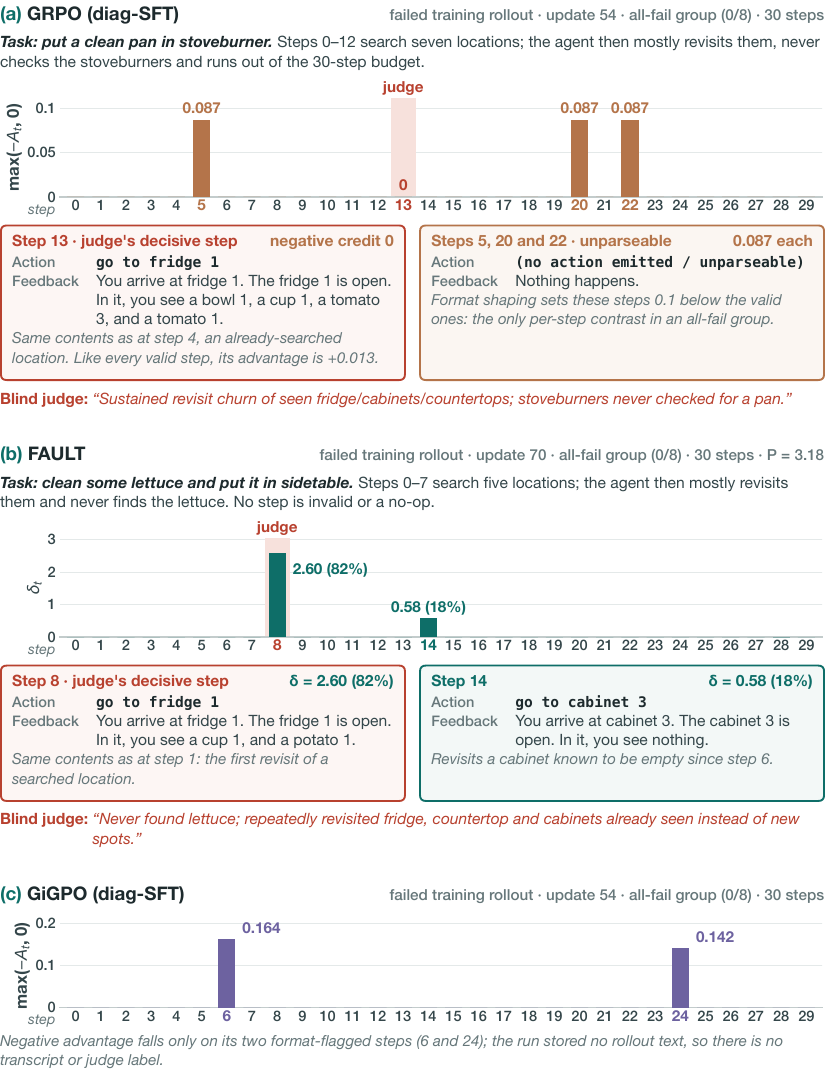}
\caption{\textbf{Transcripts and step penalties.} (a) GRPO (diag-SFT) and (b) \method{} failures from separate all-fail groups, each with its per-step credit: GRPO's negative optimizer advantage $\max(-A_t,0)$ and \method{}'s diagnostic allocation $\delta_t$. The blind judge selects steps $13$ and $8$, respectively (shaded); GRPO marks only three unparseable steps, while \method{} allocates approximately $82\%$ to the judged step. (c) Negative advantage for a GiGPO (diag-SFT) all-fail trajectory, marking only format violations; no transcript was stored. Action and feedback text is verbatim; italic notes are ours. Task instances differ.}
\label{fig:case_transcripts}
\end{figure}

\subsection{Recovery in a Successful Trajectory}
\label{app:case-transcripts}
\begin{figure}[!htbp]
\centering
\includegraphics[width=\textwidth,height=0.86\textheight,keepaspectratio]{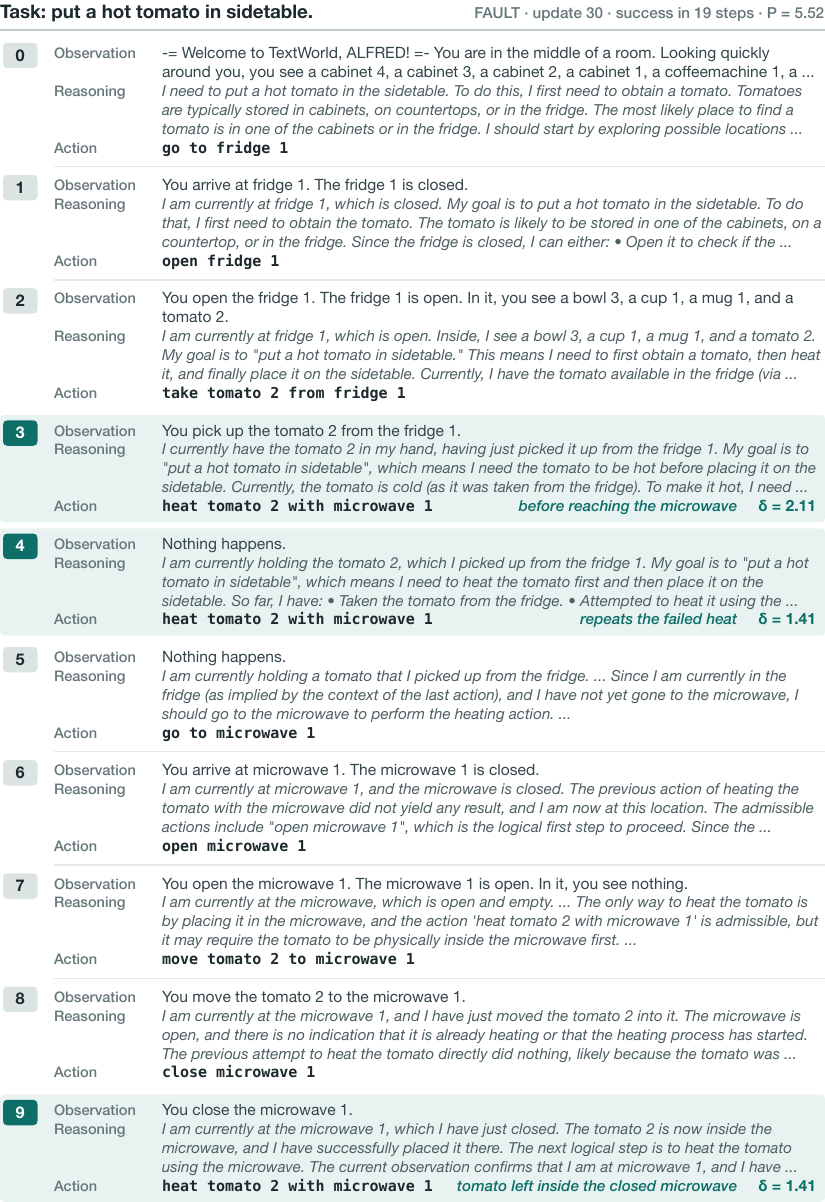}
\caption{\textbf{Recovery within a successful trajectory, steps 0--9.} The agent makes three ineffective heating attempts. Charge tags show the step allocation of budget $P=5.52$; the case continues in Figure~\ref{fig:case_alf_recovery_b}.}
\label{fig:case_alf_recovery}
\end{figure}
\begin{figure}[p]
\centering
\includegraphics[width=\textwidth,height=0.86\textheight,keepaspectratio]{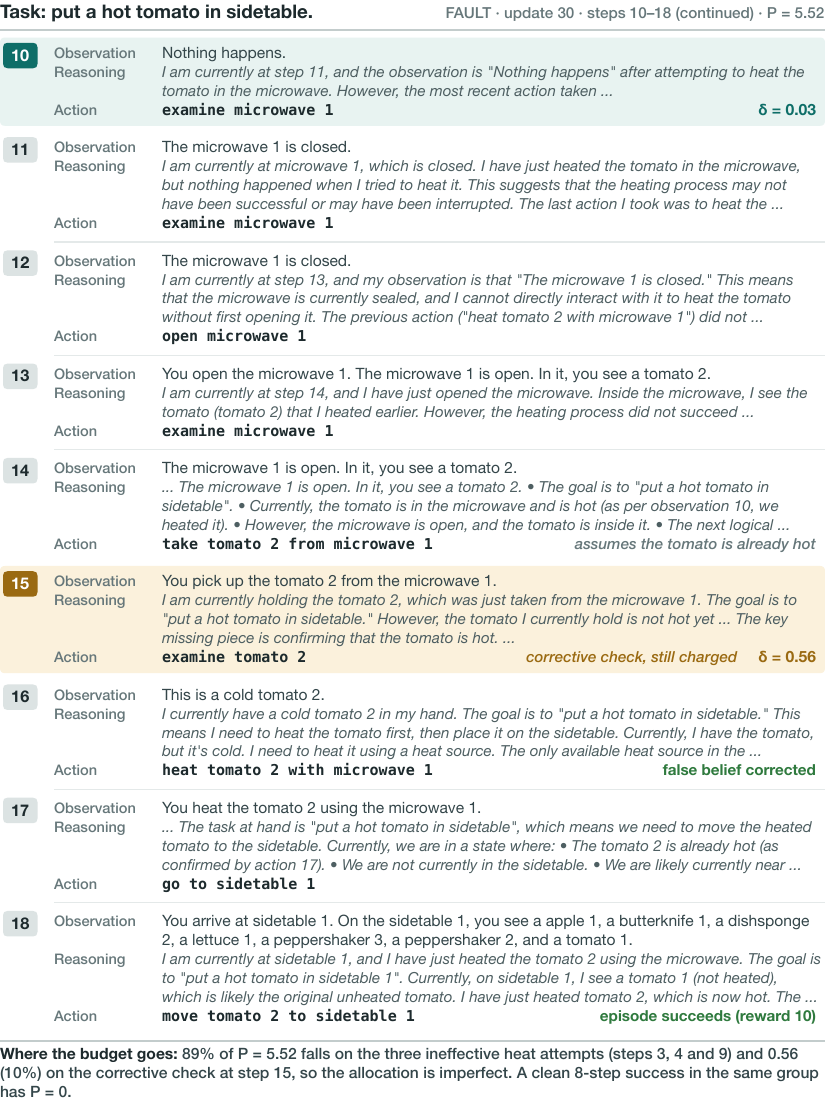}
\caption{\textbf{Recovery within a successful trajectory, steps 10--18.} Continued from Figure~\ref{fig:case_alf_recovery}. Examination reveals the tomato is still cold, after which the agent corrects its belief and completes the task. Most of the budget targets ineffective heating attempts, although the corrective examination also receives a penalty.}
\label{fig:case_alf_recovery_b}
\end{figure}
\paragraphhd{Errors and recovery.}
The second case is a $19$-step success at update $30$ on the task \emph{put a hot tomato in sidetable} (Figures~\ref{fig:case_alf_recovery} and~\ref{fig:case_alf_recovery_b}).
Three premature heating attempts return ``Nothing happens.'': two occur before reaching the microwave and one after closing its door.
The agent later assumes the tomato is hot, but the examination at step~$15$ returns ``This is a cold tomato'' in the next observation.
It then revises its belief at step~$16$, heats the tomato successfully and completes delivery.
Of the diagnostic budget $P=5.52$, approximately $89\%$ falls on the three ineffective attempts, but $0.56$ also falls on the corrective examination, showing that localization is imperfect.
A clean eight-step success in the same group has $P=0$.
The contrast illustrates how successful outcomes can coexist with different process errors and penalties.

\paragraphhd{Reading the transcript.}
Each step shows the input observation, generated reasoning and action.
Observations and actions follow the stored transcript, with long passages shortened by ``...''; reasoning is excerpted model-generated text with chat-template artifacts removed, not a statement of environment facts.
Charge tags show the diagnostic allocation $\delta_t$ of Eq.~\eqref{eq:redistribution-full}, and short notes beside the actions are ours.

\FloatBarrier
\section{Extended Related Work}
\label{app:related}
This section expands Section~\ref{sec:related} along the same three lines.

\paragraphhd{Outcome-based agentic RL.}
Agentic RL treats an LLM as a policy that acts over many turns and learns from the rewards it observes~\citep{zhang2025landscape}.
Following the success of GRPO for reasoning models~\citep{shao2024deepseekmath,guo2025deepseekr1}, it has been applied with GRPO, PPO or their variants to search~\citep{jin2025searchr1}, tool use~\citep{qian2025toolrl,feng2025retool}, web navigation~\citep{wei2025webagentr1}, and games and embodied text environments~\citep{wang2025ragen,feng2025gigpo}.
GRPO needs no critic: it normalizes terminal rewards across the rollouts of a task and assigns each rollout's advantage to every step.
This is simple and scales well, but the advantage of a step reflects the whole trajectory rather than that step.

Two consequences follow for long-horizon tasks.
With binary rewards, a group whose rollouts all succeed or all fail has zero advantage for every rollout; DAPO filters such groups out~\citep{yu2025dapo}, while \citet{yang2026progpo} give all-fail groups a fallback advantage from first-visit observation coverage.
In our ALFWorld runs, these same-outcome groups make up between $59\%$ and $64\%$ of training groups (Section~\ref{sec:analysis-signal}).
Within a failed rollout, every step receives the same credit, so the decisive mistake is not singled out.
Agent-specific methods such as ARPO~\citep{dong2025arpo} and GiGPO~\citep{feng2025gigpo} change how rollouts are sampled or compared, but their credit is still computed from outcomes.

\paragraphhd{Numeric process signals.}
Process supervision assigns rewards to intermediate steps.
In mathematical reasoning, step rewards from human labels~\citep{lightman2024verify}, automatic rollout labels~\citep{wang2024mathshepherd} or progress estimates~\citep{setlur2024rewarding} improve on outcome-only rewards.
Agentic RL has adopted the idea: process reward models for agents learn step values from Monte Carlo rollout returns~\citep{choudhury2025agentprm} or temporal-difference estimates~\citep{xi2025agentprm}, and recent methods score steps with prefix-aware reward models or the agent's own hindsight~\citep{kim2026pair,ma2026repro,yao2026mise}.
These signals are dense and can separate rollouts that share an outcome.

Their difficulty is trust.
A learned step reward is itself a model that the policy can exploit; DeepSeek-R1 avoids neural process rewards partly for this reason~\citep{guo2025deepseekr1}, and further optimizing an agent PRM can lower the true success rate~\citep{choudhury2025agentprm}.
GiGPO avoids a learned model by comparing actions taken from the same environment state~\citep{feng2025gigpo}.
Its outcome-based credit, however, comes from discounted returns, so it vanishes in all-fail groups and, when present, spreads over the steps after a wrong turn (Section~\ref{sec:analysis-local}).
\method{} keeps the terminal reward as the only trusted signal: step penalties come from verified diagnoses, and their prices are a few coefficients fitted to outcomes.

\paragraphhd{Language analysis of trajectories.}
Language methods analyze completed trajectories to state what went wrong and where.
The analysis can guide another attempt through context or memory~\citep{shinn2023reflexion,zhao2024expel,hu2025echo,yao2024retroformer,ge2025samule}, earn a reward when a retry succeeds~\citep{bensal2025reflect}, produce corrected training trajectories~\citep{yuan2025agentr,zhang2025critiquegrpo,li2026critic}, or be distilled into the policy~\citep{yang2026opid,zheng2026grsd,bi2026reflectrl,wu2026seed}.
SEED is closest to our setting: one checkpoint acts, analyzes its rollouts into hindsight skills and learns from them by on-policy distillation, with strong gains on ALFWorld~\citep{shridhar2021alfworld} and WebShop~\citep{yao2022webshop}.
Such analysis carries the information that outcome credit lacks.

Three gaps keep it from solving the two credit-assignment problems above.
First, the analysis rarely sets RL credit: it serves as context, a training target or an object of reward, even when pooled across trajectories as in SAMULE~\citep{ge2025samule}.
GRSD~\citep{zheng2026grsd} does use pooled reflections to rescale each step's outcome advantage, but this advantage is still zero in same-outcome groups, so equal outcomes still receive equal outcome credit.
Second, error claims are not checked at the stated step; on TRAIL, the best evaluated model reaches only about $11\%$ joint accuracy in identifying and locating errors in agent traces~\citep{deshpande2025trail}, even though error taxonomies are available~\citep{cemri2025multiagent,zhu2025agentfail}.
Third, the weight of the analysis is set by fixed loss coefficients, mixing schedules or a single retry, not learned from outcomes.
\method{} closes these gaps with diagnoses restricted to a fixed taxonomy and checked by quotes (Appendix~\ref{app:diag}), prices fitted to outcomes within each task (Appendix~\ref{app:pricing}), and a bounded penalty budget redistributed over verified error steps.

\end{document}